\newcommand\BibTeX{{\rmfamily B\kern-.05em \textsc{i\kern-.025em b}\kern-.08em
T\kern-.1667em\lower.7ex\hbox{E}\kern-.125emX}}
\newcommand{\cmark}{\ding{51}}%
\newcommand{\xmark}{\ding{55}}%
\let\OldStatex\Statex
\renewcommand{\Statex}[1][0]{%
  \setlength\@tempdima{\algorithmicindent}%
  \OldStatex\hskip\dimexpr#1\@tempdima\relax}
\algnewcommand\algorithmicinput{\textbf{Input:}}
\algnewcommand\Input{\item[\algorithmicinput]}
\algnewcommand\algorithmicoutput{\textbf{Output:}}
\algnewcommand\Output{\item[\algorithmicoutput]}
\colorlet{shadecolor}{gray!10}
\newtheorem{problem}{Problem}
\newtheorem{theorem}{Theorem}
\newtheorem{proposition}[theorem]{Proposition}
\newtheorem{corollary}[theorem]{Corollary}
\NewDocumentCommand\bbm{}{ \begin{bmatrix} }
\NewDocumentCommand\ebm{}{ \end{bmatrix} }
\NewDocumentCommand\Vector{m}{ \boldsymbol{\mathbf{#1}} }
\NewDocumentCommand\Matrix{m}{ \boldsymbol{\mathbf{#1}} }
\NewDocumentCommand\Transpose{m}{ \left.{#1}\right.^\top }
\NewDocumentCommand\Trace{m}{ \mathrm{tr}\left(#1\right) }
\NewDocumentCommand\Determinant{m}{ \mathrm{det}\left(#1\right) }
\NewDocumentCommand\Norm{m}{\left\Vert#1\right\Vert }
\NewDocumentCommand\FrobeniusNorm{m}{\Norm{#1}_{\scriptscriptstyle \mathrm{F}}}
\NewDocumentCommand\Diag{m}{ \mathrm{Diag} \left( #1 \right) }
\NewDocumentCommand\Vectorize{m}{ \mathrm{vec}\left(#1 \right) }
\NewDocumentCommand\Range{}{ \mathcal{R} }
\NewDocumentCommand\Real{}{ \mathbb{R} }
\def \R{\mathbb{R}}
\NewDocumentCommand\Sym{m}{ \mathbb{S}^{#1} }
\NewDocumentCommand\PSD{m}{ \mathbb{S}^{#1}_+ }
\NewDocumentCommand\Indices{m}{[#1]}
\NewDocumentCommand\LieGroupSO{m}{ \mathrm{SO}(#1) }
\NewDocumentCommand\LieGroupO{m}{ \mathrm{O}(#1) }
\NewDocumentCommand\LieAlgebraSO{m}{ \mathfrak{so}(#1) }
\NewDocumentCommand\LieGroupSE{m}{ \mathrm{SE}(#1) }
\NewDocumentCommand\NormalDistribution{mm}{ \mathcal{N}\left(#1,#2\right) }
\NewDocumentCommand\Zero{}{ \Matrix{0} }
\NewDocumentCommand\Identity{}{ \Matrix{I} }
\NewDocumentCommand\CoordinateFrame{m}{ \underrightarrow{\Matrix{\mathcal{F}}}_{#1} }
\NewDocumentCommand\QuaternionSpace{}{ \mathbb{H} }
\NewDocumentCommand\Estimate{m}{\hat{#1}}
\NewDocumentCommand\Optimal{m}{{#1}^\star}
\NewDocumentCommand\UnderBar{m}{\underaccent{\bar}{#1}}
\NewDocumentCommand\Defined{}{\triangleq}
\NewDocumentCommand\Define{}{ \triangleq }
\NewDocumentCommand\Lagrangian{m}{ \mathcal{L}\left(#1\right) }
\DeclareMathOperator{\rank}{rank}
\NewDocumentCommand\Graph{}{ \vec{\mathcal{G}} }
\NewDocumentCommand\Vertices{}{ \mathcal{V} }
\NewDocumentCommand\Edges{}{ \vec{\mathcal{E}} }
\NewDocumentCommand\Leaving{m}{ \delta^{-}(#1) }
\NewDocumentCommand\Entering{m}{ \delta^{+}(#1) }
\newlength{\minuslength}
\NewDocumentCommand\Data{}{ \mathcal{D} }
\def \manifold{\mathcal{M}}
\def \manX{\mathcal{X}}
\def \inv{^{-1}}
\definecolor{paper-blue}{rgb}{0.11764705882352941, 0.5333333333333333, 0.8980392156862745}
\definecolor{paper-red}{rgb}{0.8470588235294118, 0.10588235294117647, 0.3764705882352941}
\newcommand{\edit}[1]{#1}
\newcommand{\editmath}[1]{#1}
\begin{document}

\runninghead{Wise et al.}

\title{A Certifiably Correct Algorithm for Generalized Robot-World and Hand-Eye Calibration}
\author{Emmett Wise\affilnum{1}, Pushyami Kaveti\affilnum{2}, Qilong Cheng\affilnum{1}, Wenhao Wang\affilnum{1}, \\ Hanumant Singh\affilnum{2}, Jonathan Kelly\affilnum{1}, David M. Rosen\affilnum{2}, and Matthew Giamou\affilnum{3}}
\affiliation{\affilnum{1}Space \& Terrestrial Autonomous Robotic Systems Laboratory, University of Toronto Institute for Aerospace Studies, Toronto, ON, Canada\\
\affilnum{2}Institute for Experiential Robotics, Northeastern University, Boston, MA, USA\\
\affilnum{3}Autonomous Robotics and Convex Optimization Laboratory, McMaster University, Hamilton, ON, Canada}

\corrauth{Matthew Giamou, 
McMaster University, 
1280 Main St W, Hamilton, ON 
L8S~4L8, Canada.
}

\email{giamoum@mcmaster.ca}


\begin{abstract}
Automatic extrinsic sensor calibration is a fundamental problem for multi-sensor platforms.
Reliable and general-purpose solutions should be computationally efficient, require few assumptions about the structure of the sensing environment, and demand little effort from human operators.
%
In this work, we introduce a fast and certifiably globally optimal algorithm for solving a generalized formulation of the \textit{robot-world and hand-eye calibration} (RWHEC) problem. 
The formulation of RWHEC presented is ``generalized" in that it supports the simultaneous estimation of multiple sensor and target poses, and permits the use of monocular cameras that, alone, are unable to measure the scale of their environments.
In addition to demonstrating our method's superior performance over existing solutions \edit{through extensive simulated and real experiments,} we derive novel identifiability criteria and establish \textit{a priori} guarantees of global optimality for problem instances with bounded measurement errors.
\edit{As part of our analysis, we propose a new constraint qualification for nonlinear programs with redundant constraints; this constraint qualification is of independent interest for establishing the exactness of SDP relaxations of QCQPs that have been tightened through the addition of redundant constraints.}
%
%
Finally, we provide a free and open-source implementation of our algorithms and experiments. 
\end{abstract}

\keywords{Hand-eye and robot-world calibration, parameter identification, convex optimization, certifiable estimation}

\maketitle

\section{Introduction}


Calibration is an essential but often painful process when working with common sensors for robot perception.
In particular, \emph{extrinsic} calibration refers to the problem of finding the spatial transformations between multiple sensors rigidly mounted to a fixed or mobile sensing platform.
Existing approaches vary in terms of the type and number of sensors involved, assumptions about the robot's motion or the geometry of the environment, and operator involvement or expertise.
Critically, extrinsic calibration errors can have catastrophic consequences for downstream perception tasks that rely on fusion of data from multiple sensors.
For example, an autonomous vehicle using multiple cameras for simultaneous localization and mapping (SLAM) must accurately fuse images from different cameras into a coherent model of the world. 

In this work, we focus on the very general and widespread \textit{robot-world and hand-eye calibration} (RWHEC) formulation of extrinsic calibration~\citep{zhuang1994simultaneous}. 
The RWHEC problem can be adapted to a wide variety of sensor configurations, and conveniently distills extrinsic calibration down to estimation of two rigid transformations represented as elements $\Matrix{X}$ and $\Matrix{Y}$ of the special Euclidean group $\LieGroupSE{3}$, that define the kinematic relationship depicted in \Cref{fig:HERW}.
The matrix equations 
\begin{equation} \label{eq:rwhec}
	\Matrix{A}_i\Matrix{X} = \Matrix{Y}\Matrix{B}_i, \ i = 1, \ldots, N,
\end{equation}
are formed from noisy measurements $\Matrix{A}_i, \Matrix{B}_i \in \LieGroupSE{3}$. 
RWHEC is named after its original application to a robot manipulator (``hand") holding a camera (``eye"), but it can be applied to any \edit{configuration of  sensor and target} forming the kinematic loop shown in \Cref{fig:HERW}.
\edit{We consider a calibration target to be any specialized structure or fiducial marker whose pose, shape, or appearance is known to a robot or its operator.
In this work, we use the term \emph{generalized RWHEC} to refer to a variant of RWHEC involving multiple (possibly monocular) sensors and/or targets.}

Existing calibration procedures are error prone, especially when used by operators without sufficient expertise. 
These procedures often require that the operator excite a sensor platform with particular motions, or carefully select initial parameters close enough to the true solution. 
Without awareness of these idiosyncrasies, the optimizer for a calibration procedure may fail to converge to a critical point, or return a locally optimal solution that is inferior to the global minimizer.
Often, the only indicator of inaccurate calibration parameters is the failure of downstream algorithms, which can place nearby people in danger and damage the robot or other infrastructure.
To avoid these potentially catastrophic perception failures, end-users without significant expertise need calibration algorithms that automatically \emph{certify} the global optimality of their solution.
To this end, we present the following major contributions:
\begin{enumerate}
	\item the first certifiably globally optimal solver for \edit{a multi-sensor extrinsic calibration problem};
	\item the first theoretical analysis of parameter identifiability for \edit{the generalized robot-world and hand-eye calibration problem}; and
	\item a free and open-source implementation of our method and experiments.\endnote{Our code is available at \url{https://github.com/utiasSTARS/certifiable-rwhe-calibration}.}
\end{enumerate}
We begin by surveying the extensive literature on robot-world and hand-eye calibration (RWHEC) in \Cref{sec:related_work}, followed by a summary of our mathematical notation in \Cref{sec:preliminaries}.
We develop a detailed description of our generalized formulation of RWHEC in \Cref{sec:general-rw-he}. 
\Cref{sec:certifiable} describes a convex relaxation of RWHEC that leads to a semidefinite program (SDP), while \Cref{sec:identifiability} presents identifiability criteria in the form of geometric constraints on sensor measurements and motion. 
\Cref{sec:global_optimality} proves that our SDP relaxations of identifiable problems with sufficiently low noise are guaranteed to be tight, enabling the extraction of the global minimum to the original nonconvex problem.
In \Cref{sec:HERW-sim} and \Cref{sec:HERW-rw}, we experimentally demonstrate the superior properties of our RWHEC method as compared with existing solvers on synthetic and real-world data, respectively.
Finally, \Cref{sec:conclusion} discusses remaining challenges and promising avenues for future work.

\begin{figure}[t]
	\centering
	\includegraphics[width=\columnwidth]{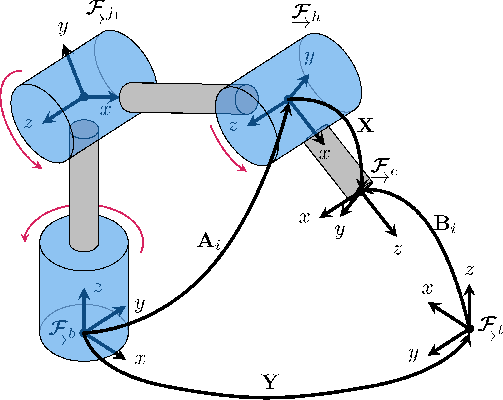}
	\caption{A diagram of the conventional application of RWHEC. In this application, the objective is to estimate $\Matrix{X} \in \LieGroupSE{3}$, the transformation from the wrist of a robotic manipulator to a camera, and $\Matrix{Y} \in \LieGroupSE{3}$, the transformation from the manipulator base to a known target. In this diagram, the base, joint 1, hand, camera, and target reference frames are labelled $\CoordinateFrame{b}$, $\CoordinateFrame{j_1}$, $\CoordinateFrame{h}$, $\CoordinateFrame{c}$, and $\CoordinateFrame{t}$, respectively. The red arrows indicate the axes of joint rotation. At time $i$, we use the forward kinematics of the manipulator to compute $\Matrix{A}_i$, the transformation from the manipulator base to the wrist, and measure $\Matrix{B}_i$, a noisy estimate of the transformation from the target to the camera.}
	\label{fig:HERW}
	\vspace{-0.4cm}
\end{figure}
\section{Related Work} \label{sec:related_work}

We begin our survey of related work by reviewing RWHEC methods in \Cref{subsec:rwhec_review}. 
This involves defining three common categories of algorithms: separate, joint, and probabilistic.\endnote{It is worth noting that only separate and joint methods form mutually exclusive categories.}
The characteristics of all methods discussed are summarized in \Cref{tab:HERW-summary}, which explicitly highlights the novelty of our solution.
We also survey relevant applications of certifiably correct convex optimization methods to estimation problems in robotics and computer vision in \Cref{subsec:certifiable_methods}.

\begin{table*}[t]
	\centering
	\caption{A non-exhaustive summary of relevant calibration methods for RWHEC algorithms. The columns track the following algorithmic properties: whether the method jointly estimates rotation and \edit{position}, whether a probabilistic problem formulation is employed, whether a post-hoc certificate of global optimality is produced by the algorithm, the algorithm's ability to model \edit{multi-sensor or -target problems}, support for scale-free \edit{(i.e., monocular)} sensors, and the 3D pose representation used.}
	\label{tab:HERW-summary}
	\begin{threeparttable}
		\begin{tabular}{lcccccc}
			\toprule
			Method & Joint & Probabilistic & Certifiable & Multiple & \edit{Monocular} & \edit{Representation} \\
			\midrule
			\cite{dornaika1998simultaneous} & \cmark & \cmark & \xmark & \xmark & \xmark & Multiple\\
			\cite{Strobl_2006_Optimal} & \cmark & \cmark & \xmark & \xmark & \xmark & $\LieGroupSE{3}$\\
			\cite{Li_2010_Simultaneous} & \xmark & \xmark & \xmark & \xmark & \xmark & DQ\\
			\cite{shah_solving_2013} & \xmark & \xmark & \xmark & \xmark & \xmark & $\LieGroupSE{3}$ \\
			\cite{heller2014handeye} & \cmark & \xmark & \cmark & \xmark & \xmark & Multiple\\
			\cite{Tabb_2017_HERW} & \cmark & \xmark & \xmark & \xmark & \xmark & Multiple\\
			\edit{\mbox{\cite{wise_certifiably_2020}}} & \edit{\cmark} & \edit{\xmark} & \edit{\cmark} & \edit{\xmark} & \edit{\cmark} & \edit{$\LieGroupSE{3}$} \\
			\cite{wang2022accurate} & \xmark & \xmark & \xmark & $\Matrix{X}$s or $\Matrix{Y}$s & \xmark & $\LieGroupSE{3}$\\
			\cite{Evangelista_2023_Graph-Based} & \cmark & \xmark & \xmark & $\Matrix{X}$s & \xmark & $\LieGroupSE{3}$\\
			\cite{horn_extrinsic_2023} & \cmark & \xmark & \cmark & $\Matrix{X}$s or $\Matrix{Y}$s & \xmark & DQ\\
			Ours & \cmark & \cmark & \cmark & $\Matrix{X}$s and $\Matrix{Y}$s & \cmark & $\LieGroupSE{3}$\\
			\bottomrule                           
		\end{tabular}
	\end{threeparttable}
\end{table*}

\subsection{Robot-World and Hand-Eye Calibration} \label{subsec:rwhec_review}
A subset of RWHEC methods solve for the rotation and translation components of $\Matrix{X}$ and $\Matrix{Y}$ separately. 
%
%
%
One simple method is to ignore nonlinear constraints on the rotation components of $\Matrix{X}$ and $\Matrix{Y}$ and solve the resulting overdetermined linear system in \Cref{eq:rwhec} with a least squares approach. 
After projecting the estimated rotation matrices onto the rotation manifold $\LieGroupSO{3}$, the translation components of $\Matrix{X}$ and $\Matrix{Y}$ are trivial to compute.
This approach is used by \cite{Li_2010_Simultaneous} and extended to multiple $\Matrix{X}$ or $\Matrix{Y}$ variables in the generalized RWHEC formulation of \cite{wang2022accurate}.
The identifiability requirements of RWHEC are explored by \cite{shah_solving_2013}, but to our knowledge have not been extended to the generalized case. 
Since these two-stage closed-form solvers ignore the coupling between the rotation and translation cost functions, they can corrupt translation estimates with error from noisy rotation measurements.
Furthermore, since they all solve a linear relaxation of the true nonlinear RWHEC problem, these methods are best described as \textit{approximation schemes}. 

To decrease this sensitivity of RWHEC to measurement noise, some methods jointly estimate the translation and rotation components of $\Matrix{X}$ and $\Matrix{Y}$.
\cite{Tabb_2017_HERW} introduce a variety of joint RWHEC cost functions and parameterizations that are less sensitive to measurement noise than two-stage closed-form RWHEC schemes.
\cite{horn_extrinsic_2023} use a dual quaternion (DQ) representation for poses to cast the generalized RWHEC problem as a quadratically constrained quadratic program (QCQP) and use a methodology similar to ours to solve a convex relaxation of their problem formulation.
However, we demonstrate in \Cref{sec:HERW-sim} that the unit DQ representation leads to poorer performance than the homogeneous transformation matrices we use.
This is most likely because the set of unit quaternions $\QuaternionSpace$ is a \emph{double cover} of $\LieGroupSO{3}$ in that the quadratic function $f: \QuaternionSpace \rightarrow \LieGroupSO{3}$ which converts quaternions to rotation matrices maps $\Vector{q} \in \QuaternionSpace$ to the same rotation matrix as $-\Vector{q}$.
As a result, for each measurement $\Matrix{A}_i$ or $\Matrix{B}_i$ appearing in \Cref{eq:rwhec}, there are two distinct dual quaternions that represent the same element of $\LieGroupSE{3}$. 
Therefore, there are $2^{2N}$ DQ-based formulations of any RWHEC problem with $N$ pairs of measurements, and their solutions may vary significantly in quality.
Most existing work, including \cite{horn_extrinsic_2023}, ignores this unfortunate consequence of representing rotations as unit quaternions and does not describe a principled or heuristic approach to lifting elements of $\LieGroupSO{3}$ to one of two valid unit quaternions.

\cite{Evangelista_2023_Graph-Based} use a local nonlinear method to solve a variant of generalized RWHEC with an objective function formed from camera reprojection error. 
This is in contrast to our approach and the classical RWHEC framework, which abstracts out  direct sensor measurements to work with poses.
One advantage of using reprojection error is that it additionally allows the estimation of intrinsic camera parameters, which is outside the scope of this work.

Although joint estimation methods are more robust to measurement noise than two-stage closed-form solvers, the methods surveyed thus far do not use a rigorous probabilistic error model.
As a result, these methods cannot model the effect of measurements with varying accuracy.
Probabilistic RWHEC algorithms are introduced by \cite{dornaika1998simultaneous} and \cite{Strobl_2006_Optimal}.
\cite{dornaika1998simultaneous} solve a nonlinear probabilistic formulation of RWHEC, but do not use an on-manifold method like the local solver presented in this work.
\cite{Strobl_2006_Optimal} treat the RWHEC problem as an iteratively re-weighted nonlinear optimization problem, where the translation and rotation errors are corrupted by zero-mean Gaussian noise. 
While both methods account for the probabilistic nature of the problem, neither can provide a certificate of optimality.
The probabilistic framework of \cite{ha2023probabilistic} considers anisotropic noise models for measurements and estimates the uncertainty of a solution obtained by a local iterative method. 
Additionally, uncertainty-aware methods leveraging the Gauss-Helmert model have been applied to the hand-eye calibration problem with unknown scale~\citep{ulrich2024uncertaintyaware, colakovic-benceric2025multiscale}, but this approach has not yet been applied to RWHEC. 

The origins of our certifiably optimal approach to calibration lie in the formulation of RWHEC and hand-eye calibration as global polynomial optimization problems by \cite{heller2014handeye}. 
They solve semidefinite programming (SDP) relaxations of their polynomial programs and explore multiple representations of $\LieGroupSE{3}$, but they do not use a probabilistic framework and only solve a standard variant of RWHEC (see \Cref{tab:HERW-summary} for details).

\subsection{Certifiably Correct Estimation} \label{subsec:certifiable_methods}
Convex SDP relaxations of \edit{QCQPs} are a powerful tool for finding globally optimal solutions to geometric estimation problems in robotics and computer vision~\citep{carlone2015lagrangian, rosen2021advances, cifuentes2022local}.
In addition to the pioneering work of \cite{heller2014handeye}, our approach is heavily influenced by the SE-Sync algorithm introduced in \cite{rosen2019sesync}. 
SE-Sync was the first efficient and certifiably optimal algorithm for simultaneous localization and mapping (SLAM), which, like generalized RWHEC, is formulated over many unknown pose variables. 
\cite{rosen2019sesync} reveal and exploit the smooth manifold structure of an SDP relaxation of a QCQP formulation of pose-graph SLAM with an MLE objective function.
This approach spawned a host of solutions to spatial perception problems including point cloud registration~\citep{yang2021teaser}, multiple variants of localization and mapping~\citep{holmes2023efficient, fan2020cplslam, tian2021distributed, papalia2024certifiably, dumbgen2023safe,  yu2024simsync}, hand-eye calibration~\citep{giamou2018certifiably, wise_certifiably_2020, wodtko_globally_2021}, camera pose estimation~\citep{garcia-salguero2021certifiable, zhao2020efficient}, as well as research into tools and optimization methods for user-specified problems~\citep{dumbgen2023globally, rosen2021scalable, yang2020one}.	

An alternative approach for obtaining global optima for noisy geometric estimation problems is outlined by \cite{wu2022quadratic}.
In contrast to using convex SDP relaxations, \cite{wu2022quadratic} use a Gr{\"o}bner basis method to solve polynomial optimization problems over a single pose variable.
This symbolic method was also extended to problems with unknown scale, including hand-eye calibration with a monocular camera~\citep{xue2025qpep}. 
While promising, this approach has not been demonstrated to scale efficiently to problems with many unknown pose variables. 
Similarly, a branch-and-bound approach is applied to find global optima of hand-eye calibration problems in \cite{heller2012branch} and \cite{heller2016globally}, but these methods do not scale efficiently to problems with many sensors.

In this work, we extend the multi-frame \edit{(i.e., supporting multiple sensors and/or targets)} RWHEC formulation introduced by \cite{wang2022accurate} to the scale-free case involving monocular sensors, and we form a joint objective function within a maximum likelihood estimation (MLE) framework. 
This leads to an estimation problem over a graph similar in nature to SE-Sync~\citep{rosen2019sesync}, \edit{but that cannot be expressed in the algebraic (block-matrix) form necessary to employ the fast low-dimensional Riemannian optimization algorithms used in this prior work.}\endnote{\edit{Specifically, the low-dimensional Riemannian optimization strategy used in SE-Sync \mbox{\citep{rosen2019sesync}} applies to objectives of the form $F(X) = \Trace{CXX^\top}$, where $C \in \Sym{n}$ and $X \in \R^{n \times d}$ is a block column matrix whose block elements consist of either $d \times d$ rotations $R_i \in \LieGroupSO{d}$ or $d$-dimensional translations $t_j \in \R^d$.  It is easy to see that the objective of the generalized RWHEC problem (\mbox{\Cref{prob:HERW-inhomogeneous}}) cannot be written in this form: note that $F(X)$ always possesses a continuous global \emph{gauge symmetry} (it is invariant under right-multiplication $X \mapsto XQ$ by elements $Q \in \LieGroupO{d}$ of the orthogonal group), while the objective of \mbox{\Cref{prob:HERW-inhomogeneous}} has a \emph{unique} minimizer if the underlying calibration problem is well-posed (cf. e.g. \mbox{\Cref{cor:identifiable_poses}}).}}
\edit{Alternatively, the algorithm presented in this work can be viewed as an extension of the method in \mbox{\cite{wise_certifiably_2020}} to a probabilistic and multi-sensor formulation of RWHEC.}
%
Finally, we prove fundamental identifiability and global optimality theorems for the multi-frame RWHEC problem first introduced by \cite{wang2022accurate} and extended by \Cref{subsec:HERW-bipartite}.

\section{Notation} \label{sec:preliminaries}

In this paper, lowercase Latin and Greek characters represent scalar variables.
We reserve lowercase and uppercase boldface characters for vectors and matrices, respectively.
For an integer $N >0$, $\Indices{N}$ denotes the index set $\{1, \ldots, N\}$.
The space of $n\times n$ symmetric and symmetric positive semidefinite (PSD) matrices are written as $\Sym{n}$ and $\PSD{n}$, respectively, and we also use $\Matrix{A} \succeq \Matrix{B}$ ($\Matrix{A} \succ \Matrix{B}$) to indicate that $\Matrix{A} - \Matrix{B}$ is PSD (positive definite).
For a general matrix $\Matrix{A}$, we use $\Matrix{A}^\dagger$ to indicate the Moore-Penrose pseudoinverse.
The (right) nullspace or kernel of $\Matrix{A}$ is written as $\ker(\Matrix{A})$, and its range is written as $\Range(\Matrix{A})$.

For a directed graph $\Graph = (\Vertices, \Edges)$, we write each directed edge as $e = (i, j)$ and say that $e$ \emph{leaves} vertex $i \in \Vertices$ and \emph{enters} vertex $j \in \Vertices$.
Additionally, $\Leaving{v}$ denotes the set of incident edges leaving vertex $v \in \Vertices$, and $\Entering{v}$ denotes the set of incident edges entering $v$.

A right-handed reference frame is written as $\CoordinateFrame{}$~\citep{barfoot2024state}.
The translation from $\CoordinateFrame{a}$ to $\CoordinateFrame{b}$ described in $\CoordinateFrame{a}$ is written as $\Vector{p}_a^{ba} \in \Real^3$.
The matrix $\Matrix{R}_{ab} \in \LieGroupSO{3}$ represents the rotation taking coordinates expressed in $\CoordinateFrame{b}$ to $\CoordinateFrame{a}$.
Likewise, the rigid special Euclidean transformation from $\CoordinateFrame{b}$ to $\CoordinateFrame{a}$ is
\begin{equation}
\Matrix{T}_{ab} = \bbm \Matrix{R}_{ab} & \Vector{p}_a^{ba} \\ \Vector{0}^{\top} & 1 \ebm \in \LieGroupSE{3}.
\end{equation}

The skew-symmetric operator $(\cdot)^\wedge$ acts on the vector $\Vector{p} \in \Real^3$ such that 
\begin{equation} \label{eq:wedge}
\Vector{p}^\wedge = \bbm 0 & -p_3 & p_2 \\
						p_3 & 0 & -p_1 \\
						-p_2 & p_1 & 0 \ebm,
\end{equation}
and the operator $(\cdot)^\vee$ denotes its inverse.
The standard Kronecker product of $\Matrix{A}$ and $\Matrix{B}$ is written as $\otimes$, and the Kronecker sum $\oplus$ is
\begin{equation}
\Matrix{A} \oplus \Matrix{B} = \Matrix{A} \otimes \Identity + \Identity \otimes \Matrix{B}. 
\end{equation}
The function $\Diag{\Matrix{A}_1, \dots, \Matrix{A}_N}$ creates a block-diagonal matrix with its ordered matrix arguments on the diagonal.
A vector $\Vector{x} \sim \NormalDistribution{\Vector{\mu}}{\Matrix{\Sigma}}$ is Gaussian distributed with a mean of $\Vector{\mu} \in \Real^n$ and covariance of $\Matrix{\Sigma} \in \PSD{n}$.
Finally, $\mathrm{Lang}(\Matrix{M}, \kappa)$ denotes a Langevin distribution over $\LieGroupSO{3}$ with mode $\Matrix{M} \in \LieGroupSO{3}$ and concentration $\kappa \geq 0$.

\section{Generalized Robot-World and Hand-Eye Calibration} \label{sec:general-rw-he}
In \Cref{subsec:constraints}, we review the geometric constraints of the robot-world and hand-eye calibration problem.
In \Cref{subsec:MLE}, we formulate RWHEC as maximum likelihood estimation.
In \Cref{subsec:monocular-QCQP}, we extend our problem formulation to support monocular cameras observing targets of unknown size. 
In \Cref{subsec:QCQP}, we convert our calibration problems into QCQPs in standard form.
In \Cref{subsec:HERW-bipartite}, we extend our probabilistic formulation to calibrate an arbitrary number of decision variables (i.e., multiple $\Matrix{X}$s and/or $\Matrix{Y}$s).
Finally, in \Cref{subsec:many-schur}, we present a \edit{marginalization strategy} for our formulation.

\subsection{Geometric Constraints} 
\label{subsec:constraints}

While the robot-world and hand-eye geometric constraints apply to a large set of calibration problems (e.g., multiple cameras on a mobile manipulator or fixed cameras tracking a known target~\citep{wang2022accurate}), for convenience we will begin, without loss of generality, with terminology appropriate for calibrating a camera mounted on the `hand' of a robotic manipulator as shown in \Cref{fig:HERW}.
Fix reference frames $\CoordinateFrame{b}, \CoordinateFrame{h}, \CoordinateFrame{t}, \CoordinateFrame{c}$ to the manipulator base, manipulator hand, target, and camera, respectively.
Using joint encoder data and kinematic parameters of the robot arm, we are able to estimate the transform $\Matrix{T}_{bh}$.
Additionally, a camera observing a target of known scale enables estimation of the camera pose relative to the target, $\Matrix{T}_{tc}$.
At each discrete point in time indexed by $i \in \Indices{N}$, these two measurements are related by
\begin{equation} \label{eqn:rw-constraints}
\Matrix{T}_{bh}(i)\Matrix{T}_{hc} = \Matrix{T}_{bt}\Matrix{T}_{tc}(i).
\end{equation} 
In the RWHEC problem, we wish to estimate $\Matrix{T}_{hc}$ and $\Matrix{T}_{bt}$, which we assume are static extrinsic transformation parameters. 
We introduce simpler symbols
\begin{equation} \label{eq:ax_yb_definitions}
\begin{aligned}
	\Matrix{A}_i &\Defined \Matrix{T}_{bh}(i) \\
	\Matrix{X} &\Defined \Matrix{T}_{hc} \\
	\Matrix{Y} &\Defined \Matrix{T}_{bt} \\
	\Matrix{B}_i &\Defined \Matrix{T}_{tc}(i), \\
\end{aligned}	
\end{equation}
for our matrices, resulting in the following familiar RWHEC constraint:
\begin{equation} \label{eqn:axyb}
\Matrix{A}_i\Matrix{X} = \Matrix{Y}\Matrix{B}_i.
\end{equation}
We may separate the rotational and translational components of \Cref{eqn:axyb} according to:
\begin{subequations} \label{eqn:constraints}
\begin{align}
\Matrix{R}_{\Matrix{A}_i}\Matrix{R}_{\Matrix{X}} &= \Matrix{R}_{\Matrix{Y}}\Matrix{R}_{\Matrix{B}_i}, \label{eqn:rotation-constraint} \\
\Matrix{R}_{\Matrix{A}_i} \Vector{t}_{\Matrix{X}} + \Vector{t}_{\Matrix{A}_i} &=  \Matrix{R}_{\Matrix{Y}} \Vector{t}_{\Matrix{B}_i} + \Vector{t}_{\Matrix{Y}}, \label{eqn:translation-constraint}
\end{align}
\end{subequations}
where $\Matrix{R}_{\Matrix{A}_i}, \Matrix{R}_{\Matrix{X}}, \Matrix{R}_{\Matrix{Y}}, \Matrix{R}_{\Matrix{B}_i}$ and $\Vector{t}_{\Matrix{A}_i}, \Vector{t}_{\Matrix{X}}, \Vector{t}_{\Matrix{Y}}, \Vector{t}_{\Matrix{B}_i}$ are the rotation and translation components of $\Matrix{A}_i,\Matrix{X},\Matrix{Y},\Matrix{B}_i$, respectively.

Note that alternative interpretations of the $\Matrix{A}\Matrix{X} = \Matrix{Y}\Matrix{B}$ equations are possible. 
\edit{For example, consider defining}
\begin{equation}
\begin{aligned}
\editmath{\Matrix{A}_i'} & \editmath{\Defined \Matrix{T}_{hb}(i) = \Matrix{T}^{-1}_{bh}(i),} \\
\editmath{\Matrix{B}_i'} & \editmath{\Defined \Matrix{T}_{ct}(i)\, = \Matrix{T}_{tc}^{-1}(i).}
\end{aligned}
\end{equation}
\edit{The kinematic chain in \mbox{\Cref{eqn:rw-constraints}} becomes}
\begin{equation} \label{eqn:rw-constraints-alternative}
	\editmath{\Matrix{A}_i'\Matrix{T}_{bt} = \Matrix{T}_{hc}\Matrix{B}_i',}
\end{equation}
\edit{which swaps the roles of $\Matrix{X}$ and $\Matrix{Y}$.}
\Cref{eqn:rw-constraints-alternative} corresponds to the setup used for our experiments in \Cref{sec:HERW-rw}, which is illustrated in \Cref{fig:HERW_diag_cam_tar}.
This flexibility affords practitioners some freedom in defining variables for their RWHEC problem, but care must be taken to conform to the noise models discussed in the next section.\endnote{The curious reader is directed to \cite{ha2023probabilistic}, which provides probabilistic interpretations of different \textit{configurations} of reference frames appearing in prior RWHEC methods.}
%


\subsection{Maximum Likelihood Estimation} \label{subsec:MLE}

Using \Cref{eqn:rotation-constraint,eqn:translation-constraint}, we can formulate an MLE problem for the unknown states $\Matrix{X}$ and $\Matrix{Y}$. 
We use $\tilde{\Vector{x}}$ to indicate a noisy measurement of the true value $\UnderBar{\Vector{x}}$ of a quantity of interest $\Vector{x}$.
Our probabilistic formulation of RWHEC requires the following assumptions: 
\begin{enumerate} \label{list:MLE-assumptions}
	\item $\tilde{\Matrix{R}}_{\Matrix{A}_i} = \UnderBar{\Matrix{R}}_{\Matrix{A}_i}$ and $\tilde{\Vector{t}}_{\Matrix{A}_i} = \UnderBar{\Vector{t}}_{\Matrix{A}_i}$ $\forall \; i=1,\dots,N$ (i.e., they are noiseless measurements);
	\item $\tilde{\Vector{t}}_{\Matrix{B}_i} = \UnderBar{\Vector{t}}_{\Matrix{B}_i} + \Vector{\epsilon}_i$, where $\Vector{\epsilon}_i \sim \NormalDistribution{0}{{\sigma_{i}}^2\Identity}$ and $\sigma_{i}$ is the standard deviation of the translation component of measurements  $\Matrix{B}_i \; \forall \; i=1,\dots,N$; and
	\item $\tilde{\Matrix{R}}_{\Matrix{B}_i} = \UnderBar{\Matrix{R}}_{\Matrix{B}_i}\Matrix{R}_{\epsilon_i}$, where $\Matrix{R}_{\epsilon_i} \sim \text{Lang}(\Identity, \kappa_i)$, where $\kappa_i$ is the concentration of the rotations of the measurement $\Matrix{B}_i \; \forall \; i=1,\dots,N$.
\end{enumerate}
\edit{Note that each individual rotation and translation measurement is distributed independently of all the others.}
These assumptions are consistent with convention and accurately approximate many practical scenarios~\cite[Section IV.C]{ha2023probabilistic}, including the classic ``eye-in-hand" setup shown in \Cref{fig:HERW}, where measurement $\Matrix{A}_i$ is computed with a calibrated manipulator's forward kinematics and $\Matrix{B}_i$ is a function of a noisy camera measurement of a target.
In order to maintain consistency with the majority of robot-world and hand-eye calibration literature, we slightly abuse our notation and use undecorated matrices $\Matrix{A}_i$ and $\Matrix{B}_i$: \edit{this is in contrast to \mbox{\Cref{subsec:constraints}}, which treated these values as exact or noiseless measurements}.
More precisely, we overload the idealized expressions in \Cref{eq:ax_yb_definitions} and henceforth write 
\begin{equation} \label{eq:A_B_overload}
\begin{aligned}
	\Matrix{A}_i & \Defined \UnderBar{\Matrix{T}}_{bh}(i) = 
	\bbm 
		\UnderBar{\Matrix{R}}_{\Matrix{A}_i} & \UnderBar{\Vector{t}}_{\Matrix{A}_i} \\
		\Vector{0}^\top & 1
	\ebm, \\
	\Matrix{B}_i & \Defined \tilde{\Matrix{T}}_{tc}(i)\, = 
	\bbm 
		\tilde{\Matrix{R}}_{\Matrix{B}_i} & \tilde{\Vector{t}}_{\Matrix{B}_i} \\
		\Vector{0}^\top & 1
	\ebm.
\end{aligned}
\end{equation}

With the introduction of noise into $\Matrix{B}_i$ in \Cref{eq:A_B_overload}, the constraints in \Cref{eqn:axyb} no longer hold exactly for the true values of $\Matrix{X}$ and $\Matrix{Y}$.
%
\edit{Indeed,} the best estimate for $\Matrix{X}$ and $\Matrix{Y}$ is the solution to the following maximum likelihood problem:
\begin{equation} \label{eq:mle}
\max_{\Matrix{X},\Matrix{Y} \in \LieGroupSE{3}} p(\{\Matrix{B}_1, \dots, \Matrix{B}_N\}|\Matrix{X}, \Matrix{Y}).
\end{equation}
The random variables $\tilde{\Matrix{R}}_{\Matrix{B}_i}$ and $\tilde{\Vector{t}}_{\Matrix{B}_i}$ are distributed independently of one another \edit{for all $i \in \Indices{N}$} by assumption. 
\edit{The rotations and translations in random variables $\Matrix{B}_i$ and $\Matrix{B}_j$ at different measurement indices $i \neq j$ are also independently distributed by assumption.} 
\edit{Thus,} taking the negative log-likelihood of the objective in \Cref{eq:mle} gives us the following equivalent minimization problem: 
\begin{equation}
\min_{\Matrix{X}, \Matrix{Y} \in \LieGroupSE{3}} -\sum_{i=1}^N(\log(p(\tilde{\Vector{t}}_{\Matrix{B}_i}|\Matrix{X}, \Matrix{Y})) + \log(p(\tilde{\Matrix{R}}_{\Matrix{B}_i}|\Matrix{X},\Matrix{Y}))).
\end{equation}
The conditional log-likelihood of $\tilde{\Vector{t}}_{\Matrix{B}_i}$ is \edit{attained by applying our Gaussian translation noise assumption to \mbox{\Cref{eqn:translation-constraint}}}: 
\begin{equation}
\begin{aligned}
\log \ &p(\tilde{\Vector{t}}_{\Matrix{B}_i}|\Matrix{X}, \Matrix{Y}) =  \\ 
&\ -\frac{1}{2\sigma_i^2} \left\|\Matrix{R}^\top_{\Matrix{Y}}(\UnderBar{\Matrix{R}}_{\Matrix{A}_i}\Vector{t}_{\Matrix{X}} \right.  \left. + \UnderBar{\Vector{t}}_{\Matrix{A}_i} - \Vector{t}_{\Matrix{Y}}) - \tilde{\Vector{t}}_{\Matrix{B}_i}\vphantom{\Transpose{\Matrix{R}_{\Matrix{Y}}}}\right\|_2^2. 
\end{aligned}
\end{equation}
Left-multiplying the Euclidean norm's argument with $\Matrix{R}_{\Matrix{Y}}$ makes the log-likelihood quadratic in decision variables $\Matrix{X}$ and $\Matrix{Y}$:
\begin{equation}
\begin{aligned}
\log p(\tilde{\Vector{t}}_{\Matrix{B}_i}|\Matrix{X}, &\Matrix{Y}) = \\
-\frac{1}{2\sigma_i^2} &\Norm{\UnderBar{\Matrix{R}}_{\Matrix{A}_i}\Vector{t}_{\Matrix{X}} + \UnderBar{\Vector{t}}_{\Matrix{A}_i} - \Vector{t}_{\Matrix{Y}} - \Matrix{R}_{\Matrix{Y}}\tilde{\Vector{t}}_{\Matrix{B}_i}}_2^2,
\end{aligned}
\end{equation}
which we recognize as a weighted error function for the translation constraint in \Cref{eqn:translation-constraint}.
Similarly, the log-likelihood of $p(\tilde{\Matrix{R}}_{\Matrix{B}_i}|\Matrix{X},\Matrix{Y})$ is \edit{attained by applying the Langevin noise assumption to \mbox{\Cref{eqn:rotation-constraint}}}~\citep{rosen2019sesync}:
\begin{equation}
\begin{aligned}
\log p(\tilde{\Matrix{R}}_{\Matrix{B}_i}|\Matrix{X}, &\Matrix{Y}) = \\ -c(\kappa_i) - &\frac{\kappa_i}{2}\FrobeniusNorm{\UnderBar{\Matrix{R}}_{\Matrix{A}_i}\Matrix{R}_{\Matrix{X}} - \Matrix{R}_{\Matrix{Y}}\tilde{\Matrix{R}}_{\Matrix{B}_i}}^2 + 3,
\end{aligned}
\end{equation}
which is a weighted error function for the rotation constraint in \Cref{eqn:rotation-constraint}.
Our MLE problem is the following QCQP:
\begin{problem}{Maximum Likelihood Estimation for RWHEC}\label{prob:HERW}
	\begin{equation} \label{eqn:rw-opt-problem}
	\min_{\substack{\Matrix{R}_{\Matrix{X}}, \Matrix{R}_{\Matrix{Y}} \in \LieGroupSO{3} \\ \Vector{t}_{\Matrix{X}},\Vector{t}_{\Matrix{Y}} \in \Real^3, s^2=1}} J_{\Vector{t}} + J_{\Matrix{R}},
	\end{equation}
\end{problem}	
\noindent where
\begin{subequations} \label{eqn:HERW-costs}
\begin{align}
J_{\Vector{t}} &\Defined \frac{1}{2}\sum_{i=1}^{N} \frac{1}{\sigma_i^2} \Norm{\UnderBar{\Matrix{R}}_{\Matrix{A}_i}\Vector{t}_{\Matrix{X}} + s\UnderBar{\Vector{t}}_{\Matrix{A}_i} - \Vector{t}_{\Matrix{Y}} - \Matrix{R}_{\Matrix{Y}}\tilde{\Vector{t}}_{\Matrix{B}_i}}_2^2 \label{eqn:HERW-translation-cost} \\
J_{\Matrix{R}} &\Defined \frac{1}{2}\sum_{i=1}^{N} \kappa_i \FrobeniusNorm{\UnderBar{\Matrix{R}}_{\Matrix{A}_i}\Matrix{R}_{\Matrix{X}} - \Matrix{R}_{\Matrix{Y}}\tilde{\Matrix{R}}_{\Matrix{B}_i}}^2. \label{eqn:rw-rotation-cost}
\end{align}
\end{subequations}
Note that to ensure that the terms of the objective are  quadratic or constant, we have homogenized \Cref{eqn:HERW-translation-cost} with the quadratically constrained variable $s^2 = 1$.
Homogenization simplifies the SDP relaxation employed in \Cref{sec:certifiable} and the analysis in \Cref{sec:global_optimality}.\endnote{\edit{Homogenizing a QCQP introduces a spurious solution corresponding to $s=-1$, but the optimal objective function value is not affected and we can always assume that $s=1$ without loss of generality~\mbox{\citep{cifuentes2022local}}.}} 
Additionally, we ensure that \Cref{prob:HERW} is a QCQP by using the following constraints for each $\LieGroupSO{3}$ variable~\citep{tron2015inclusion}:
\begin{subequations} \label{eq:SO3_quadratic_variety}
\begin{align}
\Matrix{R}\Matrix{R}^\top & = \Identity, \label{eq:row_orthogonality} \\
\Matrix{R}^\top\Matrix{R} & = \Identity, \label{eq:column_orthogonality} \\
\Matrix{R}_{\sigma_i(1)} \times \Matrix{R}_{\sigma_i(2)} & = \Matrix{R}_{\sigma_i(3)}, \; \forall i \in \{1, 2, 3\}, \label{eq:handedness}
\end{align}
\end{subequations}
where $\Matrix{R}_{i}$ denotes the $i$th column of $\Matrix{R}$, and $G \Define \lbrace \sigma_i \rbrace$ is the group of cyclic permutations of $\{1, 2, 3\}$.\endnote{The astute reader may recognize that Equations (\ref{eq:row_orthogonality}) and (\ref{eq:column_orthogonality}) are redundant, in that either on its own is sufficient for ensuring $\Matrix{R} \in \LieGroupO{3}$.
However, the inclusion of both in the QCQP we formulate is essential as a form of ``duality strengthening"~\citep{briales2017convex} for the SDP relaxation derived in this section. 
While redundant in their original quadratic form, these constraints manifest as independent linear constraints in the corresponding relaxed and lifted SDP of \Cref{sec:certifiable}, ultimately increasing the number of noisy problem instances that can be exactly solved by our approach~\citep{wise_certifiably_2020, dumbgen2023globally}.}
For generic $\Matrix{R} \in \LieGroupSO{d}$, the constraints would include $\det(\Matrix{R}) = 1$. 
However, the determinant is a polynomial of degree $d$, which for $d = 3$ is cubic. 
For $\LieGroupSO{3}$ in particular, we can replace $\det(\Matrix{R}) = 1$ \edit{with the quadratic \mbox{\Cref{eq:handedness}}, which ensures the columns of $\Matrix{R}$ obey the right-hand-rule}.


\subsection{Monocular Cameras} \label{subsec:monocular-QCQP}
If the scale of a target observed by a monocular camera is unknown, we can use a scaled pose sensor abstraction to model measurements. 
Repeating the MLE derivation from \Cref{subsec:MLE}, we replace the camera translation measurement model with
\begin{equation}
\tilde{\Vector{t}}_{\Matrix{B}_i} = \alpha\Matrix{R}^\top_{\Matrix{Y}}( \UnderBar{\Matrix{R}}_{\Matrix{A}_i}\Vector{t}_{\Matrix{X}} + \UnderBar{\Vector{t}}_{\Matrix{A}_i} - \Vector{t}_{\Matrix{Y}}) + \Vector{\epsilon}_i,
\end{equation}
where $\alpha \in \Real$ is the unknown scale, $\Vector{\epsilon}_i \sim \NormalDistribution{0}{\sigma_i^2\Identity}$, and $\sigma_i$ is the standard deviation of the unscaled translation estimate.
Consequently, the monocular MLE problem is the following QCQP:
\begin{problem}{Maximum Likelihood Estimation for Monocular RWHEC}\label{prob:monocular-HERW}
\begin{equation} \label{eqn:rw-scale-opt-problem}
\min_{\substack{\Matrix{R}_{\Matrix{X}}, \Matrix{R}_{\Matrix{Y}} \in \LieGroupSO{3} \\ \Vector{t}_{\Matrix{X}, \alpha},\Vector{t}_{\Matrix{Y}, \alpha} \in \Real^3, \; \alpha \in \Real}} J_{\Vector{t},\alpha} + J_{\Matrix{R}},
\end{equation}
where
\begin{equation}
J_{\Vector{t},\alpha} \Defined  \frac{1}{2}\sum_{i=1}^{N} \frac{1}{\sigma_i^2} \Norm{\UnderBar{\Matrix{R}}_{\Matrix{A}_i}\Vector{t}_{\Matrix{X},\alpha} + \alpha\UnderBar{\Vector{t}}_{\Matrix{A}_i} - \Vector{t}_{\Matrix{Y}, \alpha} - \Matrix{R}_{\Matrix{Y}}\tilde{\Vector{t}}_{\Matrix{B}_i}}_2^2.  \label{eqn:rw-scale-translation-cost}
\end{equation}
\end{problem}
The variables $\Vector{t}_{\Matrix{X}, \alpha} \Defined \alpha\Vector{t}_{\Matrix{X}}$ and $\Vector{t}_{\Matrix{Y}, \alpha} \Defined \alpha\Vector{t}_{\Matrix{Y}}$ have ``absorbed" the scale parameter $\alpha$ and must be divided by $\alpha$ after solving \Cref{prob:monocular-HERW}.
To maintain our assumption from \Cref{subsec:MLE} that measurements $\Matrix{A}_i$ are noiseless, the measurements $\Matrix{B}_i$ are assumed to come from the monocular camera \edit{as in \mbox{\Cref{fig:HERW}}.}
\edit{Note that while a negative scale factor does not make physical sense, we have not included a positivity constraint on $\alpha$ in \mbox{\Cref{prob:monocular-HERW}}.
This is because we are focused on cases where measurement noise is low or moderate, and previous work on calibration of a single monocular camera suggests that $\alpha$ is positive at the global optimum in this noise regime~\mbox{\citep{wise_certifiably_2020}}.
Therefore, we assume that $\alpha \geq 0$ holds but is not active (i.e., $\alpha > 0$) for global optima of \mbox{\Cref{prob:monocular-HERW}}.
This assumption greatly simplifies our analysis and is borne out by the synthetic experiments of \mbox{\Cref{sec:HERW-sim}} and the real experiments of \mbox{\Cref{sec:HERW-rw}}.
}
The analysis and solution method to follow can incorporate $\alpha \geq 0$, but analyzing the effect of its inclusion in extremely noisy problem instances is left for future work.

Our introduction of the unknown scale parameter $\alpha$ has produced a naturally homogeneous QCQP, \edit{obviating the need for} the homogenizing variable $s$ used in \Cref{prob:HERW}, which can now be interpreted as the special case of \Cref{prob:monocular-HERW} for known scale $\alpha = 1$. 
Therefore, in \Cref{subsec:QCQP} we will deal solely with the monocular case in \Cref{prob:monocular-HERW}.

\subsection{Quadratically Constrained Quadratic Programming} \label{subsec:QCQP}
Herein we convert \Cref{prob:monocular-HERW} to a standard QCQP form with a vectorized decision variable and constraints defined by real symmetric matrices.
The state vector is
\begin{equation} \label{eq:state_vectors}
\Vector{x} \Defined \bbm \Vector{t}^\top_{\Matrix{X},\alpha} & \Vector{t}^\top_{\Matrix{Y},\alpha} & \Vector{r}^\top_{\Matrix{X}} & \Vector{r}^\top_{\Matrix{Y}} & \alpha \ebm^\top,
\end{equation}
where $\Vector{r}_{\Matrix{X}}=\Vectorize{\Matrix{R}_{\Matrix{X}}}$ and $\Vector{r}_{\Matrix{Y}}=\Vectorize{\Matrix{R}_{\Matrix{Y}}}$.
%
\Cref{eq:state_vectors} allows us to write the rotation part of the objectives of monocular RWHEC problems as
\begin{equation} \label{eq:rotation_costs}
J_{\Matrix{R}} = \frac{1}{2}\sum_{i=1}^{N} \kappa_i \Vector{x}^\top \Matrix{M}^\top_{\Matrix{R}_i} \Matrix{M}_{\Matrix{R}_i} \Vector{x},
\end{equation}
where
\begin{equation} \label{eq:rotation_cost_matrix}
\Matrix{M}_{\Matrix{R}_i} \Defined \bbm \Matrix{0}_{9 \times 6} & \Identity \otimes \UnderBar{\Matrix{R}}_{\Matrix{A}_i} & - \editmath{\tilde{\Matrix{R}}_{\Matrix{B}_i}^\top} \otimes \Identity & \Matrix{0}_{9 \times 1}\ebm,
\end{equation}
\Cref{eq:rotation_cost_matrix} and many expressions to follow are obtained through a straightforward application of the column-major vectorization identity~\citep{henderson1981vecpermutation}  
\begin{equation} \label{eq:kronecker_vectorization}
	\Vectorize{\Matrix{AXB}} = (\Matrix{B}^\top \otimes \Matrix{A})\Vectorize{\Matrix{X}},
\end{equation}
where $\Matrix{A}$, $\Matrix{X}$, and $\Matrix{B}$ are any compatible matrices.
The translation components of the monocular RWHEC problem's objective can now be written as
\begin{equation} \label{eq:translation_costs}
J_{\Vector{t}} = \frac{1}{2}\sum_{i=1}^{N} \frac{1}{\sigma_i^2} \Transpose{\Vector{x}} \Matrix{M}^\top_{\Vector{t}_i} \Matrix{M}_{\Vector{t}_i} \Vector{x}, 
\end{equation}
where
\begin{equation} \label{eq:translation_cost_matrices}
\Matrix{M}_{\Vector{t}_i} \Defined \bbm \UnderBar{\Matrix{R}}_{\Matrix{A}_i} & -\Identity & \Zero_{3 \times 9} & -\tilde{\Vector{t}}^\top_{\Matrix{B}_i}\otimes \Identity & \UnderBar{\Vector{t}}_{\Matrix{A}_i} \ebm.
\end{equation}
The objective function of \Cref{prob:monocular-HERW} is now completely described by a quadratic forms with associated symmetric matrix
\begin{equation} \label{eqn:HERW-cost-single}
\Matrix{Q} \Define \frac{1}{2}\sum_{i=1}^N \kappa_i \Matrix{M}^\top_{\Matrix{R}_i} \Matrix{M}_{\Matrix{R}_i} + \frac{1}{2}\sum_{i=1}^N \frac{1}{\sigma_i^2}  \Matrix{M}^\top_{\Vector{t}_i} \Matrix{M}_{\Vector{t}_i}.
\end{equation}
Consequently, we can rewrite \Cref{prob:monocular-HERW} as
\begin{equation} \label{eqn:HERW-QCQP}
\begin{aligned} 
\min_{\Vector{x}} \; & \Vector{x}^\top \Matrix{Q} \Vector{x},\\
\mathrm{s.t.} \; & \Matrix{R}_{\Matrix{X}},\Matrix{R}_{\Matrix{Y}} \in \LieGroupSO{3}.
\end{aligned}
\end{equation}
%
%

\subsection{Generalizing RWHEC to Multiple Sensors and Targets}
\label{subsec:HERW-bipartite}


We can extend robot-world and hand-eye calibration to robots employing more than one sensor, and calibration procedures involving more than one target~\citep{wang2022accurate}.  
In this generalized form, we \emph{jointly} estimate a collection of $M$ hand-eye transformations $\Matrix{X}_1, \dots, \Matrix{X}_M$ and $P$ base-target transformations $\Matrix{Y}_1,\dots,\Matrix{Y}_P$.
As shown in \Cref{fig:bipartite-graph-example}, we can model the set of available measurements as a bipartite directed graph $\Graph = (\Vertices, \Edges)$, where 
\begin{equation}
\Vertices \Defined \Vertices_{\Matrix{X}} \cup \Vertices_{\Matrix{Y}} = \Indices{M+P}, 
\end{equation}
 and each edge $e = (j,k) \in \Edges \subseteq \Vertices_{\Matrix{X}} \times \Vertices_{\Matrix{Y}}$ represents a set of observations of target $k$ by camera $j$.
This multi-frame approach was introduced by \cite{wang2022accurate} for problems like ``multiple eye-in-hand" calibration where either $M=1$ or $P=1$. 

The notation we use to describe \edit{the} generalized RWHEC is inspired by the elegant graph-theoretic treatment of pose SLAM in \cite{rosen2019sesync}. 
One notable feature of our formulation is that all observations involving variables $\Matrix{X}_j$ and $\Matrix{Y}_k$ are associated with a \emph{single} directed edge $(j, k)$.
Therefore, each edge $e = (j, k) \in \Edges$ is labelled with a \emph{set} $\Data_e$ of all $N_{e} \geq 1$ noisy observations involving unknown variables $\Matrix{X}_j$ and $\Matrix{Y}_k$:
\begin{equation} \label{eq:problem_data}
	\Data_e \Defined \{(\Matrix{A}_{e, i},\Matrix{B}_{e, i}, \sigma_{e,i}, \kappa_{e, i}) \in \LieGroupSE{3}^2 \times \Real_{+}^2 \ \vert \ i\in \Indices{N_{e}}\}.
\end{equation} 
\edit{Our approach ensures that $\Graph$ is \emph{simple}, whereas an alternative formulation could use a directed \emph{multigraph} whose edges correspond to individual observations between $\Matrix{X}_j$ and $\Matrix{Y}_k$.}
The problem graph $\Graph$ summarizes the connection between $|\Edges|$ coupled RWHEC subproblems of the form in \Cref{prob:HERW} or \Cref{prob:monocular-HERW}.
This graphical structure enables us to describe a joint RWHEC problem involving all unknown variables $\Matrix{X}_j$ and $\Matrix{Y}_k$ indexed by $\Vertices$:

\begin{problem}{Generalized Monocular RWHEC} \label{prob:generalized-monocular-RWHEC}
\begin{align*} \label{eqn:rw-opt-problem}
	\min_{\substack{\Matrix{X}_j, \Matrix{Y}_k \in \LieGroupSE{3}, \\ \alpha \in \Real}} \frac{1}{2} \sum_{(j,k) \in \Edges} \sum_{i=1}^{N_{(j,k)}} J_{ijk}(\Matrix{X}_j, \Matrix{Y}_k, \alpha),
\end{align*}
where 
\begin{equation} \label{eq:objective_term}
\begin{aligned}
	J&_{ijk}(\Matrix{X}, \Matrix{Y}, \alpha) \Defined \\
	&\frac{1}{\sigma_{(j,k),i}^2} \Norm{\UnderBar{\Matrix{R}}_{\Matrix{A}_{(j,k),i}}\Vector{t}_{\Matrix{X}} + \alpha\UnderBar{\Vector{t}}_{\Matrix{A}_{(j,k),i}} - \Vector{t}_{\Matrix{Y}} - \Matrix{R}_{\Matrix{Y}}\tilde{\Vector{t}}_{\Matrix{B}_{(j,k),i}}}_2^2 \\
	&+ \kappa_{(j,k),i} \FrobeniusNorm{\UnderBar{\Matrix{R}}_{\Matrix{A}_{(j,k),i}}\Matrix{R}_{\Matrix{X}} - \Matrix{R}_{\Matrix{Y}}\tilde{\Matrix{R}}_{\Matrix{B}_{(j,k),i}}}^2.
\end{aligned}
\end{equation}
\end{problem}

It is worth noting that \Cref{prob:monocular-HERW} is a special case of \Cref{prob:generalized-monocular-RWHEC} for a graph $\Graph$ with only two vertices ($M=1=P$).
Additionally, we will refer to the special case of \Cref{prob:generalized-monocular-RWHEC} with known scale ($\alpha = 1$) as the \emph{standard} RWHEC problem:
\begin{problem}{Generalized Standard RWHEC} \label{prob:HERW-inhomogeneous}
\begin{align*} \label{eqn:rw-opt-problem}
	\min_{\Matrix{X}_j, \Matrix{Y}_k \in \LieGroupSE{3}} \frac{1}{2} \sum_{(j,k) \in \Edges} \sum_{i=1}^{N_{(j,k)}} J_{ijk}(\Matrix{X}_j, \Matrix{Y}_k, 1).
\end{align*}
\end{problem}
\noindent For the remainder of this section, we will concern ourselves solely with generalized monocular RWHEC formulation in \Cref{prob:generalized-monocular-RWHEC}.
Our experiments in Sections \ref{sec:HERW-sim} and \ref{sec:HERW-rw} will deal with both the standard and monocular cases, but our identifiability analysis in \Cref{sec:identifiability} only applies to \Cref{prob:HERW-inhomogeneous}.

While our problem formulation admits the use of heterogeneous measurement precisions, to ease notation in the sequel we will assume that all measurements associated with a single edge $e \in \Edges$ have common rotational and translational precisions:\endnote{The Julia implementation of our method used in the experiments of \Cref{sec:HERW-sim,sec:HERW-rw} supports the use of inhomogeneous measurement precisions within edge data $\Data_e$.}
\begin{equation}
		\sigma_{e,i} = \sigma_e, \ \kappa_{e,i} = \kappa_e \ \forall i \in \Indices{N_e}.
\end{equation}
For the monocular case, the parameterization of unknown scale used in \Cref{eqn:rw-scale-translation-cost} limits our multi-sensor extension to cases using either a single target, or targets with the same unknown scale $\alpha$ (e.g., the fiducial markers of equal size deployed in the experiments of \Cref{sec:HERW-rw}).
Our approach can be easily extended to support multiple unknown target scales by introducing target-specific scales $\alpha_i$ or using the conformal special orthogonal group $\mathrm{CSO}(3)$ employed by \cite{yu2024simsync}.\endnote{The applicability of these alternatives depends on whether measurements $\Matrix{A}_{e,i}$ or $\Matrix{B}_{e,i}$ correspond to camera-target measurements. In this work's MLE formulation, we assume the latter is true.}

\begin{figure}[t]
	\centering
	\includegraphics[width=0.4\columnwidth]{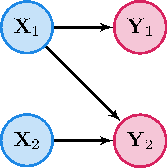}
	\caption{An example of the bipartite graph structure of a simple generalized RWHEC problem with multiple $\Matrix{X}$s and $\Matrix{Y}$s. Each edge corresponds to a set of measurements $\Data_e$ in \Cref{eq:problem_data} forming an instance of \Cref{prob:HERW} or \Cref{prob:monocular-HERW} involving $\Matrix{X}_j$ and $\Matrix{Y}_k$ for $e=(j,k)$.}
	\label{fig:bipartite-graph-example}
\end{figure}

We can simplify our notation by introducing several matrices structured by blocks mirroring the graph Laplacian~\citep[Section 4.1]{rosen2019sesync}.
Let $L(\Graph^\tau) \in \Sym{3(M+P)}$ be the symmetric $(3 \times 3)$-block-structured matrix of the form
\begin{equation} \label{eq:translation_laplacian}
L(\Graph^\tau)_{jk} = \begin{cases}
\sum_{e\in\Leaving{j}} \frac{1}{\sigma_{e}^2}N_{e}\Identity & j=k\\
-\frac{1}{\sigma_{(j,k)}^2}\sum_{i=1}^{N_{(j,k)}} \UnderBar{\Matrix{R}}^\top_{\Matrix{A}_{(j,k),i}} & (j,k) \in \Edges\\
-\frac{1}{\sigma_{(k,j)}^2}\sum_{i=1}^{N_{(k,j)}} \UnderBar{\Matrix{R}}_{\Matrix{A}_{(k,j),i}} & (k,j) \in \Edges\\
\Zero_{3\times3} & \text{ otherwise}.
\end{cases}
\end{equation}
Similarly, let $L(\Graph^\rho) \in \Sym{9(M+P)}$ be the symmetric $(9 \times 9)$-block-structured matrix such that
\begin{equation}
\begin{aligned}
L&(\Graph^\rho)_{jk} = \\ &\begin{cases}
\sum_{e\in\Leaving{j}} \kappa_e N_{e}\Identity \otimes \Identity & j=k\\
-\kappa_{(j,k)} \sum_{i=1}^{N_{(j,k)}} \tilde{\Matrix{R}}_{\Matrix{B}_{(j,k),i}} \otimes \UnderBar{\Matrix{R}}^\top_{\Matrix{A}_{(j,k),i}} & (j,k) \in \Edges\\
-\kappa_{(k,j)}\sum_{i=1}^{N_{(k,j)}} \tilde{\Matrix{R}}^\top_{\Matrix{B}_{(k,j),i}} \otimes \UnderBar{\Matrix{R}}_{\Matrix{A}_{(k,j),i}} & (k,j) \in \Edges\\
\Zero_{9\times9} & \text{ otherwise}.
\end{cases}
\end{aligned}
\end{equation}
%
%
Let $\Vector{v} \in \Real^{3(M + P) \times 1}$ be the $(3 \times 1)$-block-structured vector with 
\begin{equation}
\Vector{v}_{l} = \begin{cases}
\sum_{e\in\delta^{-}(l)}\frac{1}{\sigma_e^2}\sum_{i=1}^{N_{e}}\UnderBar{\Matrix{R}}^\top_{\Matrix{A}_{e,i}}\UnderBar{\Vector{t}}_{\Matrix{A}_{e,i}} & l \leq M\\
-\sum_{e\in\delta^{+}(l)}\frac{1}{\sigma_e^2}\sum_{i=1}^{N_{e}}\UnderBar{\Vector{t}}_{\Matrix{A}_{e,i}} & \text{ otherwise},
\end{cases}
\end{equation}
and
\begin{equation}
v = \sum_{e\in\Edges}\frac{1}{\sigma_e^2}\sum_{i=1}^{N_{e}}\UnderBar{\Vector{t}}^\top_{\Matrix{A}_{e,i}}\UnderBar{\Vector{t}}_{\Matrix{A}_{e,i}}.
\end{equation}
The matrix $\Matrix{\Sigma}$ is the symmetric $(9 \times 9)$-block-structured matrix determined by
\begin{equation}
\Matrix{\Sigma}_{jk} = \begin{cases}
\sum_{e\in\delta^+(j)}\frac{1}{\sigma_e^2}\sum_{i=1}^{N_{e}}(\tilde{\Vector{t}}_{\Matrix{B}_{e,i}}\tilde{\Vector{t}}^\top_{\Matrix{B}_{e,i}})\otimes \Identity & j=k \\
\Zero_{9\times 9} & \text{ otherwise}.
\end{cases}
\end{equation}
Let $\Matrix{U}\in \Real^{3(M+P) \times 9(M+P)}$ be the $(3 \times 9)$-block-structured matrix
\begin{equation}
\Matrix{U}_{jk} = \begin{cases}
\sum_{e\in\delta^+(j)}\frac{1}{\sigma_e^2}\sum_{i=1}^{N_{e}}\tilde{\Vector{t}}^\top_{\Matrix{B}_{e,i}}\otimes \Identity & j=k\\
-\frac{1}{\sigma_e^2}\sum_{i=1}^{N_{(j,k)}}\tilde{\Vector{t}}^\top_{\Matrix{B}_{(j,k),i}}\otimes \UnderBar{\Matrix{R}}^\top_{\Matrix{A}_{(j,k),i}} & (j,k)\in\Edges,\\
\Zero_{3\times9} & \text{otherwise}.
\end{cases}
\end{equation}
Let $\Vector{u} \in \Real^{9(M + P) \times 1}$ be the $(9 \times 1)$-block-structured vector
\begin{equation} \label{eq:translation_block_vector}
\Vector{u}_l = \begin{cases}
\Zero  & l \leq M \\
-\sum_{e\in\delta^+(l)}\sum_{i=1}^{N_{e}}\tilde{\Vector{t}}_{\Matrix{B}_{e,i}} \otimes \UnderBar{\Vector{t}}_{\Matrix{A}_{e,i}} & \text{ otherwise}.
\end{cases}
\end{equation}
Using the matrices in \Crefrange{eq:translation_laplacian}{eq:translation_block_vector}, we can define the monocular RWHEC objective function matrix as follows:
\begin{equation} \label{eq:homogenized_cost_matrix}
\Matrix{Q} = \frac{1}{2}\bbm L(\Graph^\tau) & \Vector{v} & \Matrix{U} \\
\Transpose{\Vector{v}} & v & \Transpose{\Vector{u}}  \\ 
\Transpose{\Matrix{U}} & \Vector{u} & \Matrix{\Sigma} + L(\Graph^\rho) \ebm.
\end{equation}

With this new notation, the standard optimization problem for multiple $\Matrix{X}$s and $\Matrix{Y}$s can be stated as a QCQP whose objective function is a quadratic form:
\begin{problem}{Homogeneous QCQP Formulation of Monocular Generalized RWHEC}\label{prob:many-HERW-scale-qcqp}
	\begin{equation} \label{eqn:many-HERW-scale-QCQP}
	\begin{aligned}	
	\min_{\Vector{x}}  \; & \Vector{x}^\top \Matrix{Q} \Vector{x}\\
	\mathrm{s.t.} \; & \Matrix{R}_{\Matrix{X}_j} \in \LieGroupSO{3} \; \forall \; j \in \Vertices_{\Matrix{X}}\\
	& \Matrix{R}_{\Matrix{Y}_k} \in \LieGroupSO{3} \; \forall \; k \in \Vertices_{\Matrix{Y}}.
	\end{aligned}
	\end{equation}
\end{problem}
\noindent The state vector used in \Cref{prob:many-HERW-scale-qcqp} is
\begin{equation} \label{eq:generalized_RWHEC_state}
\Vector{x} \Defined \bbm \Vector{t}^\top & \alpha & \Vector{r}^\top \ebm^\top,
\end{equation}
where
\begin{equation}
\begin{aligned}
\Vector{t} & \Defined \bbm \Vector{t}^\top_x & \Vector{t}^\top_y \ebm^\top\\
\Vector{t}_x & \Defined \bbm \Vector{t}^\top_{\Matrix{X}_1} & \cdots & \Vector{t}^\top_{\Matrix{X}_M} \ebm^\top \\
\Vector{t}_y & \Defined \bbm \Vector{t}^\top_{\Matrix{Y}_1} & \cdots & \Vector{t}^\top_{\Matrix{Y}_P} \ebm^\top, \\
\end{aligned}
\end{equation}
and
\begin{equation}
\begin{aligned}
\Vector{r} & \Defined \bbm \Vector{r}_x^\top & \Vector{r}_y^\top \ebm^\top\\
\Vector{r}_x & \Defined \bbm \Vectorize{\Matrix{R}_{\Matrix{X}_1}}^\top & \cdots & \Transpose{\Vectorize{\Matrix{R}_{\Matrix{X}_M}}} \ebm^\top\\
\Vector{r}_y & \Defined \bbm \Vectorize{\Matrix{R}_{\Matrix{Y}_1}}^\top & \cdots & \Vectorize{\Matrix{R}_{\Matrix{Y}_P}}^\top \ebm^\top.
\end{aligned}
\end{equation}
\edit{Note that the order of translation, scale, and rotation variables in \mbox{\Cref{eq:generalized_RWHEC_state}} is not the same as in \mbox{\Cref{eq:state_vectors}}. This change simplifies the presentation of the reduction used in \mbox{\Cref{subsec:many-schur}}.}

\subsection{Reducing the Dimension of the QCQP}\label{subsec:many-schur}

If we know the optimal rotation matrices $\Matrix{R}_{\Matrix{X}_j}^\star,\Matrix{R}_{\Matrix{Y}_k}^\star$ for $j \in \Vertices_{\Matrix{X}}$ and $k \in \Vertices_{\Matrix{Y}}$, then the unconstrained optimal $\Matrix{X}$ translation vectors $\Vector{t}_{\Matrix{X}_j}^\star$ for $j \in \Vertices_{\Matrix{X}}$, $\Matrix{Y}$ translation vectors $\Vector{t}_{\Matrix{Y}_k}^\star$ for $k \in \Vertices_{\Matrix{Y}}$, and scale $\alpha^\star$ can be recovered by solving the following linear system:
\begin{equation} \label{eq:recover-scale-trans}
\Transpose{\bbm \Transpose{\Vector{t}_{\alpha}^\star} & \alpha^\star \ebm} = -{\bbm L(\Graph^\tau) & \Vector{v} \\ \Transpose{\Vector{v}} & v\ebm}^\dagger\bbm \Matrix{U} \\ \Transpose{\Vector{u}} \ebm\Vector{r}^\star.
\end{equation}
Using the generalized Schur complement~\citep{gallier2010schur}, we \emph{reduce} $\Matrix{Q}$ to
\begin{equation} \label{eq:reduced_rwhec_cost_matrix}
\Matrix{Q}'  = \frac{1}{2} \left(\Matrix{\Sigma} + L(\Graph^\rho) - \bbm \Matrix{U}^\top \!\! & \Vector{u} \ebm {\bbm L(\Graph^\tau)\!\!\! & \Vector{v} \\ \Transpose{\Vector{v}} & v\ebm}^\dagger \bbm \Matrix{U} \\ \Transpose{\Vector{u}}\ebm \right).
\end{equation}
The result is a reduced form of \Cref{prob:many-HERW-scale-qcqp} that only depends on the rotation variables:
\begin{problem}{Reduced QCQP Formulation of Generalized Monocular RWHEC}\label{prob:monocular_HERW_reduced}
	\begin{equation}
	\begin{aligned}
	\min_{\Vector{r} \in \Real^{9(M+P)}} \; & \Transpose{\Vector{r}}\Matrix{Q}'\Vector{r},\\
	\text{\emph{s.t.}} \; & \Matrix{R}_{\Matrix{X}_j} \in \LieGroupSO{3} \; \forall \; j \in \Vertices_{\Matrix{X}}, \\
	& \Matrix{R}_{\Matrix{Y}_k} \in \LieGroupSO{3} \; \forall \; k \in \Vertices_{\Matrix{Y}}.\\
	\end{aligned}
	\end{equation}
\end{problem}

\section{Certifiably Globally Optimal Extrinsic Calibration} \label{sec:certifiable}
In this section, we present a convex SDP relaxation of our calibration problem.
Deriving the standard \emph{Shor relaxation} of \Cref{prob:monocular_HERW_reduced} and its dual requires the homogenization of the quadratic $\LieGroupSO{3}$ constraints in \Cref{eq:SO3_quadratic_variety}~\citep[Section 1]{cifuentes2022local}.
Specifically, we introduce quadratically constrained variable $s^2 = 1$ and homogenize the linear and constant parts of \Cref{eq:SO3_quadratic_variety}:
\begin{subequations} \label{eq:homogenized_SO3_quadratic_variety}
\begin{align}
\Matrix{R}\Matrix{R}^\top & = s^2\Identity, \\
\Matrix{R}^\top\Matrix{R} & = s^2\Identity, \\
\Matrix{R}_{\sigma_i(1)} \times \Matrix{R}_{\sigma_i(2)} & = s\Matrix{R}_{\sigma_i(3)} \; \forall i \in \{1, 2, 3\}.
\end{align}
\end{subequations}
Treating the special case with known scale ($\alpha = 1$) in \Cref{prob:HERW-inhomogeneous} also requires that we replace $\alpha$ in \Cref{prob:monocular_HERW_reduced} with $s$, which ensures that the objective function is homogeneous. 

Applying the theory of duality for generalized inequalities to the homogenized constraints in \Cref{prob:monocular_HERW_reduced} yields the following expression for its \emph{Lagrangian}~\citep[Section 5.9]{boyd2004convex}:
\begin{equation} \label{eq:lagrangian}
\Lagrangian{\Vector{x}, \Vector{\lambda}} = \lambda_s + \Transpose{\Vector{x}}\Matrix{Z}(\Vector{\lambda}) \Vector{x}, \\
\end{equation}
\edit{where $\lambda_s$ and $\Vector{\lambda}$ are dual variables and the expression for $\Matrix{Z}(\Vector{\lambda})$ can be found in \mbox{\Cref{app:dual_matrix}}.}
The minimum of the Lagrangian function is only defined if $\Matrix{Z}(\Vector{\lambda}) \succeq 0$.
As a result, the Lagrangian dual problem is the following SDP:
\begin{problem}{Dual of Monocular RWHEC} \label{prob:HERW_dual_relaxation}
	\begin{equation}
	\begin{aligned}
	\max_{\Vector{\lambda}} \; & \lambda_s, \\
	\text{\emph{s.t.}} \; & \Matrix{Z}(\Vector{\lambda}) \succeq 0,
	\end{aligned}
	\end{equation}
	where $\Vector{\lambda} \in \Real^{22\left(M+P\right) + 1}$ and $\lambda_s$ is the component of $\Vector{\lambda}$ corresponding to the homogenizing constraint $s^2=1$ in the primal problem.
\end{problem}

Our complete certifiable RWHEC algorithm is presented in \Cref{alg:certifiable}.
Given a solution $\Vector{\lambda}^\star$ to dual Problem \ref{prob:HERW_dual_relaxation}, the KKT conditions for SDPs tell us that solution $\Vector{r}^\star$ to the corresponding primal problem is a null vector of $\Matrix{Z}(\Vector{\lambda}^\star)$~\citep{cifuentes2022local}.
Given a null vector of $\Matrix{Z}(\Vector{\lambda}^\star)$ with terminal element $s'$ corresponding to the homogenizing variable $s$, we can scale it by $1/s'$ in order to recover $\Matrix{r}^\star$.
Finally, we can determine the optimal $\alpha^\star$ and scaled translation $\Vector{t}^\star$ with \Cref{eq:recover-scale-trans} and correct $\Vector{t}^\star$ by a factor of $1/\alpha^\star$. 

In addition to providing the computational benefits of solving a convex problem, our SDP relaxation allows us to numerically \emph{certify} the global optimality of a primal solution $\Vector{x}^\star$: by weak Lagrangian duality~\citep{boyd2004convex}, $\lambda_s^\star$ is a lower bound on the optimal objective value of the primal \Cref{prob:monocular_HERW_reduced}. 
Therefore, when this \emph{duality gap} is small with respect to machine precision, we obtain a \textit{post hoc} certificate of the optimality of our primal solution. 
We provide an example of this certification procedure on real experimental data in \Cref{sec:certification-experiment}. 

\begin{algorithm}[t]
\caption{Certifiable RWHEC} \label{alg:certifiable}
\begin{algorithmic}[1]
\Input  Problem graph $\Graph = (\Vertices, \Edges)$ and data $\Data_e \ \forall e \in \Edges$.
\Output Estimates $\Estimate{\alpha}$, $\Estimate{\Matrix{X}}_j, \Estimate{\Matrix{Y}}_k \ \forall j, k \in \Vertices$, and the suboptimality bound $\hat{\rho}$ from \Cref{eq:relative_suboptimality_bound}.
\Function{Certifiable-Rwhec}{$\Graph, \{\Data_e \}_{e \in \Edges}$}
\State Form $\Matrix{Q}'$ from the inputs using \Cref{eq:reduced_rwhec_cost_matrix}
\State Solve \Cref{prob:HERW_dual_relaxation} for $\Optimal{\Vector{\lambda}}$
\State Set $\Optimal{d} \leftarrow \Optimal{\lambda}_s$
\State Compute $\Vector{r}' \in \ker \left(\Matrix{Z}(\Optimal{\Vector{\lambda}})\right)$ 
\State Set $\Estimate{\Vector{r}} \leftarrow \Vector{r}'/s'$ 
\State Recover $\Estimate{\Vector{t}}$ and $\Estimate{\alpha}$ via \Cref{eq:recover-scale-trans}
\State Reshape $\Estimate{\Vector{r}}$ and $\Estimate{\Vector{t}}$ into $\Estimate{\Matrix{X}}_j, \Estimate{\Matrix{Y}}_k \ \forall j,k \in \Vertices$
\State Set $p \leftarrow (\Estimate{\Vector{r}})^\top \Matrix{Q}' \Estimate{\Vector{r}}$ 
\State Set $\hat{\rho} \leftarrow (p - \Optimal{d})/\Optimal{d}$
\State \Return $\left \lbrace \Estimate{\alpha}, \Estimate{\Matrix{X}}_j, \Estimate{\Matrix{Y}}_k, \hat{\rho} \right \rbrace$
\EndFunction
\end{algorithmic}
\end{algorithm}
\section{Uniqueness of Solutions} \label{sec:identifiability}
In this section, we derive conditions on measurement data which ensure that the standard generalized RWHEC formulation of \Cref{prob:HERW-inhomogeneous} has a unique solution in the absence of noise (i.e., sufficient conditions for a problem instance to be an \emph{identifiable} model).
In addition to precisely characterizing which robot and sensor motions lead to a well-posed calibration problem with identifiable extrinsic parameters, the results in this section are used in \Cref{sec:global_optimality} to prove that SDP relaxations of our QCQP are tight, even when noisy measurements are used.
A similar analysis is conducted for hand-eye calibration of a single monocular camera in \cite{andreff_robot_2001} and the standard RWHEC problem in \cite{shah_solving_2013}, but to our knowledge the multi-sensor and multi-target case has not been addressed until now.
\edit{We leave the derivation of similar results for the more complex monocular case of \mbox{\Cref{prob:generalized-monocular-RWHEC}} for future work.}

\subsection{The Rotation-Only Case}
We begin by characterizing the uniqueness of solutions to the RWHEC problem with $\LieGroupSO{3}$-valued data: 
\begin{problem}{Rotation-Only Generalized RWHEC}\label{prob:rotation-only-HERW}
	\begin{equation} \label{eqn:rotation-only-HERW-QCQP}
	\begin{aligned}
	\min_{\Vector{x}}  \; & \frac{1}{2}\sum_{(j,k) \in \Edges} \sum_{i=1}^{N_{(j,k)}} \FrobeniusNorm{\Matrix{R}_{\Matrix{A}_{(j,k),i}}\Matrix{R}_{\Matrix{X}_j} - \Matrix{R}_{\Matrix{Y}_k}\Matrix{R}_{\Matrix{B}_{(j,k),i}}}^2\\
	\mathrm{s.t.} \; & \Matrix{R}_{\Matrix{X}_j} \in \LieGroupSO{3} \; \forall \; j \in \Indices{M}\\
	& \Matrix{R}_{\Matrix{Y}_k} \in \LieGroupSO{3} \; \forall \; k \in \Indices{P}.
	\end{aligned}
	\end{equation}
\end{problem}
\noindent \Cref{prob:rotation-only-HERW} with a single pair of variables $\Matrix{R}_{\Matrix{X}}$ and $\Matrix{R}_{\Matrix{Y}}$ is sometimes called \emph{conjugate rotation averaging} in the computer vision literature~\citep{hartley2013rotation}. 
It is essentially \Cref{prob:HERW-inhomogeneous} without translation variables, and with $\kappa_{e, i} = 1 \ \forall e \in \Edges$ for notational convenience. 

\begin{theorem}\label{thm:rotation_identifiability}
Consider an instance of \Cref{prob:rotation-only-HERW} induced by a weakly connected bipartite directed graph $\Graph = (\Vertices, \Edges)$ with \emph{exact} rotation measurements $\UnderBar{\Matrix{R}}_{\Matrix{A}_{e,i}}, \UnderBar{\Matrix{R}}_{\Matrix{B}_{e,i}} \in \LieGroupSO{3}$ for $i \in \Indices{N_e}$ associated with each edge $e \in \Edges$.
If a single rotation $\Matrix{R}_{\Matrix{X}_v}$ or $\Matrix{R}_{\Matrix{Y}_v}$ for some $v \in \Vertices$ can be uniquely determined with the problem data, then this instance of \Cref{prob:rotation-only-HERW} has a unique solution.
\end{theorem}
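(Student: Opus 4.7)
The plan is to first reduce the claim to a purely algebraic statement about the system of noiseless constraints. Because the measurements are exact, the objective of \Cref{prob:rotation-only-HERW} is bounded below by zero and attains this value at the true rotations, so the set of global minimizers coincides with the solution set of the system
\begin{equation}
\UnderBar{\Matrix{R}}_{\Matrix{A}_{e,i}} \Matrix{R}_{\Matrix{X}_j} = \Matrix{R}_{\Matrix{Y}_k} \UnderBar{\Matrix{R}}_{\Matrix{B}_{e,i}}, \qquad e = (j,k) \in \Edges, \ i \in \Indices{N_e}.
\end{equation}
Proving the theorem therefore amounts to showing that, under the hypothesis that some single rotation variable at a vertex $v$ is uniquely pinned down, the entire constraint system admits a unique solution over $\LieGroupSO{3}^{M+P}$.

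The next step is a local propagation rule across a single edge. For any $e = (j,k) \in \Edges$ and any $i \in \Indices{N_e}$, the equation above can be inverted in either direction on $\LieGroupSO{3}$:
\begin{equation}
\Matrix{R}_{\Matrix{Y}_k} = \UnderBar{\Matrix{R}}_{\Matrix{A}_{e,i}} \Matrix{R}_{\Matrix{X}_j} \UnderBar{\Matrix{R}}_{\Matrix{B}_{e,i}}^\top, \qquad \Matrix{R}_{\Matrix{X}_j} = \UnderBar{\Matrix{R}}_{\Matrix{A}_{e,i}}^\top \Matrix{R}_{\Matrix{Y}_k} \UnderBar{\Matrix{R}}_{\Matrix{B}_{e,i}},
\end{equation}
so knowing the rotation at one endpoint of $e$ uniquely determines the rotation at the other. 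Consistency across different measurement indices $i$ on the same edge is automatic, since the noiseless data are by hypothesis generated by a common true pair of rotations.

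The final step is to propagate across the entire graph using the weak connectivity hypothesis. The underlying undirected graph of $\Graph$ admits a spanning tree, which I would root at the distinguished vertex $v$. I would then induct on the depth of this tree: the base case is the hypothesis that $\Matrix{R}_{\Matrix{X}_v}$ (or $\Matrix{R}_{\Matrix{Y}_v}$) is already uniquely determined, and the inductive step applies the propagation rule above to the tree edge connecting a child to its parent, regardless of which direction that edge points in $\Graph$. The cases $v \in \Vertices_{\Matrix{X}}$ and $v \in \Vertices_{\Matrix{Y}}$ are handled symmetrically because the propagation rule is bidirectional. Since every vertex lies in the spanning tree, every rotation variable is uniquely determined.

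The argument is conceptually direct, so no step is a serious obstacle; the work lies in being explicit about two structural facts. First, one must verify that the minimizer set really collapses to the solution set of the noiseless AX=YB constraints, which uses only the existence of the zero-cost configuration guaranteed by the ``exact measurements'' assumption. Second, one must confirm that the directed bipartite structure of $\Graph$ does not obstruct propagation along an undirected spanning tree, which is exactly what the symmetric inversion formulas above provide.
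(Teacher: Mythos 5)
Your proposal is correct and follows essentially the same route as the paper's proof: exactness forces every residual of the zero-cost minimizers to vanish, the constraint $\UnderBar{\Matrix{R}}_{\Matrix{A}_{e,i}}\Matrix{R}_{\Matrix{X}_j}\UnderBar{\Matrix{R}}_{\Matrix{B}_{e,i}}^\top = \Matrix{R}_{\Matrix{Y}_k}$ is inverted in both directions to propagate uniqueness across an edge, and weak connectivity (the paper uses semi-paths where you use a spanning tree, an immaterial difference) carries the uniquely determined rotation at $v$ to every other vertex. Your explicit remark that the minimizer set collapses to the solution set of the noiseless constraints is a point the paper leaves implicit, but the substance of the argument is the same.
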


\begin{proof}
Exact measurements imply that each residual in \Cref{prob:rotation-only-HERW} is zero: 
\begin{equation} \label{eq:exact_rotation-only_residuals}
	\UnderBar{\Matrix{R}}_{\Matrix{A}_{e,i}} \Matrix{R}_{\Matrix{X}_j} \UnderBar{\Matrix{R}}^{\top}_{\Matrix{B}_{e,i}} = \Matrix{R}_{\Matrix{Y}_k} \ \forall e \in \Edges, \ i \in \Indices{N_e}.
\end{equation}
\Cref{eq:exact_rotation-only_residuals} gives us an explicit expression for $\Matrix{R}_{\Matrix{Y}_k}$ in terms of $\Matrix{R}_{\Matrix{X}_j}$ for any $(j,k) \in \Edges$. 
Since $\UnderBar{\Matrix{R}}_{\Matrix{A}_{e,i}}$ and $\UnderBar{\Matrix{R}}_{\Matrix{B}_{e,i}}$ are rotation matrices and therefore invertible, we can also solve \Cref{eq:exact_rotation-only_residuals} for $\Matrix{R}_{\Matrix{X}_j}$ in terms of $\Matrix{R}_{\Matrix{Y}_k}$:
\begin{equation} \label{eq:exact_rotation-only_residuals_inverted}
	\UnderBar{\Matrix{R}}^\top_{\Matrix{A}_{e,i}} \Matrix{R}_{\Matrix{Y}_k} \UnderBar{\Matrix{R}}_{\Matrix{B}_{e,i}} = \Matrix{R}_{\Matrix{X}_j} \ \forall e \in \Edges, \ i \in \Indices{N_e}.
\end{equation}
Now suppose (by our hypothesis) that there is some $v \in \Vertices$ such that we can uniquely identify the associated rotation $\Matrix{R}_{\Matrix{X}_v}$ or $\Matrix{R}_{\Matrix{Y}_k}$.
Since $\Graph$ is weakly connected, there is a semi-path connecting $v$ and any other $w \in \Vertices$. 
By repeatedly applying the identities in Equations (\ref{eq:exact_rotation-only_residuals}) and (\ref{eq:exact_rotation-only_residuals_inverted}), we can uniquely identify the rotations associated with each vertex along the semi-path joining $v$ to $w$.
Since every vertex in $\Graph$ can be joined to $v$ by such a path, this shows that the rotations associated with \emph{all} vertices in $\Graph$ are uniquely identifiable.
\end{proof}

\Cref{thm:rotation_identifiability} tells us that in the idealized noise-free version of \Cref{prob:rotation-only-HERW}, a  single rotation with a unique solution implies that the entire problem has a unique solution. 
The following corollary is a direct consequence of our result and Theorem 2.3 in \cite{shah_solving_2013}: 
\begin{corollary} \label{cor:identifiable_rotations}
Consider an instance of \Cref{prob:rotation-only-HERW} with weakly connected graph $\Graph = (\Vertices, \Edges)$ and exact measurements.
If there exists an edge $e = (j,k) \in \Edges$ with $i_1, i_2, i_3 \in \Indices{N_e}$ such that $\UnderBar{\Matrix{R}}^\top_{\Matrix{A}_{e, i_2}} \UnderBar{\Matrix{R}}_{\Matrix{A}_{e, i_1}}$ and $\UnderBar{\Matrix{R}}^\top_{\Matrix{A}_{e, i_3}} \UnderBar{\Matrix{R}}_{\Matrix{A}_{e, i_1}}$ have distinct principal axes, then there is a unique solution. 
Furthermore, this solution is an element of a 1-dimensional vector space containing all \emph{unconstrained} minimizers of the homogeneous objective function of \Cref{prob:rotation-only-HERW}.
\end{corollary}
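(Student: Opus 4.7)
The plan is to combine the single-edge analysis from \cite{shah_solving_2013} with the graph-level propagation already established in \Cref{thm:rotation_identifiability}, and then refine the argument to track the dimension of the linear solution space. First, I would apply Theorem 2.3 of \cite{shah_solving_2013} to the two-vertex subproblem induced by the distinguished edge $e = (j,k)$: the hypothesis that $\UnderBar{\Matrix{R}}^\top_{\Matrix{A}_{e, i_2}} \UnderBar{\Matrix{R}}_{\Matrix{A}_{e, i_1}}$ and $\UnderBar{\Matrix{R}}^\top_{\Matrix{A}_{e, i_3}} \UnderBar{\Matrix{R}}_{\Matrix{A}_{e, i_1}}$ have distinct principal axes is precisely the classical Chen--Chou identifiability condition, so a unique $(\Matrix{R}_{\Matrix{X}_j}^\star, \Matrix{R}_{\Matrix{Y}_k}^\star) \in \LieGroupSO{3}^2$ is determined from $\Data_e$. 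Invoking \Cref{thm:rotation_identifiability} then propagates uniqueness through the weakly connected graph $\Graph$, since identifiability of even one rotation variable is sufficient.

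For the second claim, I would pass from the rotation manifold to the ambient vector space. With exact data, the objective of \Cref{prob:rotation-only-HERW} attains value zero precisely on the kernel of the block-structured linear operator
\begin{equation*}
\mathcal{L}: (\Matrix{R}_{\Matrix{X}_j}, \Matrix{R}_{\Matrix{Y}_k})_{j,k} \mapsto \left( \UnderBar{\Matrix{R}}_{\Matrix{A}_{e,i}} \Matrix{R}_{\Matrix{X}_j} - \Matrix{R}_{\Matrix{Y}_k} \UnderBar{\Matrix{R}}_{\Matrix{B}_{e,i}} \right)_{e=(j,k), i}.
\end{equation*}
Eliminating $\Matrix{R}_{\Matrix{Y}_k}$ from the edge-$e$ block via $\Matrix{R}_{\Matrix{Y}_k} = \UnderBar{\Matrix{R}}_{\Matrix{A}_{e, i_1}} \Matrix{R}_{\Matrix{X}_j} \UnderBar{\Matrix{R}}^\top_{\Matrix{B}_{e, i_1}}$ yields the intertwining equations
\begin{equation*}
\UnderBar{\Matrix{R}}^\top_{\Matrix{A}_{e, i_l}} \UnderBar{\Matrix{R}}_{\Matrix{A}_{e, i_1}} \Matrix{R}_{\Matrix{X}_j} = \Matrix{R}_{\Matrix{X}_j} \UnderBar{\Matrix{R}}^\top_{\Matrix{B}_{e, i_l}} \UnderBar{\Matrix{R}}_{\Matrix{B}_{e, i_1}}, \quad l = 2, 3,
\end{equation*}
which via the vectorization identity \eqref{eq:kronecker_vectorization} become a homogeneous linear system in $\Vectorize{\Matrix{R}_{\Matrix{X}_j}}$. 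A spectral analysis of this system (each intertwining constraint individually admits a three-dimensional solution set spanned by the eigenvectors of the associated Kronecker operator; the distinct principal axes condition ensures that the intersection across $l = 2, 3$ is exactly one-dimensional) identifies $\ker \mathcal{L}$ restricted to the edge $e$ as $\Span \{(\UnderBar{\Matrix{R}}_{\Matrix{X}_j}, \UnderBar{\Matrix{R}}_{\Matrix{Y}_k})\}$. Finally, for any other vertex $w \in \Vertices$, the weak connectivity of $\Graph$ provides a semi-path from $w$ to $\{j,k\}$, and each edge along that semi-path gives a linear formula expressing the corresponding unknown as $\UnderBar{\Matrix{R}}_{\Matrix{A}_{e',i}} (\cdot) \UnderBar{\Matrix{R}}^\top_{\Matrix{B}_{e',i}}$ of an already-determined variable. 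Hence the full $\ker \mathcal{L}$ remains one-dimensional.

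The main obstacle is the spectral step proving that the intersection of the intertwining constraints on edge $e$ has dimension exactly one. The $\geq 1$ direction is immediate from the true solution, but the $\leq 1$ upper bound is where the distinct-principal-axes hypothesis does essential work: individually, each intertwining constraint $\Matrix{M}_l \Matrix{R}_{\Matrix{X}_j} = \Matrix{R}_{\Matrix{X}_j} \Matrix{N}_l$ (where $\Matrix{M}_l, \Matrix{N}_l \in \LieGroupSO{3}$ share a spectrum) admits a three-parameter family of solutions in $\Real^{3 \times 3}$, and one must show that taking two such constraints with non-collinear rotation axes collapses this to a line. This reduction is essentially the $9 \times 9$ Kronecker reformulation used implicitly by \cite{shah_solving_2013} in their SVD-based closed-form solver, so the step reduces to carefully transcribing their nullspace count into our block-graph notation and then verifying that propagation across semi-paths of $\Graph$ preserves the dimension.
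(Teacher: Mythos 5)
Your proof of the first claim is exactly the paper's: apply Theorem 2.3 of \cite{shah_solving_2013} to the distinguished edge to pin down $(\Matrix{R}_{\Matrix{X}_j}, \Matrix{R}_{\Matrix{Y}_k})$, then invoke \Cref{thm:rotation_identifiability} to propagate uniqueness over the weakly connected graph. For the second claim you and the paper agree on the overall structure (show the edge-$e$ block of the vectorized zero-residual system has a one-dimensional kernel, then propagate the scalar $\eta$ along semi-paths), but you differ in how the single-edge kernel count is obtained: the paper simply vectorizes the edge cost into $\Norm{\Matrix{Q}_{(j,k)}\Vector{r}_{(j,k)}}_2^2$ and cites Lemma~1 of \cite{andreff_robot_2001} for $\dim\ker\Matrix{Q}_{(j,k)} = 1$, whereas you propose to re-derive that fact by eliminating $\Matrix{R}_{\Matrix{Y}_k}$ and doing a spectral analysis of the resulting intertwining (Sylvester-type) equations. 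Your route is more self-contained and makes the role of the distinct-axes hypothesis explicit, but the step you flag as the main obstacle is genuinely the whole content of Andreff's lemma, and your sketch of it has one loose end: the claim that each intertwining constraint $\Matrix{M}_l\Matrix{R} = \Matrix{R}\Matrix{N}_l$ individually has a three-dimensional solution set in $\Real^{3\times 3}$ holds only when the relative rotation angle avoids $\pi$ (eigenvalues $\{1, e^{\pm i\theta}\}$ distinct); at angle $\pi$ the eigenvalue $-1$ has multiplicity two and the solution space jumps to dimension five, so the "distinct principal axes" hypothesis alone does not license your per-constraint dimension count, and the intersection argument must be redone for that case (the conclusion still holds, which is what the cited lemma covers uniformly). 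If you intend to keep the spectral derivation rather than citing the lemma, you need to either handle the angle-$\pi$ case separately or argue directly about the intersection without the intermediate per-constraint count.
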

\begin{proof}
Theorem 2.3 in \cite{shah_solving_2013} tells us that \edit{there are unique $\Matrix{R}_{\Matrix{X}_j}$ and $\Matrix{R}_{\Matrix{Y}_k}$ minimizing \mbox{\Cref{prob:rotation-only-HERW}},} satisfying the hypothesis of \Cref{thm:rotation_identifiability} and guaranteeing the existence of a unique solution.
If we use \Cref{eq:kronecker_vectorization} to vectorize the inner sum over $i$ in the homogeneous objective of \Cref{prob:rotation-only-HERW} for the edge $e=(j,k)$ satisfying our hypotheses we get
\begin{equation}
\begin{aligned}
\sum_{i=1}^{N_{(j,k)}} &\FrobeniusNorm{\Matrix{R}_{\Matrix{A}_{(j,k),i}}\Matrix{R}_{\Matrix{X}_j} - \Matrix{R}_{\Matrix{Y}_k}\Matrix{R}_{\Matrix{B}_{(j,k),i}}}^2 = \\
&\Norm{\Matrix{Q}_{(j,k)} \Vector{r}_{(j,k)} }^2_2,
\end{aligned}
\end{equation}
where 
\begin{equation}
	\Vector{r}_{(j,k)} \Defined \bbm \Vectorize{\Matrix{R}_{\Matrix{X}_j}} \\ \Vectorize{\Matrix{R}_{\Matrix{Y}_k}} \ebm.
\end{equation}
Furthermore, $\Matrix{Q}_{(j,k)}$ has a 1-dimensional kernel in our case of exact measurements~\citep[Lemma 1]{andreff_robot_2001}.
Since the objective of \Cref{prob:rotation-only-HERW} is nonnegative and attains a value of zero for our unique solution, any \emph{unconstrained} minimizer must also attain a value of zero and therefore satisfy \Cref{eq:exact_rotation-only_residuals}. 
Each element $\eta \Vector{r}_{(j,k)} \in \ker(\Matrix{Q}_{(j,k)})$ is identified with $\eta\Vector{R}_{\Matrix{X}_v}$ and $\eta\Vector{R}_{\Matrix{Y}_v}$ for all $v \in \Vertices$ by the same argument from weak connectedness used in the proof of \Cref{thm:rotation_identifiability}. 
Therefore, we have established the existence of a 1-dimensional vector space of unconstrained minimizers of \Cref{prob:rotation-only-HERW} induced by the informative edge $e$ and parameterized by $\eta \in \Real$.
%

\end{proof}

\Cref{cor:identifiable_rotations} provides us with a geometrically interpretable sufficient condition for uniqueness: if there is a single sensor-target pair in the problem graph that gathered measurements from orientations related by rotations about two distinct axes, then the entire multi-frame rotational calibration problem has a unique solution.
However, \Cref{thm:rotation_identifiability} in its full generality suggests that there exist cases where the data from more than one edge is required to verify that there is a unique solution.\endnote{We were indeed able to construct many such cases for a small ($M=P=2$) synthetic problem graph. However, their complete characterization \edit{which we leave for future work} does not appear to be as mathematically \edit{straightforward} as the well-studied special case treated in \Cref{cor:identifiable_rotations}.}
Finally, the fact that the unique solution guaranteed by \Cref{cor:identifiable_rotations} is also a unique (up to scale) unconstrained minimizer\endnote{\edit{Note that the unconstrained minimizer of \mbox{\Cref{prob:rotation-only-HERW}} is only interesting to us as a technical tool in our proofs: solving an unconstrained version of this problem (or the full version over $\LieGroupSE{3}$) is simply the first step used by the approximate method in \mbox{\cite{wang2022accurate}}.}} of the objective function of \Cref{prob:rotation-only-HERW} will be used in our theorem on global optimality in \Cref{sec:global_optimality}.

\subsection{The Full $\LieGroupSE{3}$ Case}
We can use \Cref{thm:rotation_identifiability} to prove a similar result for the inhomogeneous formulation of standard RWHEC in \Cref{prob:HERW-inhomogeneous} with exact measurements:
\begin{theorem} \label{thm:identifiability}
Consider an instance of \Cref{prob:HERW-inhomogeneous} induced by a weakly connected bipartite directed graph $\Graph = (\Vertices, \Edges)$ with \emph{exact} pose measurements $\Matrix{A}_{e,i}, \Matrix{B}_{e,i} \in \LieGroupSE{3}$ for $i \in \Indices{N_e}$ associated with each edge $e \in \Edges$.
If a single transformation $\Matrix{X}_v$ or $\Matrix{Y}_v$ for some $v \in \Vertices$ can be uniquely determined with the problem data, then this instance of \Cref{prob:HERW-inhomogeneous} has a unique solution.
\end{theorem}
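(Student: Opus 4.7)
The plan is to closely mirror the proof of \Cref{thm:rotation_identifiability}, but now working directly with the $\LieGroupSE{3}$-valued transformation variables instead of just their rotation parts. The key structural fact is that elements of $\LieGroupSE{3}$ are invertible, so once we know that some $\Matrix{X}_v$ (resp.\ $\Matrix{Y}_v$) is uniquely determined, the exact constraint $\Matrix{A}_{e,i}\Matrix{X}_j = \Matrix{Y}_k\Matrix{B}_{e,i}$ lets us propagate uniqueness in either direction along an edge.

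First, I would observe that with exact measurements every residual in \Cref{prob:HERW-inhomogeneous} must be zero at any minimizer, so the constraint $\Matrix{A}_{e,i}\Matrix{X}_j = \Matrix{Y}_k\Matrix{B}_{e,i}$ holds for every $e = (j,k) \in \Edges$ and every $i \in \Indices{N_e}$. Solving each such equation in two ways gives the explicit formulas
\begin{equation}
	\Matrix{Y}_k = \Matrix{A}_{e,i}\, \Matrix{X}_j\, \Matrix{B}_{e,i}^{-1}, \qquad \Matrix{X}_j = \Matrix{A}_{e,i}^{-1}\, \Matrix{Y}_k\, \Matrix{B}_{e,i},
\end{equation}
which are well defined because $\LieGroupSE{3}$ is a group. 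Consequently, if $\Matrix{X}_j$ is uniquely determined for some $j \in \Vertices_{\Matrix{X}}$, then $\Matrix{Y}_k$ is uniquely determined for every neighbour $k \in \Vertices_{\Matrix{Y}}$ of $j$, and vice versa.

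Next, I would invoke the weak connectedness hypothesis. By assumption, there exists some $v \in \Vertices$ whose associated transformation is uniquely determined. For any other vertex $w \in \Vertices$, weak connectedness of $\Graph$ guarantees a semi-path from $v$ to $w$, i.e.\ a sequence of vertices alternating between $\Vertices_{\Matrix{X}}$ and $\Vertices_{\Matrix{Y}}$ (because $\Graph$ is bipartite), with consecutive vertices joined by an edge in $\Edges$ (in either direction). Applying the two displayed formulas above step by step along this semi-path, each intermediate transformation is forced to take a unique value, and hence the transformation at $w$ is uniquely determined. Since $w$ was arbitrary, every $\Matrix{X}_j$ and $\Matrix{Y}_k$ is uniquely determined, establishing the theorem.

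I do not anticipate any serious obstacle: the entire argument is a direct lift of the proof of \Cref{thm:rotation_identifiability} from $\LieGroupSO{3}$ to $\LieGroupSE{3}$, exploiting only group invertibility and bipartite weak connectedness. The one point that deserves a sentence of care is that one does \emph{not} need a separate translation-propagation step: unlike the QCQP formulation where rotations and translations are decoupled variables, the hypothesis here is uniqueness of the full transformation, and the exact matrix equation $\Matrix{A}_{e,i}\Matrix{X}_j = \Matrix{Y}_k \Matrix{B}_{e,i}$ simultaneously pins down both the rotational and translational parts of the neighbouring transformation in a single algebraic step. If desired, one could alternatively decompose the argument by first invoking \Cref{thm:rotation_identifiability} to obtain unique rotations throughout $\Graph$, and then using \Cref{eqn:translation-constraint} (linear in the translations, given fixed rotations) to propagate translation uniqueness along the same semi-paths; this would produce a slightly longer but entirely parallel proof.
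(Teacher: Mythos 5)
Your proof is correct. The one substantive difference from the paper's argument is the decomposition: the paper first invokes \Cref{thm:rotation_identifiability} to conclude that all rotation components are uniquely determined (noting that the rotation residuals are independent of the translations), and then propagates the translations separately along semi-paths using the affine relation \Cref{eq:translation_component}, which is invertible because each $\UnderBar{\Matrix{R}}_{\Matrix{A}_{e,i}}$ is full rank. You instead propagate the full $\LieGroupSE{3}$ element in a single step via $\Matrix{Y}_k = \Matrix{A}_{e,i}\Matrix{X}_j\Matrix{B}_{e,i}^{-1}$ and $\Matrix{X}_j = \Matrix{A}_{e,i}^{-1}\Matrix{Y}_k\Matrix{B}_{e,i}$, which is slightly more compact and avoids re-deriving the translation recursion; you correctly note that the zero-residual condition on both the rotational and translational terms of the objective is equivalent to the matrix equation $\Matrix{A}_{e,i}\Matrix{X}_j = \Matrix{Y}_k\Matrix{B}_{e,i}$, so nothing is lost. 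The paper's two-stage route has the minor advantage of exposing the linear-in-translation structure that is reused in \Cref{cor:identifiable_poses}, but as a proof of the theorem itself both arguments are equally valid, and you explicitly identify the paper's decomposition as the alternative in your closing paragraph.
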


\begin{proof}
Since the rotation residual terms in the objective of \Cref{prob:HERW-inhomogeneous} are independent of translations, \Cref{thm:rotation_identifiability} gives us a unique solution for the rotation component of each $\Matrix{X}_j$ and $\Matrix{Y}_k$. 
Exact measurements imply that the translational residuals are also zero:
\begin{equation} \label{eq:translation_component}
	\Vector{t}_{\Matrix{X}_j} = \UnderBar{\Matrix{R}}^{\top}_{\Matrix{A}_{e,i}} (\Matrix{R}_{\Matrix{Y}_k} \UnderBar{\Vector{t}}_{\Matrix{B}_{e,i}} + \Vector{t}_{\Matrix{Y}_k} - \UnderBar{\Vector{t}}_{\Matrix{A}_{e,i}}) \ \forall e \in \Edges. 
\end{equation}
\Cref{eq:translation_component} relates each pair of translation components via an affine equation. 
Since each rotation matrix $\UnderBar{\Matrix{R}}^{\top}_{\Matrix{A}_{e,i}}$ is full rank, any translation $\Vector{t}_{\Matrix{X}_j}$ or  $\Vector{t}_{\Matrix{Y}_k}$ adjacent to a uniquely determined translation is itself uniquely determined via \Cref{eq:translation_component}.
Therefore, the fact that $\Graph$ is weakly connected allows us to uniquely specify each translation through a procedure analogous to the one used to prove \Cref{thm:rotation_identifiability}.
\end{proof}

Once again, the results of \cite{shah_solving_2013} provide us with a geometrically interpretable sufficient condition that is analogous to the rotation-only case in \Cref{cor:identifiable_rotations}: 
\begin{corollary} \label{cor:identifiable_poses}
Consider an instance of \Cref{prob:HERW-inhomogeneous} with \edit{weakly connected} graph $\Graph = (\Vertices, \Edges)$ and exact measurements.
If there exists an edge $e = (j,k) \in \Edges$ with $i_1, i_2, i_3 \in \Indices{N_e}$ such that $\UnderBar{\Matrix{R}}^\top_{\Matrix{A}_{e, i_2}} \UnderBar{\Matrix{R}}_{\Matrix{A}_{e, i_1}}$ and $\UnderBar{\Matrix{R}}^\top_{\Matrix{A}_{e, i_3}} \UnderBar{\Matrix{R}}_{\Matrix{A}_{e, i_1}}$ have distinct principal axes, then the problem has a unique solution.
Furthermore, if 
\begin{equation} \label{eq:range_requirement}
\Vector{t}_e \Defined 
\bbm
\Matrix{R}_{\Matrix{Y}_k} \UnderBar{\Vector{t}}_{\Matrix{B}_{e,1}} \\
\vdots \\
\Matrix{R}_{\Matrix{Y}_k} \UnderBar{\Vector{t}}_{\Matrix{B}_{e,N_e}}
\ebm \notin \Range(\Matrix{M}_e)
\end{equation}
where
\begin{equation}
	\Matrix{M}_e \Defined 
	\bbm 
		-\UnderBar{\Matrix{R}}_{\Matrix{A}_{e, 1}} & \Identity \\
		\vdots & \\
		 -\UnderBar{\Matrix{R}}_{\Matrix{A}_{e, N_e}} & \Identity
	\ebm \in \Real^{3 N_e \times 6},
\end{equation}
then this solution is also the unique \emph{unconstrained} minimizer of the \edit{inhomogeneous} objective function of \Cref{prob:HERW-inhomogeneous}. 
\end{corollary}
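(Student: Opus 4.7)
The plan is to establish the two claims sequentially, leveraging the identifiability machinery of \Cref{thm:identifiability} for the first and the 1D-family characterization of unconstrained rotational minimizers from \Cref{cor:identifiable_rotations} for the second.

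For the first claim (uniqueness of the constrained solution), I would apply Theorem~2.3 of \cite{shah_solving_2013} to the sub-instance of \Cref{prob:HERW-inhomogeneous} induced by edge $e = (j, k)$ in isolation. Because the two products $\UnderBar{\Matrix{R}}^\top_{\Matrix{A}_{e, i_2}} \UnderBar{\Matrix{R}}_{\Matrix{A}_{e, i_1}}$ and $\UnderBar{\Matrix{R}}^\top_{\Matrix{A}_{e, i_3}} \UnderBar{\Matrix{R}}_{\Matrix{A}_{e, i_1}}$ have distinct principal axes, Shah's result determines both the rotational and translational components of $\Matrix{X}_j$ and $\Matrix{Y}_k$ uniquely from the exact data on this single edge. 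Invoking \Cref{thm:identifiability} then propagates this vertex-level uniqueness through the weakly connected graph $\Graph$, yielding a globally unique solution of \Cref{prob:HERW-inhomogeneous}.

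For the ``furthermore'' claim, I would exploit the fact that under exact measurements the homogeneous objective is a sum of squared residuals attaining value zero at the true solution, so any unconstrained minimizer must zero out every rotational and translational residual simultaneously. \Cref{cor:identifiable_rotations} already pins the set of unconstrained \emph{rotational} minimizers down to the one-parameter family $\{(\eta \Matrix{R}_{\Matrix{X}_v}^\star,\, \eta \Matrix{R}_{\Matrix{Y}_v}^\star)\}_{v \in \Vertices}$, indexed by a \emph{global} scalar $\eta \in \Real$; globality follows because the weak-connectedness argument in that corollary's proof carries a single kernel element of $\Matrix{Q}_{(j,k)}$ through to every vertex. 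Substituting $\Matrix{R}_{\Matrix{Y}_k} = \eta \Matrix{R}_{\Matrix{Y}_k}^\star$ into the translation residuals on edge $e$ and stacking over $i \in \Indices{N_e}$ yields a linear system of the form
\begin{equation*}
\Matrix{M}_e \bbm \Vector{t}_{\Matrix{X}_j} \\ \Vector{t}_{\Matrix{Y}_k} \ebm = \Vector{b}^\star - (\eta - 1)\, \Vector{t}_e,
\end{equation*}
where $\Vector{b}^\star$ is the right-hand side realized at $\eta = 1$ by the true solution. Subtracting the $\eta = 1$ instance shows that consistency for any $\eta \ne 1$ requires $(\eta - 1)\Vector{t}_e \in \Range(\Matrix{M}_e)$, and since the range is a subspace this reduces to $\Vector{t}_e \in \Range(\Matrix{M}_e)$. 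The hypothesis therefore forces $\eta = 1$, collapsing the rotations to the unique constrained minimizer and, by Part~1, pinning down the translations uniquely.

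The main obstacle I anticipate is the careful bookkeeping of the global scale $\eta$: the range condition is imposed on the data of a \emph{single} edge, yet the conclusion must rule out scaled unconstrained minimizers over the \emph{entire} graph. This requires surfacing, rather than merely reusing, the fact latent in the proof of \Cref{cor:identifiable_rotations} that the 1D family of unconstrained rotational minimizers is shared across all vertices, not permitted to have independent scales per edge. A secondary subtlety is that the argument deliberately avoids claiming that $\Matrix{M}_e$ has full column rank; uniqueness of translations at $\eta = 1$ is imported from Part~1 (via Shah's theorem and \Cref{thm:identifiability}) rather than re-derived from a local rank condition on edge~$e$.
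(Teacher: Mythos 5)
Your proposal is correct and follows essentially the same route as the paper: the first claim via Shah's results feeding \Cref{thm:identifiability}, and the second by restricting to the single global one-parameter family $\eta$ of unconstrained rotational minimizers from \Cref{cor:identifiable_rotations} and analyzing the translation residuals on edge $e$. The only (harmless) differences are that the paper concludes $\eta = 1$ by showing the augmented matrix $\bar{\Matrix{M}}_e$ obtained by appending the column $\Vector{t}_e$ to $\Matrix{M}_e$ is full column rank and hence injective --- which pins down $\eta$ and the translations in one step, using that $\Matrix{M}_e$ is full rank by Shah's Theorem~3.2 --- whereas you subtract the $\eta=1$ instance and then import translation uniqueness from the first part; also note that the paper attributes the translational uniqueness on edge $e$ to Shah's Theorem~3.2 rather than Theorem~2.3, which handles only the rotations.
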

\begin{proof}
\edit{To prove \mbox{\Cref{cor:identifiable_poses}}, begin by noting that} \Cref{cor:identifiable_rotations} gives us the unique rotation solution (up to scale) to \Cref{eq:exact_rotation-only_residuals} for all edges of $\Graph$.
Theorem 3.2 in \cite{shah_solving_2013} tells us that \edit{there are unique $\Matrix{R}_{\Matrix{X}_j}$ and $\Matrix{R}_{\Matrix{Y}_k}$ minimizing \mbox{\Cref{prob:HERW-inhomogeneous}},} satisfying the hypotheses of \Cref{thm:identifiability}.

To establish that this solution is also the unique unconstrained minimizer of the objective function, we can parameterize the 1-dimensional vector space of unconstrained minimizers of the rotational cost from \Cref{cor:identifiable_rotations} as $\eta \Matrix{R}_{\Matrix{X}_v}$ or $\eta \Matrix{R}_{\Matrix{Y}_v}$ for all $v \in \Vertices$ and a single $\eta \in \Real$. 
Since we have determined with \Cref{thm:identifiability} that there is a \emph{constrained} minimizer corresponding to $\eta = 1$ that attains a value of zero, it suffices to show that this is the only \emph{unconstrained} minimizer that sets the inhomogeneous translation residual associated with our informative edge $e$ to zero: 
\begin{equation}
	\UnderBar{\Matrix{R}}_{\Matrix{A}_{e,i}}\Vector{t}_{\Matrix{X}_j} + \UnderBar{\Vector{t}}_{\Matrix{A}_{e,i}} - \Vector{t}_{\Matrix{Y}_k} - \eta\Matrix{R}_{\Matrix{Y}_k}\UnderBar{\Vector{t}}_{\Matrix{B}_{e,i}} = \Vector{0}, \ i\in \Indices{N_e}.
\end{equation}
This system of equations can be written as the matrix equation
\begin{equation} \label{eq:edge_matrix}
	\bar{\Matrix{M}}_e \bbm \Vector{t}_{\Matrix{X}_j} \\ \Vector{t}_{\Matrix{Y}_k} \\ \eta \ebm = 
	\bbm \UnderBar{\Vector{t}}_{\Matrix{A}_{e,1}} \\ \vdots \\ \UnderBar{\Vector{t}}_{\Matrix{A}_{e,N_e}} \ebm,
\end{equation} 
where 
\begin{equation}
	\bar{\Matrix{M}}_e \Defined 
	\bbm 
		\Matrix{M}_e & \Vector{t}_e
	\ebm \in \Real^{3 N_e \times 7}.
\end{equation}
Since $\Matrix{M}_e$ is full rank~\citep[Theorem 3.2]{shah_solving_2013} and $\Vector{t}_e \notin \Range(\Matrix{M}_e)$, the augmented $\bar{\Matrix{M}}_e$ is also full rank.
Furthermore, since $\bar{\Matrix{M}}_e$ has more rows than columns ($N_e \geq 3$ by hypothesis), it is injective~\cite[Theorems 3.15 and 3.21]{axler2024linear} and any solution to \Cref{eq:edge_matrix} is therefore unique.
The first part of this proof has furnished us with a solution for $\eta = 1$, and we have demonstrated that this is the unique unconstrained minimizer of the objective in \Cref{prob:HERW-inhomogeneous} as desired.
\end{proof}
The requirement that measurements are made from at least three poses that differ by rotations about two distinct axes was first derived with unit quaternions by \cite{zhuang1994simultaneous}. 
\Cref{thm:identifiability} reveals that satisfying this condition for a single $\Matrix{X}_j$-$\Matrix{Y}_k$ pair is sufficient to ensure generalized RWHEC has a unique solution in the idealized case without noise. 
Additionally, note that since $\Matrix{M}_e$ is full rank and $N_e \geq 3$ by hypothesis, $\Range(\Matrix{M}_e)$ is a 6-dimensional proper subspace of $\Real^{3 N_e}$.
Consequently, \Cref{eq:range_requirement} holds for almost all values of $\Vector{t}_e \in \Real^{3 N_e}$, ensuring that in general a unique solution is also an unconstrained minimizer, which is pertinent to the \textit{a priori} optimality guarantees in \Cref{sec:global_optimality}. 
These results indicate that a practitioner calibrating a multi-sensor rig can ensure all parameters are identifiable by exciting their platform about two axes when taking measurements of a single target-sensor pair, so long as the directed graph of measurements $\Graph$ is weakly connected. 

\edit{It is noteworthy that our identifiability analysis is limited to the noise-free case. This decision ensures that our results focus on the desired motion of a sensor platform or target, which can be designed by practitioners, rather than stochastic measurements. Defining identifiability criteria for a noise-free model of RWHEC also dovetails neatly with the theory developed by \mbox{\cite{cifuentes2022local}}, which we apply in the proof of our main result in \mbox{\Cref{sec:global_optimality}}.} 

\section{Global Optimality Guarantees} \label{sec:global_optimality}

A solution to the convex SDP relaxation described in \Cref{sec:certifiable} can provide a \textit{post hoc} upper bound on global suboptimality via the duality gap. 
In this section, we demonstrate that the SDP relaxation also has \textit{a priori} global optimality guarantees when measurement noise is below a problem-dependent threshold. 
This is achieved by applying the following theorem~\citep[Theorem 3.9]{cifuentes2022local}: 
\begin{theorem} \label{thm:local_sdp_stability}
Consider the family of parametric QCQPs of the form 
\begin{equation} \label{eq:parametric_QCQP}
\begin{aligned}
	\min_{\Vector{x} \in \Real^n} \ & \Vector{x}^\top \Matrix{F}(\Vector{\theta}) \Vector{x} + \Vector{f}(\Vector{\theta})^\top \Vector{x} + c(\Vector{\theta}) \\
	\text{\emph{s.t.}} \ & g(\Vector{x}) = \Zero,
\end{aligned}
\end{equation}
where $\Matrix{F}: \Theta \rightarrow \Sym{n}$, $\Vector{f}: \Theta \rightarrow \Real^n$, and $c: \Theta \rightarrow \Real$ are continuous functions of parameter $\Vector{\theta} \in \Theta \subseteq \Real^d$, and the multivariate constraint function $g: \Real^n \rightarrow \Real^m$ is quadratic. 
Let $\UnderBar{\Vector{\theta}}$ be such that the objective function of \Cref{eq:parametric_QCQP} is strictly convex, and its unique unconstrained minimizer $\UnderBar{\Vector{x}}$ is also the minimizer of the constrained problem. 
If the Abadie constraint qualification (ACQ) holds at the solution $\UnderBar{\Vector{x}}$, then there is a neighbourhood of $\UnderBar{\Vector{\theta}}$ in which the primal and dual SDP relaxations of \Cref{eq:parametric_QCQP} are tight.
\end{theorem}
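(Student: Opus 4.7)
The plan is to establish tightness at the nominal parameter $\bar{\Vector{\theta}}$ with a trivial dual certificate, and then extend that certificate by continuity to a neighbourhood, exploiting the fact that positive (semi)definiteness is an open condition. First, at $\bar{\Vector{\theta}}$ the hypothesis $\Matrix{F}(\bar{\Vector{\theta}}) \succ 0$ makes the objective strongly convex with unique unconstrained minimizer $\bar{\Vector{x}} = -\tfrac{1}{2}\Matrix{F}(\bar{\Vector{\theta}})^{-1} \Vector{f}(\bar{\Vector{\theta}})$. By hypothesis $\bar{\Vector{x}}$ is also the constrained minimizer, so the Shor lift $\bar{\Matrix{X}} = \bbm 1 \\ \bar{\Vector{x}} \ebm \bbm 1 \\ \bar{\Vector{x}} \ebm^\top$ is a rank--one feasible point for the primal SDP relaxation whose value equals the QCQP optimum. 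Taking Lagrange multipliers $\Vector{\lambda} = \Zero$ makes the Lagrangian coincide with the objective, so the associated dual slack matrix is PSD; this certifies zero duality gap, and $\bar{\Matrix{X}}$ attains the SDP optimum at $\bar{\Vector{\theta}}$.

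Second, I would propagate this certificate to a neighbourhood $U$ of $\bar{\Vector{\theta}}$. Continuity of $\Matrix{F}, \Vector{f}, c$ preserves $\Matrix{F}(\Vector{\theta}) \succ 0$ on $U$, so the unconstrained minimizer $\Vector{x}_{\text{u}}(\Vector{\theta})$ varies continuously and the perturbed QCQP admits a local solution $\Estimate{\Vector{x}}(\Vector{\theta})$ near $\bar{\Vector{x}}$ by a standard perturbation argument. The role of ACQ is to guarantee that this local solution satisfies first--order KKT conditions with some multipliers $\Vector{\lambda}(\Vector{\theta})$. The Hessian of the Lagrangian is then $2\Matrix{F}(\Vector{\theta}) + \sum_j \lambda_j(\Vector{\theta}) \nabla^2 g_j$, which reduces to $2\Matrix{F}(\bar{\Vector{\theta}}) \succ 0$ at $\bar{\Vector{\theta}}$ (where the multipliers may be chosen zero). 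Because strict positive definiteness is open, and because the perturbation of the KKT residuals is controlled, one can arrange $\Vector{\lambda}(\Vector{\theta})$ small enough that the Lagrangian Hessian remains PSD on a (possibly smaller) neighbourhood $U' \subseteq U$. This yields a dual SDP certificate and tightness.

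The main obstacle is the selection of a continuous branch of multipliers $\Vector{\lambda}(\Vector{\theta})$ with $\Vector{\lambda}(\bar{\Vector{\theta}}) = \Zero$ under ACQ alone. Unlike LICQ, ACQ does not imply uniqueness of KKT multipliers, so the implicit function theorem cannot be applied to the KKT system directly, and one must invoke a stability theorem for generalized equations (for example, Robinson--style metric regularity or a polar--cone argument exploiting the explicit geometry of the tangent cone equality in ACQ) to produce such a selection. A cleaner alternative I would pursue in parallel is to skip the multiplier selection entirely and argue instead at the level of dual feasibility: at $\bar{\Vector{\theta}}$ the dual slack $\Matrix{Z}(\Zero) = \Matrix{F}(\bar{\Vector{\theta}})$ is strictly PSD on the relevant subspace, an open condition, so a nearby dual--feasible pair persists by upper semicontinuity, and pairing it with a rank--one primal lift constructed from $\Estimate{\Vector{x}}(\Vector{\theta})$ closes the duality gap on $U'$.

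Finally, I would verify that both the primal rank--one optimum and the dual certificate are jointly preserved on $U'$ by taking the intersection of the neighbourhoods constructed above, and that the complementary slackness and rank conditions match, so that the theorem's claim of \emph{primal and dual} tightness follows simultaneously rather than from two separate arguments.
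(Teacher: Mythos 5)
First, a point of comparison: the paper does not prove this statement at all --- it is imported verbatim as Theorem~3.9 of \cite{cifuentes2022local}, so your proposal is in effect an attempt to reprove an external result whose proof the authors deliberately outsource. Judged on its own terms, your outline gets the architecture right: the zero multiplier certifies $\bar{\Vector{x}}$ at $\bar{\Vector{\theta}}$ because $\bar{\Vector{x}}$ is the unconstrained minimizer of a strictly convex objective, and one then tries to propagate this dual certificate using the openness of positive definiteness. You are also honest that the continuous selection of multipliers is the crux. But that crux is not a technicality to be gestured at --- it is essentially the entire content of the theorem, and neither of your two proposed routes closes it.

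Concretely, for $\Vector{\theta}$ near $\bar{\Vector{\theta}}$ you need a multiplier $\Vector{\lambda}(\Vector{\theta})$ satisfying the \emph{exact} stationarity equation $2\Matrix{F}(\Vector{\theta})\Estimate{\Vector{x}}(\Vector{\theta}) + \Vector{f}(\Vector{\theta}) + \nabla g(\Estimate{\Vector{x}}(\Vector{\theta}))^\top \Vector{\lambda}(\Vector{\theta}) = \Zero$ together with $2\Matrix{F}(\Vector{\theta}) + \sum_j \lambda_j(\Vector{\theta})\,\nabla^2 g_j \succeq 0$. You cannot simply ``arrange $\Vector{\lambda}(\Vector{\theta})$ small enough'': the admissible multipliers are pinned down as the solution set of the linear system above, and the real question is whether that system admits a solution whose norm tends to zero as $\Vector{\theta} \to \bar{\Vector{\theta}}$. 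Because ACQ does not give uniqueness of multipliers, the coefficient matrix $\nabla g(\Estimate{\Vector{x}}(\Vector{\theta}))^\top$ is rank-deficient and its conditioning is not controlled a priori, so a naive error-bound or upper-semicontinuity argument can fail; this is precisely where the proof in \cite{cifuentes2022local} does its real work, using the tangent-cone characterization in ACQ to extract a well-conditioned subsystem and a convergent multiplier branch. Your ``cleaner alternative'' does not sidestep this: a ``nearby dual-feasible pair'' obtained by upper semicontinuity is useless unless it is in complementary slackness with the rank-one lift of $\Estimate{\Vector{x}}(\Vector{\theta})$, and enforcing that complementarity \emph{is} the multiplier-selection problem again. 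A secondary gap: you need KKT multipliers to exist at $\Estimate{\Vector{x}}(\Vector{\theta})$, which is a different point of the (fixed) feasible variety than $\bar{\Vector{x}}$, whereas ACQ is hypothesized only at $\bar{\Vector{x}}$; the cited proof handles this with a limiting argument rather than pointwise KKT at the perturbed minimizer. In short, your proposal is a correct skeleton with the load-bearing bone missing.
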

\noindent The ACQ is a weak regularity condition precisely stated in Definition 3.1 of \cite{cifuentes2022local}, and it guarantees the existence of Lagrange multipliers at $\UnderBar{\Vector{x}}$.
Informally, if the feasible set of \Cref{eq:parametric_QCQP} describes a smooth manifold, then the ACQ holds at $\Vector{x}$ if the rank of the Jacobian $\nabla g(\Vector{x})$ of the system of constraint equations at $\Vector{x}$ describing the feasible set is equal to the codimension of the smooth manifold.

By examining \Cref{eq:homogenized_cost_matrix}, \Cref{prob:HERW-inhomogeneous} can be written in the form of the objective of \Cref{eq:parametric_QCQP}:
\begin{equation} \label{eq:inhomogeneous_parameterization}
\begin{aligned}
	\Matrix{F}(\Vector{\theta}) &= \frac{1}{2}
	\bbm
		L(\Graph^\tau) & \Matrix{U}, \\
\Transpose{\Matrix{U}} & \Matrix{\Sigma} + L(\Graph^\rho)
 	\ebm, \\
 	\Vector{f}(\Vector{\theta}) &= \bbm \Vector{v} \\ \Vector{u} \ebm, \\
 	c(\Vector{\theta}) &= \frac{v}{2},
\end{aligned}
\end{equation}
where $\Vector{\theta} \Defined \Vectorize{\left\{\Matrix{A}_{e,i}, \Matrix{B}_{e,i} \right\}_{e \in \Edges, i \in \Indices{N_e}}}$ is a vectorization of the measurement data from which the elements on the right hand side of \Cref{eq:inhomogeneous_parameterization} are formed.
However, in order to apply \Cref{thm:local_sdp_stability} to generalized RWHEC, we need to demonstrate that the ACQ holds for our redundant implementation of $\LieGroupSO{3}$ constraints in \Cref{eq:SO3_quadratic_variety}:

\begin{theorem} \label{thm:SO3-ACQ}
Let $g: \Real^{9} \rightarrow \Real^{21}$ be a vectorization of the quadratic constraints encoding $\LieGroupSO{3}$ in \Cref{eq:SO3_quadratic_variety}. 
Then the ACQ holds for all $\Vector{x}$ satisfying $g(\Vector{x}) = \Zero$, and by extension for a ``stacked" vectorization of constraints encoding $\LieGroupSO{3}^n$. 
\end{theorem}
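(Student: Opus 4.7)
The plan is to leverage the fact that the zero set of $g$ is exactly $\LieGroupSO{3}$, a smooth three-dimensional embedded submanifold of $\Real^9$. This global manifold structure supplies one side of the Jacobian rank bound ``for free,'' while a single orthogonality constraint will supply the matching side. No detailed bookkeeping of the redundant handedness constraints is required.

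For the upper bound, I observe that for any feasible $\Vector{x} = \Vectorize{\Matrix{R}}$, differentiating each scalar component of $g$ along any smooth curve in $\LieGroupSO{3}$ through $\Matrix{R}$ shows that $T_{\Matrix{R}} \LieGroupSO{3} \subseteq \ker \nabla g(\Vector{x})$. Since $T_{\Matrix{R}} \LieGroupSO{3}$ has dimension three, this forces $\Rank{\nabla g(\Vector{x})} \leq 6$, which is the codimension of $\LieGroupSO{3}$ in $\Real^9$. For the matching lower bound, I restrict attention to the six scalar constraints in the submap $\tilde g(\Matrix{R}) \Defined \Matrix{R}^\top \Matrix{R} - \Identity$. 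Its directional derivative is $D \tilde g(\Matrix{R})[\Matrix{V}] = \Matrix{R}^\top \Matrix{V} + \Matrix{V}^\top \Matrix{R}$, and the invertible change of variables $\Matrix{V} = \Matrix{R} \Matrix{W}$ (a linear isomorphism because $\Matrix{R}$ is orthogonal and hence nonsingular) reduces this to $\Matrix{W} + \Matrix{W}^\top$. The kernel in $\Matrix{W}$ coordinates is the three-dimensional space of skew-symmetric matrices $\LieAlgebraSO{3}$, yielding $\ker D \tilde g(\Matrix{R}) = \Matrix{R} \cdot \LieAlgebraSO{3} = T_{\Matrix{R}} \LieGroupSO{3}$ and $\Rank{\nabla \tilde g(\Vector{x})} = 6$. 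Since these six rows appear inside $\nabla g(\Vector{x})$, we obtain $\Rank{\nabla g(\Vector{x})} \geq 6$, and combining both bounds yields equality. Because $\{g=0\}$ is a smooth manifold whose tangent cone equals its tangent space at every point, this rank equality is precisely the ACQ condition stated earlier in the excerpt.

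For the stacked extension to $\LieGroupSO{3}^n$, the constraint function decouples into $n$ independent copies of $g$, so its Jacobian is block-diagonal with $n$ blocks each of rank six. The total rank is $6n$, matching the codimension $9n - 3n = 6n$ of $\LieGroupSO{3}^n$ in $\Real^{9n}$, so ACQ again holds at every feasible point.

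I do not expect a serious obstacle. The key insight is that once we know (from the global geometry of $\LieGroupSO{3}$) that $\Rank{\nabla g}$ cannot exceed the manifold codimension, any single regular quadratic encoding of $\LieGroupO{3}$ suffices to saturate the bound, and the remaining redundant rows are absorbed automatically. The only technical point meriting care is the change of variables $\Matrix{V} = \Matrix{R}\Matrix{W}$, which relies on the orthogonality of $\Matrix{R}$; were we trying to prove an analogous statement at an arbitrary matrix (not lying in $\LieGroupSO{3}$), this shortcut would fail and a more careful analysis at the identity followed by a left-translation argument would be needed instead.
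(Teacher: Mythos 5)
Your proof is correct, and it reaches the conclusion by a more direct route than the paper. The paper's proof invokes the general redundant-constraint lemma of Appendix B (Theorem~\ref{thm:constraint_qualification_with_redundant_constraints}): it splits $g$ into the row-orthogonality block $g_1$ (for which LICQ holds, citing Boumal) and the remaining fifteen scalar constraints $g_2$, and then argues that $g_2$ is \emph{locally constant} on the manifold cut out by $g_1$ (the column-orthogonality block being an algebraic rearrangement of $g_1$, and the handedness block being locally constant because $\LieGroupO{3}\setminus\LieGroupSO{3}$ is disconnected from $\LieGroupSO{3}$), so that the redundant constraints neither change the feasible set locally nor enlarge the row space of the Jacobian beyond $\ker\nabla g_1$. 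You instead sandwich the rank directly: the containment $T_{\Matrix{R}}\LieGroupSO{3}\subseteq\ker\nabla g$ caps the rank at the codimension $6$, and the explicit submersion computation for $\Matrix{R}^\top\Matrix{R}-\Identity$ (via $\Matrix{V}=\Matrix{R}\Matrix{W}$, kernel $\LieAlgebraSO{3}$) supplies six independent rows, so no bookkeeping of the redundant blocks is needed at all. The two arguments share the same core fact --- one orthogonality block is already a submersion at every point of $\LieGroupO{3}$ --- but your upper-bound step replaces the paper's local-constancy argument, which makes your version shorter and self-contained for this specific constraint set, while the paper's lemma is reusable for other redundantly parameterized feasible sets. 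The only point worth making explicit for full rigor is the step you gesture at in your final sentence: rank equal to codimension, together with the always-valid inclusion of the tangent cone of $\{g=0\}=\LieGroupSO{3}$ in $\ker\nabla g(\Vector{x})$ and the fact that both are linear subspaces of the same dimension, forces the two to coincide, which is exactly the ACQ.
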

\begin{proof}
We will apply \Cref{thm:constraint_qualification_with_redundant_constraints} from \Cref{app:constraint-qualification} to show that the ACQ holds for $g$ on $\LieGroupSO{3}$. 
We begin by defining the decomposition $g = (g_1, g_2)$ as in \Cref{eq:constraint-decomposition}. 
Let $g_1: \Real^9 \rightarrow \Real^6$ encode a vectorization of the six scalar equations in the symmetric row orthogonality constraint in \Cref{eq:row_orthogonality}, and let $g_2: \Real^9 \rightarrow \Real^{15}$ encode Equations (\ref{eq:column_orthogonality}) and (\ref{eq:handedness}). 
\edit{The constraint set described by $g_1$ is a symmetric vectorization of the function $h$ in Section 7.3 of \mbox{\cite{boumal2023introduction}} for dimension $p=3$.
\mbox{\cite{boumal2023introduction}} shows that the differential of $h$ has rank 6, proving that the Jacobian of $g_1$ has full rank everywhere and therefore satisfies the linear independence constraint qualification (LICQ) at all vectorized inputs in $\LieGroupO{3} \subset \Real^{3\times 3}$. 
This satisfies the hypothesis of part (a) in \mbox{\Cref{thm:constraint_qualification_with_redundant_constraints}}.}
%
It remains to demonstrate that the hypothesis of part (b) holds. 
The first block of $g_2$ (corresponding to \Cref{eq:column_orthogonality}) is an algebraic rearrangement of $g_1$ and therefore locally constant on $\LieGroupSO{3}$. 
The second block (\Cref{eq:handedness}) is equivalent to $\Determinant{\Matrix{R}} = 1$ when combined with $g_1$~\citep{tron2015inclusion}.
Since $\LieGroupO{3}\setminus \LieGroupSO{3}$ is disconnected from $\LieGroupSO{3}$, this block is also locally constant on $\LieGroupSO{3}$.
We have satisfied the hypotheses of \Cref{thm:constraint_qualification_with_redundant_constraints}, demonstrating that ACQ holds at $\Vector{x} = \Vectorize{\Matrix{R}}$ for all $\Matrix{R} \in \LieGroupSO{3}$. 
Since the constraint set for each $\LieGroupSO{3}$-valued variable is independent of the others, the ACQ holds for any point in $\LieGroupSO{3}^N$ as well, as the rank of a block diagonal matrix is the sum of the ranks of its blocks.
\end{proof}


\Cref{thm:SO3-ACQ} establishes that redundant polynomial parameterizations of $\LieGroupSO{3}$ constraints do not seriously interfere with the regularity properties of optimization problems.
This result is important because the addition of redundant rotation constraints in a primal QCQP can improve the tightness of its SDP relaxations~\citep{tron2015inclusion, dumbgen2023globally}.
Additionally, it is worth noting that \Cref{thm:SO3-ACQ} is not limited to the various forms of generalized RWHEC studied in this work: it applies to \emph{any} QCQP whose only constrained variables are in $\LieGroupSO{3}$ or $\LieGroupSE{3}$.
We are now prepared to prove our main result, which is a straightforward application of \Cref{thm:local_sdp_stability}: 
\begin{proposition} \label{thm:global_optimality}
Let $\UnderBar{\Vector{\theta}} \Defined \Vectorize{\left\{\Matrix{A}_{e,i}, \Matrix{B}_{e,i} \right\}_{e \in \Edges, i \in \Indices{N_e}}}$ consist of exact measurements parameterizing an instance of \Cref{prob:HERW-inhomogeneous} as per \Cref{eq:parametric_QCQP} and \Cref{eq:inhomogeneous_parameterization}. 
If there is a unique solution $\UnderBar{\Vector{x}}$ that is also an unconstrained minimizer of the objective function, then there is a neighbourhood of $\UnderBar{\Vector{\theta}}$ in which the SDP relaxation of \Cref{prob:HERW-inhomogeneous} is tight. 
\end{proposition}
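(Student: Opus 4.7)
The plan is to verify the three hypotheses of \Cref{thm:local_sdp_stability} for the parametric QCQP presentation of \Cref{prob:HERW-inhomogeneous} obtained via the mappings in \Cref{eq:inhomogeneous_parameterization}: (i) strict convexity of the objective at $\bar{\Vector{\theta}}$, (ii) coincidence of a unique unconstrained minimizer with the constrained minimizer, and (iii) satisfaction of the ACQ at that minimizer. Once these are in place, the conclusion of \Cref{thm:local_sdp_stability} will directly yield a neighbourhood of $\bar{\Vector{\theta}}$ on which the primal and dual SDP relaxations of \Cref{prob:HERW-inhomogeneous} are tight.

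For conditions (i) and (ii), I would interpret the hypothesis of the proposition as asserting that $\bar{\Vector{x}}$ is the unique unconstrained minimizer of the (noiseless) quadratic objective defined by \Cref{eq:inhomogeneous_parameterization}, and that it is also feasible and hence optimal for the constrained problem. Uniqueness of the unconstrained minimum of a quadratic form forces its Hessian $\Matrix{F}(\bar{\Vector{\theta}})$ to be positive definite, which supplies strict convexity and collapses (i)--(ii) into the proposition's hypothesis. \Cref{cor:identifiable_poses} already exhibits a concrete geometric scenario (two distinct principal axes of relative rotation plus the range condition in \Cref{eq:range_requirement}) under which this combined hypothesis is automatically satisfied, so the proposition is not vacuous.

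For condition (iii), observe that the constraint function $g$ in \Cref{eq:parametric_QCQP} only encodes the $\LieGroupSO{3}$ constraints of \Cref{eq:SO3_quadratic_variety} on each of the $M+P$ rotation variables; the translation variables $\Vector{t}_{\Matrix{X}_j}, \Vector{t}_{\Matrix{Y}_k}$ are genuinely unconstrained. The Jacobian $\nabla g(\bar{\Vector{x}})$ is therefore identically zero on translation coordinates and block-diagonal across the rotation coordinates, so its rank on the feasible set $\Real^{3(M+P)} \times \LieGroupSO{3}^{M+P}$ equals the codimension $3(M+P)$ of the smooth manifold in the ambient Euclidean space. \Cref{thm:SO3-ACQ} exactly supplies this rank condition for the stacked $\LieGroupSO{3}^{M+P}$ constraints, confirming (iii) at every feasible $\bar{\Vector{x}}$.

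The main technical subtlety I anticipate is a gap between the hypothesis and what \Cref{thm:local_sdp_stability} literally requires: the proposition states that the unique \emph{constrained} minimizer is \emph{an} unconstrained minimizer, whereas \Cref{thm:local_sdp_stability} demands a \emph{unique} unconstrained minimizer together with strict convexity. I would close this gap either by strengthening the hypothesis to match the conclusion of \Cref{cor:identifiable_poses} explicitly, or by arguing from the structure of $\Matrix{F}(\bar{\Vector{\theta}})$ in \Cref{eq:inhomogeneous_parameterization} that under exact, weakly-connected measurement data satisfying \Cref{cor:identifiable_rotations} the quadratic form has no nontrivial null direction along which the rotation residuals and translation residuals can both vanish, pinning the unconstrained minimum to $\bar{\Vector{x}}$ alone. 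With that pinned down, invoking \Cref{thm:local_sdp_stability} completes the argument.
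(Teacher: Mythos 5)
Your proposal follows essentially the same route as the paper: check continuity of the parametric objective, establish strict convexity by noting that uniqueness of the unconstrained minimizer of a convex quadratic forces $\Matrix{F}(\bar{\Vector{\theta}}) \succ \Zero$, invoke \Cref{thm:SO3-ACQ} for the ACQ, and then apply \Cref{thm:local_sdp_stability}. The ``gap'' you flag is resolved in the paper simply by reading the hypothesis as asserting that $\bar{\Vector{x}}$ is \emph{the} unique unconstrained minimizer (which is precisely what \Cref{cor:identifiable_poses} delivers via the condition $\Vector{t}_e \notin \Range(\Matrix{M}_e)$), so no strengthening is needed; note also a small slip in your ACQ remark: the codimension of the feasible set $\Real^{3(M+P)} \times \LieGroupSO{3}^{M+P}$ is $6(M+P)$, not $3(M+P)$.
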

\begin{proof}
We must show that $\UnderBar{\Vector{\theta}}$ defines a QCQP satisfying the hypotheses of \Cref{thm:local_sdp_stability}.
We proved that the ACQ holds in \Cref{thm:SO3-ACQ}, and the objective's dependence on $\Vector{\theta}$ is polynomial and therefore continuous. 
It remains to demonstrate that the objective is strictly convex, which is equivalent to demonstrating that $\Matrix{F}(\UnderBar{\Vector{\theta}})$ is positive definite~\cite[Example 3.2]{boyd2004convex}. 
Since $\UnderBar{\Vector{x}}$ is the unique unconstrained minimizer by hypothesis and our objective is convex, $\Matrix{F}(\UnderBar{\Vector{\theta}}) \succ \Zero$~\cite[Example 4.5]{boyd2004convex} and our claim follows.  
\end{proof}

\Cref{thm:global_optimality} tells us that a noisy instance of the RWHEC problem has a tight SDP relaxation so long as its measurements are sufficiently close to noise-free measurements describing a problem with a unique solution.
The uniqueness result in \Cref{thm:identifiability} therefore gives us simple geometric criteria by which to determine whether an instance of RWHEC is well-posed and globally solvable via SDP relaxation under moderate noise. 
Note that the linear independence hypothesis $\Vector{t}_e \notin \Range(\Matrix{M}_e)$ in \Cref{cor:identifiable_poses} which ensures that the unique minimizer is also the unconstrained minimizer plays a key role in our proof of \Cref{thm:global_optimality}. 
\edit{Finally, Proposition 3.5 of \mbox{\cite{cifuentes2022local}} is used in the proof of \mbox{\Cref{thm:local_sdp_stability}} and additionally ensures that the problem instances in the neighbourhood of $\UnderBar{\Vector{\theta}}$ have a unique solution.}

\edit{From a practical standpoint, \mbox{\Cref{thm:global_optimality}} implies that any instance of \mbox{\Cref{prob:HERW-inhomogeneous}} whose ground-truth geometric configuration $\UnderBar{\Vector{x}}$ is \emph{identifiable} (given noiseless measurements) can be \emph{certifiably optimally solved} using \mbox{\Cref{alg:certifiable}} for moderate measurement noise.  In other words: instances of the generalized RWHEC problem are generically \emph{efficiently optimally solvable} in practice whenever they are \emph{statistically well-posed}.  To the best of our knowledge, this result is the first \textit{a priori} global optimality guarantee of its kind for a multi-sensor calibration problem.
Although approximating the size of the region of SDP-tightness around noise-free problem instances is computationally impractical with presently available tools~\mbox{\citep[Thm. 3.11]{cifuentes2022local}}, we refer to this guarantee as \textit{a priori} because it proceeds solely from the geometry of the problem's ground-truth configuration $\UnderBar{\Vector{x}}$.
This distinguishes \mbox{\Cref{thm:global_optimality}} from the standard \textit{post hoc} numerical optimality certification procedure provided by the duality gap in \mbox{\Cref{sec:certifiable}}, which requires us to observe noisy measurements and solve an SDP.
%
}

\section{Simulation Experiments}
\label{sec:HERW-sim}
In this section, we use synthetic data from two simulated robotic systems to compare the accuracy and robustness of our algorithm with a variety of other RWHEC methods.
The first system consists of a robotic manipulator with a hand-mounted camera observing a visual fiducial target.
Using the simulated manipulator hand poses and camera-target measurements, this system forms a RWHEC problem with one $\Matrix{X}$ and one $\Matrix{Y}$ variable.
The second system generates data for a RWHEC problem with four $\Matrix{X}$s and one $\Matrix{Y}$ by simulating a robotic manipulator with a hand-mounted target observed by four stationary cameras.
For each system, we generate data to study both standard and monocular RWHEC.

To generate measurements for the robotic manipulator with a hand-mounted camera, we simulate a camera trajectory relative to the target and fix groundtruth values for $\Matrix{X}$ and $\Matrix{Y}$.
At each point in time indexed by $i$, we use the camera pose relative to the target to determine the camera-target transformation $\Matrix{B}_i$.
By combining the ground truth values of $\Matrix{X}$, $\Matrix{Y}$, and $\Matrix{B}_i$, we calculate the ground truth values for each $\Matrix{A}_i$.
Similarly, for the cameras observing a robotic manipulator with a hand-mounted target, the pose of camera $j$ and the target pose at time $i$ are used to calculate $\Matrix{B}_{ij} \Defined \Matrix{B}_{e_j, i}$, where $e_j$ is the unique edge in the problem graph associated with camera $j$.
Once again, the ground truth values for $\Matrix{X}_j$, $\Matrix{Y}$, and $\Matrix{B}_{ij}$, are used to compute the ground truth measurements for each $\Matrix{A}_{ij} \Defined \Matrix{A}_{e_j, i}$.
We add samples from a Gaussian distribution $\NormalDistribution{\Vector{0}}{\sigma^2\Identity}$ to the ground truth translation measurements $\bar{\Vector{t}}_{\Matrix{B}_i}$ and $\bar{\Vector{t}}_{\Matrix{B}_{ij}}$.
Furthermore, we right-perturb the noiseless rotation measurements $\bar{\Matrix{R}}_{\Matrix{B}_i}$ and $\bar{\Matrix{R}}_{\Matrix{B}_{ij}}$ with samples from an isotropic Langevin distribution $\Matrix{R}_n \sim \mathrm{Lang}\left(\Identity,\kappa\right)$.
In each individual experimental trial, the noise parameters are fixed for all measurements: we consider translation noise with standard deviations of $\sigma = 1$ cm or $\sigma = 5$ cm, and rotation noise concentrations of $\kappa=125$ or $\kappa=12$. 
In the monocular RWHEC studies, we scale the translation component of the noisy $\Matrix{B}_i$ measurements by $\alpha = 0.5$.
Finally, we generate 100 random runs for each experiment.

Using the data from our simulation studies, we compare the estimated parameter accuracy of our standard and monocular RWHEC methods to those of five standard RWHEC methods and one monocular RWHEC method.
The five benchmark RWHEC solvers are the two-stage closed-form methods in \cite{shah_solving_2013} and \cite{wang2022accurate}, the certifiable method in \cite{horn_extrinsic_2023}, the probabilistic method in \cite{dornaika1998simultaneous}, and a local on-manifold method (LOM) of our own design (see Appendix \ref{app:local} for details of our implementation).
For the monocular RWHEC scenarios, we compare the accuracy of our algorithm to LOM.
When a scenario has more than one $\Matrix{X}$ to solve for, we compare our methods with \cite{wang2022accurate}, \cite{horn_extrinsic_2023}, and LOM.

For the solvers that require initialization, we provide random initial values, the solution from \cite{wang2022accurate}, or parameters close to the ground truth.
For each experiment, our method and the method in \cite{horn_extrinsic_2023} are randomly initialized.
Consequently, we also initialize the solver in \cite{dornaika1998simultaneous} and LOM using the solution from the method in \cite{wang2022accurate}.
However, the method in \cite{wang2022accurate} cannot solve monocular RWHEC problems.
In the monocular RWHEC studies, we initialize LOM with either random calibration parameters that are within 10 cm and 10$^\circ$ of the ground truth values (close), \edit{or rotations sampled uniformly from $\LieGroupSO{3}$ and translations drawn from $[-1, 1]^3$ (random)}.
Additionally, we set the initial scale estimate to 1.

Our globally optimal solver was implemented in Julia with the JuMP modelling language~\citep{dunning2017jump}, which provides convenient access to several general-purpose optimization methods. 
For all experiments in this section and \Cref{sec:HERW-rw}, our algorithm used the conic operator splitting method (COSMO) solver with default parameters except for residual norm convergence tolerances of $\epsilon_{\mathrm{abs}} = \epsilon_{\mathrm{rel}} = 5 \times 10^{-11}$, initial step size $\rho = 10^{-4}$, and a maximum of one million iterations to ensure convergence~\citep{garstka2021cosmo}.
LOM was implemented in Ceres~\citep{agarwal2022ceres}, and we used default settings with the exception of a maximum of 1000 iterations, a function tolerance of $10^{-15}$, and a gradient tolerance of $10^{-19}$ for all experiments. 

Finally, it is important to note that the problem formulation solved by LOM uses a standard multiplicative exponentiated Gaussian noise model~\citep{barfoot2024state}, and therefore does not minimize the same MLE objective function derived in \Cref{subsec:MLE}. 
Therefore, we cannot expect the global minima of LOM's objective to exactly match those of the problem formulation solved by our convex method. 
We partially mitigate the experimental effects of this discrepancy by roughly matching the intensity of the isotropic variance $\sigma_R^2$ described in Appendix \ref{app:local} to the Langevin concentration parameter $\kappa$ using the asymptotic Gaussian approximation of the von Mises distribution described in Appendix A of \cite{rosen2019sesync}.

\subsection{Robot Arm Poses on a Sphere} \label{subsec:robot_arm_poses_on_sphere}

In this pair of standard and monocular RWHEC experiments, we consider the problem of extrinsically calibrating a hand-mounted camera and a target, relative to a robotic manipulator.
A diagram of the type of system we simulate is shown in \Cref{fig:HERW}.
In particular, we estimate $\Matrix{X} = \Matrix{T}_{hc}$ and $\Matrix{Y} = \Matrix{T}_{bt}$, where $\CoordinateFrame{b}$, $\CoordinateFrame{t}$, $\CoordinateFrame{h}$, $\CoordinateFrame{c}$, are the manipulator base, target, hand, and camera reference frames, respectively.
For each problem instance, the camera collects measurements from 100 distinct poses along a trajectory on the surface of a sphere while its optical axis $\hat{\Vector{z}}$ is pointed towards the target,  $\hat{\Vector{y}}$ points towards the south pole of the sphere, and $\hat{\Vector{x}}$ completes the orthonormal frame.
Although they contain rotations about two distinct axes and therefore satisfy the identifiability conditions of \Cref{cor:identifiable_poses}, trajectories generated in this manner were empirically observed to not have a unique solution for the monocular RWHEC problem. 
We found that identifiable monocular RWHEC problem instances could be created by collecting measurements from the surface of two spheres: one with a unit radius, and another with a radius of 0.3 m. 



The mean and standard deviation of the estimated translation ($t_{x,err}$, $t_{y,err}$) and rotation ($r_{x,err}$, $r_{y,err}$)  errors are shown in \Cref{tab:HERW-sim-result}.
Generally, the least accurate methods are the two-stage closed-form solvers from \cite{shah_solving_2013} and \cite{wang2022accurate}, while our method and LOM are the most accurate.
In spite of the differences in their formulation, our method and LOM return extremely similar solutions, which outperform the other methods by up to 50 mm and 3 degrees.
\edit{However, the accuracy of LOM degrades significantly when it is initialized randomly, whereas our method returns the same global minimizer for any initialization.}
%
%
Finally, all algorithms have roughly the same accuracy when $\kappa = 125$ and $\sigma = 5$ cm.
In this case, the rotation measurement noise is low, so the two-stage closed-form solvers are more likely to return accurate rotation estimates.
\begin{table*}[t!]
	\centering
	\caption{Calibration results for the experiments in \Cref{subsec:robot_arm_poses_on_sphere} with known scale. The values in each row are estimated by a different algorithm. The mean error magnitude and standard deviation are given in each cell. Dornaika and LOM are initialized with the method in \cite{wang2022accurate}. \edit{Additionally, these results include tests where LOM is initialized randomly and often converges to local minima.}}
	\label{tab:HERW-sim-result}
	\begin{threeparttable}
		\begin{tabular}{clcccc}
			\toprule
			Noise Level & Method (Init.) & $t_{x,err}$ [mm] & $r_{x,err}$ [deg] & $t_{y,err}$ [mm] & $r_{y,err}$ [deg]\\
			\midrule
			\multirow{7}{*}{\begin{tabular}{@{}c@{}}$\kappa=125,$\\ $\sigma=1$ cm\end{tabular}} & Shah & 20.6 $\pm$ 10.5 & 1.37 $\pm$ 0.55 & 9.9 $\pm$ 4.8 & 1.34 $\pm$ 0.57 \\
			& Wang & 20.6 $\pm$ 10.5 & 1.37 $\pm$ 0.55 & 9.9 $\pm$ 4.8 & 1.34 $\pm$ 0.57 \\
			& Dornaika (Wang) & 19.9 $\pm$ 10.2 & 1.25 $\pm$ 0.53 & 8.8 $\pm$ 4.3 & 1.20 $\pm$ 0.54 \\
			& Horn & 18.5 $\pm$ 9.4 & 1.16 $\pm$ 0.51 & 8.4 $\pm$ 4.2 & 1.10 $\pm$ 0.51 \\
			& \edit{LOM (random)} & \edit{17.4 $\pm$ 8.98} & \edit{2.35 $\pm$ 1.05} & \edit{3.72 $\pm$ 2.11} & \edit{2.22 $\pm$ 1.05} \\
			& LOM (Wang) & 11.1 $\pm$ 5.82 & 0.78 $\pm$ 0.37 & \textbf{3.69 $\pm$ 2.12} & 0.63 $\pm$ 0.38\\
			& Ours & \textbf{10.9 $\pm$ 5.6} & \textbf{0.77 $\pm$ 0.36} & 3.71 $\pm$ 2.13 & \textbf{0.62 $\pm$ 0.36} \\
			\midrule
			\multirow{7}{*}{\begin{tabular}{@{}c@{}}$\kappa=125,$\\ $\sigma=5$ cm\end{tabular}} & Shah & 31.2 $\pm$ 15.0 & 1.61 $\pm$ 0.64 & 21.2 $\pm$ 10.0 & 1.57 $\pm$ 0.66 \\
			& Wang & 31.2 $\pm$ 15.0 & 1.61 $\pm$ 0.64 & 21.2 $\pm$ 10.0 & 1.57 $\pm$ 0.66 \\
			& Dornaika (Wang) & 30.6 $\pm$ 14.5 & 1.49 $\pm$ 0.62 & 20.4 $\pm$ 9.7 & 1.44 $\pm$ 0.63 \\
			& Horn & 29.1 $\pm$ 13.6 & \textbf{1.42 $\pm$ 0.60} & 19.9 $\pm$ 9.2 & \textbf{1.36 $\pm$ 0.59} \\
			& \edit{LOM (random)} & \edit{28.5 $\pm$ 13.3} & \edit{2.88 $\pm$ 1.24} & \edit{18.5 $\pm$ 8.72} & \edit{2.77 $\pm$ 1.23}\\
			& LOM (Wang) & 28.5 $\pm$ 13.4 & 1.45 $\pm$ 0.62 & \textbf{18.5 $\pm$ 8.8} & 1.39 $\pm$ 0.62\\
			& Ours & \textbf{28.4 $\pm$ 13.0} & \textbf{1.42 $\pm$ 0.63} & \textbf{18.5 $\pm$ 8.7} & \textbf{1.36 $\pm$ 0.61} \\
			\midrule
			\multirow{7}{*}{\begin{tabular}{@{}c@{}}$\kappa=12,$\\ $\sigma=1$ cm\end{tabular}} & Shah & 65.5 $\pm$ 38.3 & 4.34 $\pm$ 1.95 & 31.8 $\pm$ 17.2 & 4.41 $\pm$ 1.98\\
			& Wang & 65.5 $\pm$ 38.3 & 4.34 $\pm$ 1.95 & 31.8 $\pm$ 17.2 & 4.41 $\pm$ 1.98\\
			& Dornaika (Wang) & 63.4 $\pm$ 37.3 & 3.92 $\pm$ 1.87 & 27.4 $\pm$ 16.2 & 3.92 $\pm$ 1.90 \\
			& Horn & 45.0 $\pm$ 24.8 & 3.30 $\pm$ 1.69 & 25.8 $\pm$ 14.1 & 3.19 $\pm$ 1.55   \\
			& \edit{LOM (random)} & \edit{56.7 $\pm$ 33.5} & \edit{7.45 $\pm$ 3.70} & \edit{4.17 $\pm$ 1.99} & \edit{7.35 $\pm$ 3.79} \\
			& LOM (Wang) & 15.2 $\pm$ 10.0  & 1.84 $\pm$ 0.84 & \textbf{3.4 $\pm$ 1.8} & 0.88 $\pm$ 0.59\\
			& Ours & \textbf{15.1 $\pm$ 10.0} & \textbf{1.81 $\pm$ 0.83} & \textbf{3.4 $\pm$ 1.8} &\textbf{0.87 $\pm$ 0.59} \\
			\midrule
			\multirow{7}{*}{\begin{tabular}{@{}c@{}}$\kappa=12,$\\ $\sigma=5$ cm\end{tabular}} & Shah & 71.9 $\pm$ 44.7 & 4.74 $\pm$ 2.13 & 37.8 $\pm$ 19.3 & 4.63 $\pm$ 2.21\\
			& Wang & 71.9 $\pm$ 44.7 & 4.74 $\pm$ 2.13 & 37.8 $\pm$ 19.3 & 4.63 $\pm$ 2.21\\
			& Dornaika (Wang) & 69.9 $\pm$ 43.1 & 4.33 $\pm$ 2.08 & 33.9 $\pm$ 17.9 & 4.13 $\pm$ 2.14 \\
			& Horn & 50.6 $\pm$ 28.4 & 3.69 $\pm$ 1.72 & 31.0 $\pm$ 16.2 & 3.40 $\pm$ 1.72 \\
			& \edit{LOM (random)} & \edit{63.1 $\pm$ 38.4} & \edit{8.35 $\pm$ 4.15} & \edit{19.2 $\pm$ 8.93} & \edit{7.85 $\pm$ 4.22} \\
			& LOM (Wang) & 48.3 $\pm$ 27.3 & 3.18 $\pm$ 1.67 & \textbf{18.7 $\pm$ 9.04} & 2.72 $\pm$ 1.64 \\
			& Ours & \textbf{47.7 $\pm$ 27.0} & \textbf{3.12 $\pm$ 1.67} & 18.8 $\pm$ 9.02 & \textbf{2.68 $\pm$ 1.65} \\
			\bottomrule
		\end{tabular}
	\end{threeparttable}
\end{table*}

\Cref{tab:mono-HERW-sim-result} contains the mean and standard deviation of the estimated translation ($t_{x,err}$, $t_{y,err}$), rotation ($r_{x,err}$, $r_{y,err}$), and scale error ($\alpha_{err}$) for the monocular RWHEC study.
As is the case for the standard RWHEC experiment, our method and LOM have very similar accuracy \edit{when LOM is accurately initialized, but the performance of LOM degrades when poorly initialized.}
\edit{In contrast, our method finds a global minimum and therefore} does not require an \edit{accurate} initial estimate of $\Matrix{X}$, $\Matrix{Y}$, or $\alpha$.
Even though monocular RWHEC is ostensibly more challenging than RWHEC, both monocular RWHEC solvers return more accurate estimates than their corresponding RWHEC methods. \endnote{\edit{A full investigation of monocular identifiability is beyond the scope of this work, but in this particular case we conjecture that this performance gap may be explained by the different camera trajectories used.} Collecting data on the surface of two spheres may have resulted in more \edit{a more informative dataset}, improving the accuracy of the estimated parameters (see, e.g., \cite{Grebe_2022_Study} for more insight into the effect of sensor trajectory on calibration accuracy).}
%
%
 
%
\begin{table*}[t!]
	\centering
	\caption{Calibration results for the experiments in \Cref{subsec:robot_arm_poses_on_sphere} with unknown scale. The values in each row are estimated by a different algorithm. The mean error magnitude and standard deviation are given in each cell. The close initialization of LOM is with parameters 10$^\circ$ and 10 cm from the ground truth values and a scale of 1.}
	\label{tab:mono-HERW-sim-result}
	\begin{threeparttable}
		\begin{tabular}{clccccc}
			\toprule
			Noise Level & Method (Init.) & $t_{x,err}$ [mm] & $r_{x,err}$ [deg] & $t_{y,err}$ [mm] & $r_{y,err}$ [deg] & $|\alpha_{err}|/\alpha$ \\
			\midrule
			\multirow{3}{*}{\begin{tabular}{@{}c@{}}$\kappa=125,$\\ $\sigma=1$ cm\end{tabular}} & \edit{LOM (random)} & \edit{578 $\pm$ 579} & \edit{237 $\pm$ 163} & \edit{630 $\pm$ 637} & \edit{237 $\pm$ 164} & \edit{9.86$\mathrm{e}$-1 $\pm$ 8.82$\mathrm{e}$-1} \\
			& LOM (close) & \textbf{4.61 $\pm$ 2.39} & \textbf{0.51 $\pm$ 0.21} & \textbf{3.82 $\pm$ 1.93} & \textbf{0.22 $\pm$ 0.12} & 1.78$\mathrm{e}$-3 $\pm$ 1.54$\mathrm{e}$-3 \\
			& Ours & 4.71 $\pm$ 2.43 & \textbf{0.51 $\pm$ 0.22} & \textbf{3.82 $\pm$ 1.91} & 0.250 $\pm$ 0.113 & \textbf{1.76$\mathrm{\textbf{e}}$-3 $\pm$ 1.54$\mathrm{\textbf{e}}$-3} \\
			\midrule
			\multirow{3}{*}{\begin{tabular}{@{}c@{}}$\kappa=125,$\\ $\sigma=5$ cm\end{tabular}} & \edit{LOM (random)} & \edit{522 $\pm$ 538} & \edit{227 $\pm$ 166} & \edit{559 $\pm$ 595} & \edit{227 $\pm$ 166} & \edit{9.53$\mathrm{e}$-1 $\pm$ 8.98$\mathrm{e}$-1} \\
			& LOM (close) & \textbf{23.0 $\pm$ 11.2} & \textbf{0.94 $\pm$ 0.44} & 19.1 $\pm$ 11.4 & \textbf{0.85 $\pm$ 0.40} & 8.45$\mathrm{e}$-3 $\pm$ 7.03$\mathrm{e}$-3 \\
			& Ours & 23.8 $\pm$ 11.3 & 1.11 $\pm$ 0.53 & \textbf{19.0 $\pm$ 11.4} & 1.06 $\pm$ 0.52 & \textbf{8.41$\mathrm{\textbf{e}}$-3 $\pm$ 7.02$\mathrm{\textbf{e}}$-3} \\
			\midrule
			\multirow{3}{*}{\begin{tabular}{@{}c@{}}$\kappa=12,$\\ $\sigma=1$ cm\end{tabular}} & \edit{LOM (random)} & \edit{482 $\pm$ 540} & \edit{214 $\pm$ 170} & \edit{528 $\pm$ 601} & \edit{214 $\pm$ 171} & \edit{9.38$\mathrm{e}$-1 $\pm$ 8.98$\mathrm{e}$-1} \\
			& LOM (close) & 4.52 $\pm$ 1.97 & 1.57 $\pm$ 0.59 & 3.78 $\pm$ 1.89 & \textbf{0.20 $\pm$ 0.10} & 1.88$\mathrm{e}$-3 $\pm$ 1.37$\mathrm{e}$-3\\
			& Ours & \textbf{4.50 $\pm$ 1.99} & \textbf{1.54 $\pm$ 0.57} & \textbf{3.77 $\pm$ 1.88} & 0.206 $\pm$ 0.098 & \textbf{1.87$\mathrm{\textbf{e}}$-3 $\pm$ 1.37$\mathrm{\textbf{e}}$-3} \\
			\midrule
			\multirow{3}{*}{\begin{tabular}{@{}c@{}}$\kappa=12,$\\ $\sigma=5$ cm\end{tabular}} & \edit{LOM (random)} & \edit{606 $\pm$ 609} & \edit{234 $\pm$ 163} & \edit{659 $\pm$ 675} & \edit{234 $\pm$ 164} & \edit{9.27$\mathrm{e}$-1 $\pm$ 8.77$\mathrm{e}$-1} \\
			& LOM (close) & \textbf{23.2 $\pm$ 14.1} & \textbf{1.90 $\pm$ 0.81} & 20.0 $\pm$ 11.0 & \textbf{1.07 $\pm$ 0.54} & 9.24$\mathrm{e}$-3 $\pm$ 7.69$\mathrm{e}$-3 \\
			& Ours & 24.4 $\pm$ 14.3 & 1.99 $\pm$ 0.81 & \textbf{19.6 $\pm$ 11.0} & 1.25 $\pm$ 0.58 & \textbf{9.16$\mathrm{\textbf{e}}$-3 $\pm$ 7.69$\mathrm{\textbf{e}}$-3} \\
			\bottomrule
		\end{tabular}
	\end{threeparttable}
\end{table*}

\subsection{Many Cameras Observing a Hand-Mounted Target} \label{subsec:many_cameras}

In this study, we compare the estimated parameter accuracy of standard and monocular RWHEC algorithms for systems with multiple $\Matrix{X}$s and one $\Matrix{Y}$. 
The simulated system consists of a robotic manipulator with a hand-mounted target viewed by four fixed cameras.
\Cref{fig:MCOA} shows a diagram of the base-camera and hand-target transformations for one camera.
Each variable $\Matrix{X}_i = \Matrix{T}_{bc_i}$ relates the manipulator base frame $\CoordinateFrame{b}$ and the $i$th camera frame $\CoordinateFrame{c_i}$.
The unknown transformation from the robot hand to the target is represented by the variable $\Matrix{Y} = \Matrix{T}_{ht}$, where $\CoordinateFrame{h}$ and $\CoordinateFrame{t}$ are the manipulator hand and target reference frames, respectively.  
Each simulated trajectory consists of 108 manipulator poses, ensures that the target is always visible to every fixed camera, and has a unique solution by satisfying the requirements of \Cref{cor:identifiable_poses}.

%

\begin{figure}[t]
	\centering
	\includegraphics[width=\columnwidth]{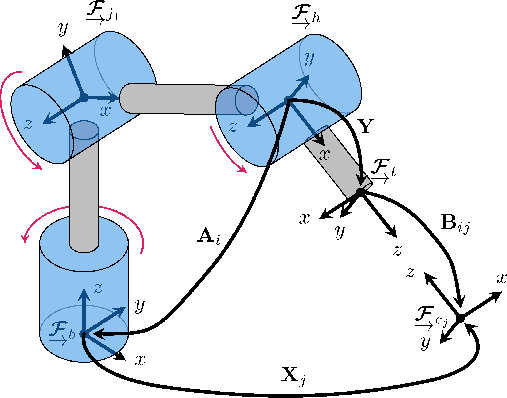}
	\caption{A diagram of a robot arm with a hand-mounted target that is observed by a stationary camera. In this diagram, the base, joint 1, hand, camera $j$, and target reference frames are labelled $\CoordinateFrame{b}$, $\CoordinateFrame{j_1}$, $\CoordinateFrame{h}$, $\CoordinateFrame{c_j}$, and $\CoordinateFrame{t}$, respectively. The red arrows indicate the axes of joint rotation. At time $i$, we use the forward kinematics of the manipulator to estimate the transformation from the manipulator base to the hand, $\Matrix{A}_i$. Further, we can measure the transformation from the target to camera $j$, $\Matrix{B}_{ij}$.}
	\label{fig:MCOA}
	\vspace{-4mm}
\end{figure}

The results of standard and monocular RWHEC studies are shown in \Cref{tab:multi-HERW-sim-result,tab:mono-multi-HERW-sim-result}, which contain the mean and standard deviation of the estimated translation ($t_{x,err}$, $t_{y,err}$) and rotation ($r_{x,err}$, $r_{y,err}$) error.
The errors associated with $\Matrix{X}$ (i.e., $t_{x,err}$ and $r_{x,err}$) are the mean and standard deviation of all four estimated $\Matrix{X}$ transformations. 
\Cref{tab:mono-multi-HERW-sim-result} also displays the mean and standard deviation of the estimated scale error ($\alpha_{err}$).
As with the previous experiments, our method and LOM \edit{with intelligent initialization} produced the most accurate parameter estimates, \edit{but the gap between the performance of LOM with random initialization and our method is larger when compared with the single-sensor case}.
The method of Wang et al.~\citep{wang2022accurate} returns approximate closed-form solutions using a linearized system and the singular value decomposition, yet our experiments demonstrate that it is more accurate than the globally optimal method of Horn et al.~\citep{horn_extrinsic_2023}.
Recalling that there is a double cover from the unit quaternions to $\LieGroupSO{3}$, we hypothesize that the use of dual quaternions by the method in \cite{horn_extrinsic_2023} leads to its deteriorated performance in the multi-$\Matrix{X}$ case, as each measurement (i.e., $\Matrix{A}$ or $\Matrix{B}$) must be assigned one of two equivalent unit DQ representations of $\LieGroupSE{3}$. 
Finally, as in our previous monocular RWHEC study, our algorithm and LOM initialized to within 10 cm and 10$^\circ$ of the true solution achieved similar accuracy.
\edit{The performance between randomly initialized LOM and our method is even larger than the known-scale case.} 

\begin{table*}[t!]
	\centering
	\caption{Calibration results for the multi-sensor experiments in \Cref{subsec:many_cameras} with known scale. The values in each row are estimated by a different algorithm. The mean error magnitude and standard deviation are given in each cell. \edit{LOM is once again initialized with the results from Wang or random poses.} Note that we have no \textit{a priori} estimates of $\Vector{t}_{\Matrix{X}_i}$.}
	\label{tab:multi-HERW-sim-result}
	\begin{threeparttable}
		\begin{tabular}{clcccc}
			\toprule
			Noise Level & Method (Init.) & $t_{x,err}$ [mm] & $r_{x,err}$ [deg] & $t_{y,err}$ [mm] & $r_{y,err}$ [deg]\\
			\midrule
			\multirow{5}{*}{\begin{tabular}{@{}c@{}}$\kappa=125,$\\ $\sigma=1$ cm\end{tabular}} & Wang & 2.32 $\pm$ 1.56 & 0.469 $\pm$ 0.200 & 2.71 $\pm$ 1.37 & 0.273 $\pm$ 0.120 \\
			& Horn & 12.11 $\pm$ 6.49 & 0.455 $\pm$ 0.191 & 5.43 $\pm$ 2.04 & 0.165 $\pm$ 0.067 \\
			& \edit{LOM (random)} & \edit{303 $\pm$ 841} & \edit{21.9 $\pm$ 58.2} & \edit{154 $\pm$ 418} & \edit{21.4 $\pm$ 58.1} \\
			& LOM (Wang) & 0.994 $\pm$ 0.416 & \textbf{0.454 $\pm$ 0.190} & 0.572 $\pm$ 0.296 & 0.022 $\pm$ 0.010\\
			& Ours & \textbf{0.992 $\pm$ 0.416} & \textbf{0.454 $\pm$ 0.189} & \textbf{0.570 $\pm$ 0.294} & \textbf{0.021 $\pm$ 0.010} \\
			\midrule
			\multirow{5}{*}{\begin{tabular}{@{}c@{}}$\kappa=125,$\\ $\sigma=5$ cm\end{tabular}} & Wang & 4.95 $\pm$ 2.20 & 0.470 $\pm$ 0.196 & 3.52 $\pm$ 1.57 & 0.260 $\pm$ 0.115 \\
			& Horn & 12.4 $\pm$ 6.4 & 4.55 $\pm$ 1.91 & 5.42 $\pm$ 2.33 & 1.76 $\pm$ 0.69 \\
			& \edit{LOM (random)} & \edit{457 $\pm$ 995} & \edit{32.7 $\pm$ 68.8} & \edit{233 $\pm$ 494} & \edit{32.1 $\pm$ 68.7} \\
			& LOM (Wang) & 4.57 $\pm$ 1.96 & 0.460 $\pm$ 0.191 & 2.74 $\pm$ 1.34 & 0.120 $\pm$ 0.056\\
			& Ours & \textbf{4.56 $\pm$ 1.96} & \textbf{0.459 $\pm$ 0.191} & \textbf{2.69 $\pm$ 1.32} & \textbf{0.111 $\pm$ 0.051} \\
			\midrule
			\multirow{5}{*}{\begin{tabular}{@{}c@{}}$\kappa=12,$\\ $\sigma=1$ cm\end{tabular}} & Wang & 6.05 $\pm$ 4.84 & 1.54 $\pm$ 0.63 & 7.63 $\pm$ 4.04 & 0.812 $\pm$ 0.364 \\
			& Horn & 69 $\pm$ 121 & 24.6 $\pm$ 94.4 & 48.5 $\pm$ 82.5 & 13.0 $\pm$ 76.8   \\
			& \edit{LOM (random)} & \edit{177 $\pm$ 658} & \edit{13.9 $\pm$ 45.2} & \edit{90.7 $\pm$ 330} & \edit{12.5 $\pm$ 45.8} \\
			& LOM (Wang) & \textbf{0.933 $\pm$ 0.370} & 1.52 $\pm$ 0.64 & \textbf{0.537 $\pm$ 0.262} & \textbf{0.021 $\pm$ 0.010} \\
			& Ours & \textbf{0.933 $\pm$ 0.370} & \textbf{1.50 $\pm$ 0.62} & \textbf{0.537 $\pm$ 0.262} & \textbf{0.021 $\pm$ 0.010} \\
			\midrule
			\multirow{5}{*}{\begin{tabular}{@{}c@{}}$\kappa=12,$\\ $\sigma=5$ cm\end{tabular}} & Wang & 7.85 $\pm$ 4.34 & 1.52 $\pm$ 0.70 & 9.58 $\pm$ 4.46 & 0.868 $\pm$ 0.340 \\
			& Horn & 77 $\pm$ 146 & 26 $\pm$ 106 & 57 $\pm$ 110 & 14.8 $\pm$ 83.9 \\
			& \edit{LOM (random)} & \edit{331 $\pm$ 870} & \edit{24.5 $\pm$ 59.5} & \edit{168 $\pm$ 430} & \edit{23.2 $\pm$ 60.1} \\
			& LOM (Wang) & \textbf{4.57 $\pm$ 1.97}  & 1.51 $\pm$ 0.682 & 2.93 $\pm$ 1.34 & 0.104 $\pm$ 0.047\\
			& Ours & \textbf{4.57 $\pm$ 1.97} & \textbf{1.48 $\pm$ 0.67} & \textbf{2.92 $\pm$ 1.32} & \textbf{0.103 $\pm$ 0.046} \\
			\bottomrule
		\end{tabular}
	\end{threeparttable}
\end{table*}

\begin{table*}[t!]
	\centering
	\caption{Calibration results for the multi-sensor experiments in \Cref{subsec:many_cameras} with unknown scale. The values in each row are estimated by a different algorithm. The mean error magnitude and standard deviation are given in each cell. In the close initialization scheme for LOM, the parameters are initialized within 10$^\circ$ and 10 cm from the ground truth values and a scale of 1. \edit{For the random initialization of LOM, the poses are drawn from a uniform distribution over $\LieGroupSO{3} \times [-1, 1]^3$}.}
	\label{tab:mono-multi-HERW-sim-result}
	\begin{threeparttable}
		\begin{tabular}{clccccc}
			\toprule
			Noise Level & Method (Init.) & $t_{x,err}$ [mm] & $r_{x,err}$ [deg] & $t_{y,err}$ [mm] & $r_{y,err}$ [deg] & $|\alpha_{err}|/\alpha$ \\
			\midrule
			\multirow{3}{*}{\begin{tabular}{@{}c@{}}$\kappa=125,$\\ $\sigma=1$ cm\end{tabular}} & \edit{LOM (random)} & \edit{1.87$\mathrm{e}$3 $\pm$ 2.18$\mathrm{e}$3} & \edit{226 $\pm$ 172} & \edit{2.47$\mathrm{e}$3 $\pm$ 2.53$\mathrm{e}$3} & \edit{224 $\pm$ 170} & \edit{8.68$\mathrm{e}$-1 $\pm$ 6.65$\mathrm{e}$-1} \\
			& LOM (close) & \textbf{1.05 $\pm$ 0.45} & 0.469 $\pm$ 0.185 & 0.631 $\pm$ 0.249 & 0.030 $\pm$ 0.013 & \textbf{2.71$\mathrm{\textbf{e}}$-4 $\pm$ 1.98$\mathrm{\textbf{e}}$-4} \\
			& Ours & \textbf{1.05 $\pm$ 0.45} & \textbf{0.468 $\pm$ 0.185} & \textbf{0.597 $\pm$ 0.256} & \textbf{0.026 $\pm$ 0.012} & \textbf{2.71$\mathrm{\textbf{e}}$-4 $\pm$ 1.98$\mathrm{\textbf{e}}$-4} \\
			\midrule
			\multirow{3}{*}{\begin{tabular}{@{}c@{}}$\kappa=125,$\\ $\sigma=5$ cm\end{tabular}} & \edit{LOM (random)} & \edit{1.74$\mathrm{e}$3 $\pm$ 2.22$\mathrm{e}$3} & \edit{195 $\pm$ 178} & \edit{2.23$\mathrm{e}$3 $\pm$ 2.56$\mathrm{e}$3} & \edit{192 $\pm$ 176} & \edit{7.39$\mathrm{e}$-1 $\pm$ 6.81$\mathrm{e}$-1} \\
			& LOM (close) & 5.25 $\pm$ 2.20 & 0.457 $\pm$ 0.195 & 3.15 $\pm$ 1.26 & 0.179 $\pm$ 0.079 & \textbf{1.37$\mathrm{\textbf{e}}$-3 $\pm$ 1.03$\mathrm{\textbf{e}}$-3} \\
			& Ours & \textbf{5.21 $\pm$ 2.18} & \textbf{0.454 $\pm$ 0.194} & \textbf{3.03 $\pm$ 1.24} & \textbf{0.159 $\pm$ 0.071} & \textbf{1.37$\mathrm{\textbf{e}}$-3 $\pm$ 1.03$\mathrm{\textbf{e}}$-3} \\
			\midrule
			\multirow{3}{*}{\begin{tabular}{@{}c@{}}$\kappa=12,$\\ $\sigma=1$ cm\end{tabular}} & \edit{LOM (random)} & \edit{1.81$\mathrm{e}$3 $\pm$ 2.13$\mathrm{e}$3} & \edit{217 $\pm$ 172} & \edit{2.39$\mathrm{e}$3 $\pm$ 2.46$\mathrm{e}$3} & \edit{213 $\pm$ 173} & \edit{8.26$\mathrm{e}$-1 $\pm$ 6.74$\mathrm{e}$-1} \\
			& LOM (close) & \textbf{1.02 $\pm$ 0.44} & 1.50 $\pm$ 0.62 & \textbf{0.530 $\pm$ 0.284} & \textbf{0.021 $\pm$ 0.010} & 2.58$\mathrm{e}$-4 $\pm$ 2.02$\mathrm{e}$-4\\
			& Ours & \textbf{1.02 $\pm$ 0.44} & \textbf{1.46 $\pm$ 0.61} & \textbf{0.530 $\pm$ 0.284} & \textbf{0.021 $\pm$ 0.010} & \textbf{2.57$\mathrm{\textbf{e}}$-4 $\pm$ 2.02$\mathrm{\textbf{e}}$-4} \\
			\midrule
			\multirow{3}{*}{\begin{tabular}{@{}c@{}}$\kappa=12,$\\ $\sigma=5$ cm\end{tabular}} & \edit{LOM (random)} & \edit{2.04$\mathrm{e}$3 $\pm$ 2.17$\mathrm{e}$3} & \edit{242 $\pm$ 166} & \edit{2.60$\mathrm{e}$3 $\pm$ 2.48$\mathrm{e}$3} & \edit{238 $\pm$ 166} & \edit{9.22$\mathrm{e}$-1 $\pm$ 6.46$\mathrm{e}$-1} \\
			& LOM (close) & 5.20 $\pm$ 2.27 & 1.48 $\pm$ 0.64 & 3.13 $\pm$ 1.56 & 0.156 $\pm$ 0.075 & \textbf{1.18$\mathrm{\textbf{e}}$-3 $\pm$ 8.63$\mathrm{\textbf{e}}$-4} \\
			& Ours & \textbf{5.18 $\pm$ 2.26} & \textbf{1.44 $\pm$ 0.626} & \textbf{3.02 $\pm$ 1.51} & \textbf{0.144 $\pm$ 0.069} & \textbf{1.18$\mathrm{\textbf{e}}$-3 $\pm$ 8.62$\mathrm{\textbf{e}}$-4} \\
			\bottomrule
		\end{tabular}
	\end{threeparttable}
\end{table*}

\section{Real-World Experiment}\label{sec:HERW-rw}

In this section, we discuss our real-world data collection system, data preprocessing procedure, and calibration results.

\subsection{Data Collection and Data Preprocessing}

In our real world experiment, we use infrared motion capture markers placed on a mobile sensor system to produce measurements $\Matrix{A}_{(j,k),i}$ and fiducial markers in the environment with unknown poses $\Matrix{X}_j$ to form the RWHEC problem shown in \Cref{fig:HERW_diag_cam_tar}.
\begin{figure*}[t]
	\centering
	\includegraphics[width=1.9\columnwidth]{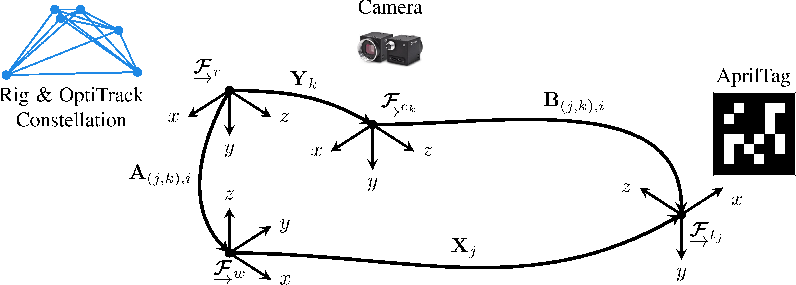}
	\caption{A diagram of the measurements for camera $k$ and target $j$ at time $i$. The reference frames for the cameras, targets, OptiTrack world, and rig reference frames are $\CoordinateFrame{c_k}$, $\CoordinateFrame{t_j}$, $\CoordinateFrame{w}$, and $\CoordinateFrame{r}$, respectively. The OptiTrack rig constellation enables estimation of the transformation $\Matrix{A}_{(j,k),i}$. The monocular camera observing the AprilTag enables estimation of the transformation $\Matrix{B}_{(j,k),i}$.
	}
	\label{fig:HERW_diag_cam_tar}
\end{figure*}
\Cref{fig:HERW_rw_rig,fig:HERW_rw_env} show our mobile system and two images of our indoor experimental environment, respectively. 
Our mobile system is equipped with eight Point Grey Blackfly S USB cameras, OptiTrack markers, and a VectorNav VN-100 inertial measurement unit (IMU).
We place a sufficient number of OptiTrack markers on the system to enable estimation of the relative transformation between the OptiTrack reference frame and a frame fixed to the mobile system.
To validate our estimated camera calibration parameters, we approximate ground truth extrinsics $\Matrix{Y}_k$ with an estimate computed using the Kalibr toolbox~\citep{Rehder_2016_Extending}, which requires measurements from the IMU in addition to images of a checkerboard target taken by each camera.
Kalibr uses a local method and therefore requires an initialization which we manually computed, therefore it should only be treated as a rough proxy for the true solution. 

\begin{figure}[b!] 
	\centering
	\includegraphics[width=0.9\columnwidth]{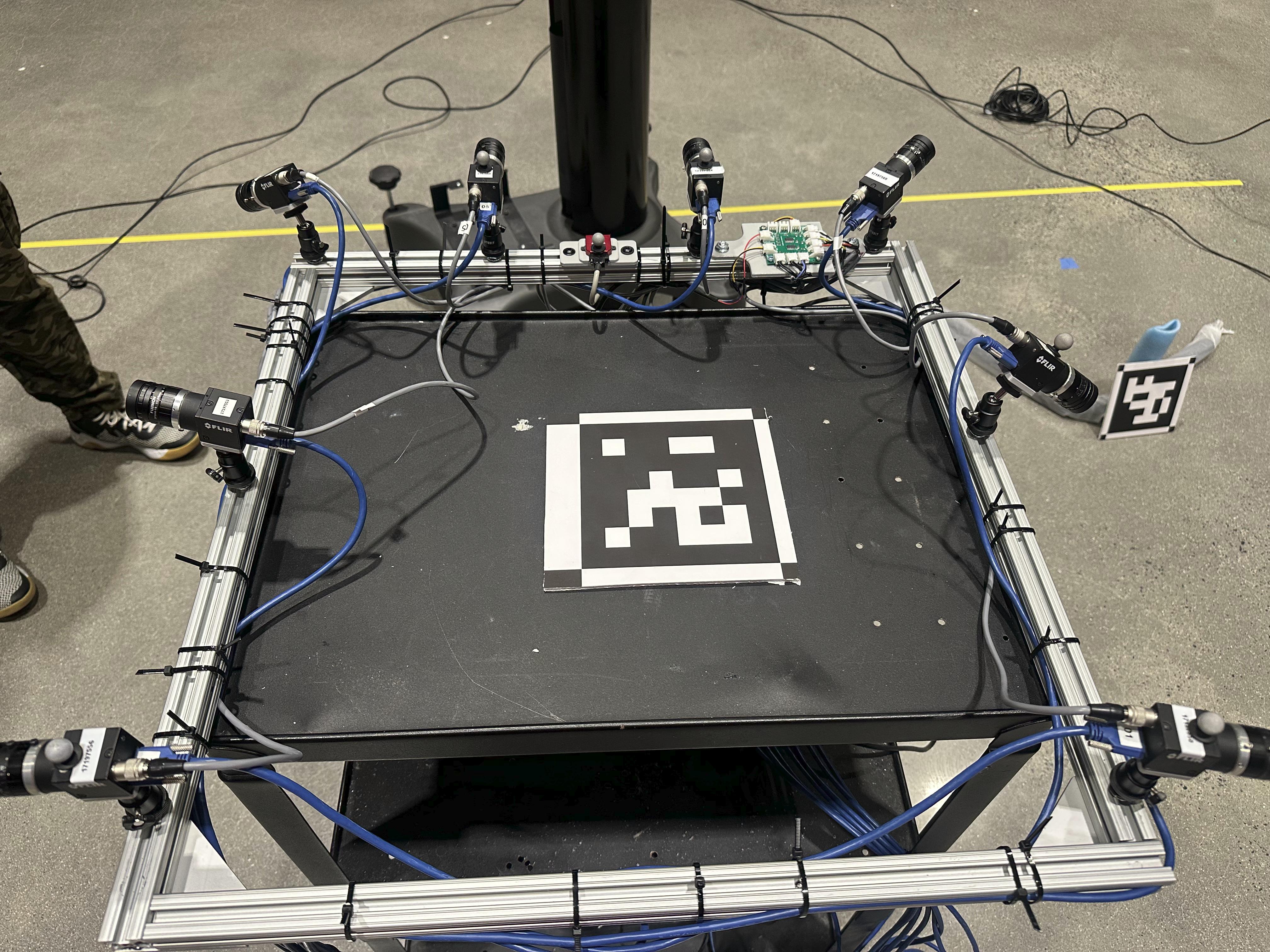}
	\caption{Image of the real-world data collection rig. The data collection rig consists of eight hardware synchronized cameras facing a variety of different directions. Further, the data collection rig includes an IMU and opti track markers. The OptiTrack markers enable us to estimate the rig pose relative to the OptiTrack reference frame.}
	\label{fig:HERW_rw_rig}
\end{figure}
\begin{figure}[t] 
	\centering
	\includegraphics[width=0.9\columnwidth]{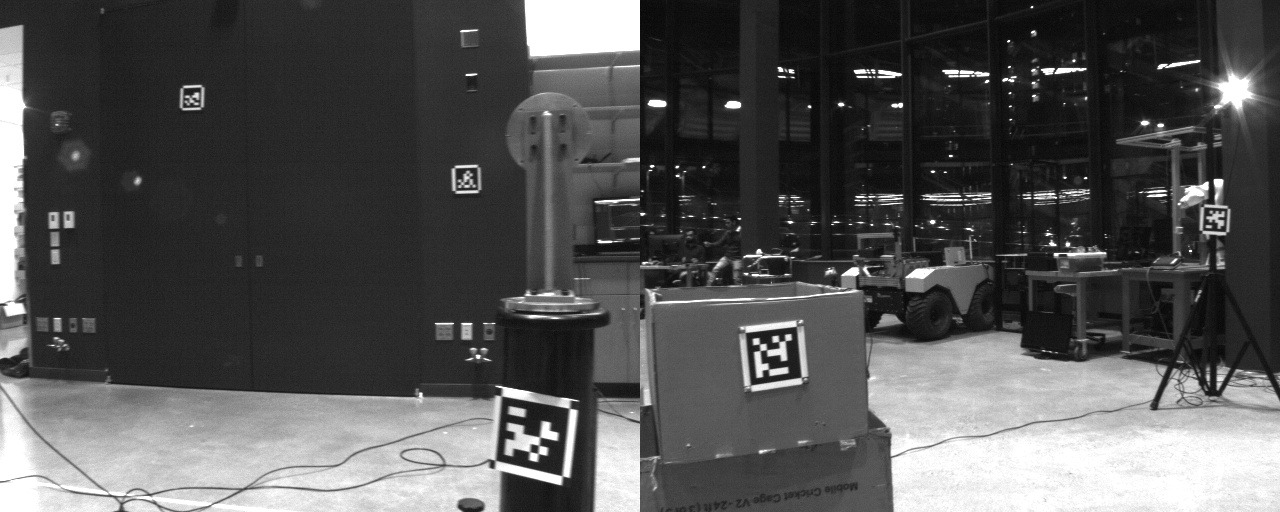}
	\caption{Images from the real-world experiment. These images are from camera 0 and show a subset of the AprilTags in the environment. The bottom left AprilTag in the image on the right has OptiTrack markers, so we can determine the ground truth pose of the AprilTag frame relative to the OptiTrack world frame.}
	\label{fig:HERW_rw_env}
\end{figure}  
A total of sixteen AprilTags~\citep{olson2011apriltag} are mounted in our experimental environment for use as fiducial markers.
To evaluate the accuracy of estimated AprilTag poses $\Matrix{X}_j$, we place OptiTrack markers on a single AprilTag to establish ground truth measurements.

%
\Cref{fig:HERW_rw_traj} shows the trajectory of the mobile system in our experiment.
\begin{figure}[t]
	\centering
	\includegraphics[width=0.9\columnwidth]{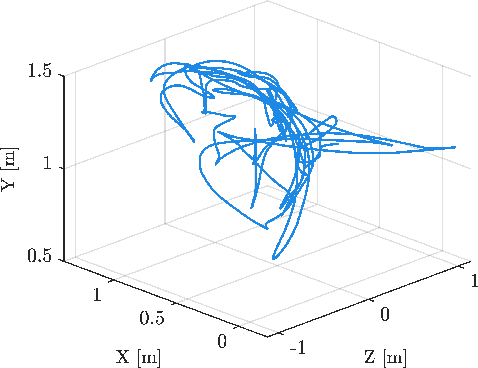}
	\caption{Trajectory of the rig in the real-world experiment. Initially, the platform rotates about the $y$-axis and moves in the $xz$-plane. Following the planar motion, the system follows an unconstrained trajectory, which allows for rotation about the x and z axes.
	}
	\label{fig:HERW_rw_traj}
\end{figure}
In the first half of the data collection run, the system experiences purely planar motion, and each target is observed at least once.
After the planar motion, the system rotates about all three axes and translates perpendicular to the plane of motion from the first half of the trajectory, ensuring sufficient excitation for the RWHEC problem to have a unique solution.
\Cref{fig:multi-HERW-rw-table} is a grid where row $k$ corresponds to camera $k$, and column $j$ corresponds to AprilTag $j$.
If the square in column $j$ and row $k$ is blue, then the data collected for that camera-target pair enables an identifiable RWHEC subproblem (i.e., an instance of RWHEC with a unique solution).
A red square indicates that the data collected for that camera-target pair did not contain sufficient excitation, while a white square indicates that target $j$ was not observed by camera $j$.
\Cref{fig:multi-HERW-rw-table} indicates that each camera observed at least one target, and that the overall generalized RWHEC problem is described by a weakly connected bipartite directed graph.
\begin{figure}[!t]
	\centering
	\includegraphics[width=0.95\columnwidth]{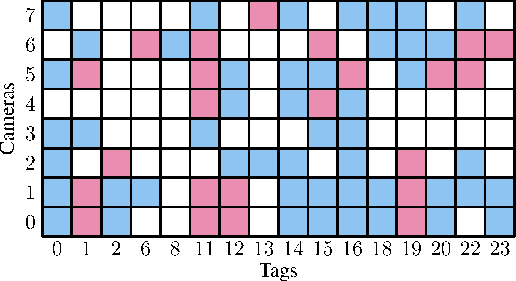}
	\vspace{-1mm}
	\caption{A grid describing the observability of each connection in the bipartite graph generated by our real-world experiment. Blue squares indicate that measurements between tag $j$ and camera $k$ are sufficient for an identifiable RWHEC problem. Red squares indicate that there is a connection between tag $j$ and camera $k$, but the measurements are insufficient for the problem to be identifiable on its own. A white square indicates that there is no observation of tag $j$ by camera $k$.}
	\label{fig:multi-HERW-rw-table}
\end{figure}

Our data preprocessing pipeline for this experiment consists of four steps designed to align measurements in time and remove outliers.\endnote{None of the methods presented in this paper are robust, and the same cleaned dataset is used for all comparisons. Addressing challenges with data association is left for future work.}
First, we rectify the images using the camera intrinsic parameters estimated with Kalibr (see \cite{Rehder_2016_Extending}).
Second, we measure the camera-to-AprilTag transformations $\Matrix{B}_{(j,k),i}$ using an AprilTag detector.
Third, we synchronize the camera and OptiTrack measurements by interpolating the OptiTrack system measurements to the camera measurement timestamps with the Lie-algebraic method in Chapter 8 of \cite{barfoot2024state}.
Finally, we use a RANSAC~\citep{fischler1981random} scheme to reject gross outliers. For each ground truth estimate of $\Matrix{Y}_k$ computed with Kalibr, the RWHEC geometric constraint at each measurement time labelled by $i \in \Indices{N_{(j,k)}}$ becomes
\begin{equation} \label{eq:pose_averaging}
\Matrix{X}_j = \Matrix{A}^{-1}_{(j,k),i}\Matrix{Y}_k\Matrix{B}_{(j,k),i}.
\end{equation}
Consequently, we can use RANSAC on the model in \Cref{eq:pose_averaging} to determine the $\Matrix{A}_{(j,k),i}$-$\Matrix{B}_{(j,k),i}$ pairs that result in a consistent $\Matrix{X}_j$ transformation.
For this RANSAC procedure, our minimum inlier set size is one third of the number of $\Matrix{A}_{(j,k),i}$-$\Matrix{B}_{(j,k),i}$ pairs for camera $k$ and target $j$.
Data from camera-target pairs with only one $\Matrix{A}_{(j,k),i}$-$\Matrix{B}_{(j,k),i}$ measurement are rejected because they cannot be validated using this scheme.
An $\Matrix{A}_{(j,k),i}$-$\Matrix{B}_{(j,k),i}$ pair is considered an inlier if it is within 0.6 m and 60$^\circ$ of the estimated pose $\Matrix{X}_j$.
The inlier $\Matrix{A}_{(j,k),i}$-$\Matrix{B}_{(j,k),i}$ pairs are then used for our RWHEC problem.
To extend this outlier rejection scheme to monocular RWHEC, we assume that the AprilTag size is known within 10\% of the actual value, which we empirically found to be sufficient for reliable outlier rejection.
\edit{Finally, we assumed that the noise parameters $\sigma$ and $\kappa$ were equal to one for all measurements from all sensor-target pairs. This simplifying assumption removed the need for a parameter fitting step, and allows us to fairly compare our method and LOM to the method in \mbox{\cite{wang2022accurate}}, which does not support multiple measurement uncertainties.}

\subsection{Experimental Results}

Using the preprocessed data, we obtain the calibration results shown in \Cref{tab:rw-multi-HERW-result}.
\begin{table*}[t!]
	\centering
	\caption{Calibration results for our real world dataset with known scale. The values in each row are estimated by a different algorithm. The mean error magnitude and standard deviation for $\Vector{t}_{c_0}^{c_ic_0}$ and $\Matrix{R}_{c_0c_i}$ are given in each cell. The error in the estimated transformation between AprilTag 20 and the OptiTrack reference frame is also provided. LOM is initialized with the solution from \cite{wang2022accurate}.}
	\label{tab:rw-multi-HERW-result}
	\begin{threeparttable}
		\begin{tabular}{lcccc}
			\toprule
			Method (Init.) & $\Vector{t}_{c_0}^{c_ic_0}$ Error [cm] & $\Matrix{R}_{c_0c_i}$ Error [deg]  & $\Vector{t}_{w}^{t_{20} w}$ Error [cm] & $\Matrix{R}_{wt_{20}}$ Error [deg] \\
			\midrule
			Wang & 3.46 $\pm$ 1.69 & 1.38 $\pm$ 0.81 & 11.7 & 5.30 \\
			LOM (Wang) & 3.34 $\pm$ 2.44 & 0.88 $\pm$ 0.49 & 11.4 & \textbf{4.50} \\
			Ours & 3.20 $\pm$ 1.88 & 0.99 $\pm$ 0.52 & 11.5 & 4.73  \\
			LOM with Unknown Scale (Wang) & 3.28 $\pm$ 2.42 & \textbf{0.86 $\pm$ 0.47} & \textbf{7.92} & \textbf{4.50}\\
			Ours with Unknown Scale & \textbf{2.87 $\pm$ 1.94} & 0.99 $\pm$ 0.52 & 7.94 & 4.72 \\
			\bottomrule
		\end{tabular}
	\end{threeparttable}
\vspace{-1.4mm} 
\end{table*}
We assume that our hand-measured AprilTag sizes are approximately correct, so the local standard and monocular RWHEC solvers are initialized with the parameters estimated using the method in \cite{wang2022accurate}.
Our estimated AprilTag translation is within 12 cm (or 8\% of the ground-truth distance) and 6$^\circ$ of the ground truth transformation measured with OptiTrack.
The estimated camera calibration parameters are, on average, within about 3 cm and 1$^\circ$ of the parameters estimated by Kalibr.
We did not expect our algorithm to return the same values as Kalibr because collecting a dedicated calibration dataset for each camera should result in more accurate calibration parameters.
As expected, the inexpensive approximation technique in \cite{wang2022accurate} returned the least accurate rotation estimates.
As in our simulation studies, our method and LOM compute parameters with similar accuracy.
%
%
%
Interestingly, estimating the scale of the AprilTags improves accuracy.
The estimated AprilTag scale is 2.5\% smaller than the hand-measured value.
From our simulation studies, the estimated scale error is often within 0.01\% of ground truth value, so a scale error this large is unexpected.
This scaling error suggests that our manual measurement of the AprilTags \edit{may be off} by approximately 3 mm.
This correction \edit{appears to have improved} our estimated camera calibration parameters and AprilTag translations by approximately 0.5 cm and 4 cm, respectively.
\edit{However, additional experiments beyond the scope of this work are needed to rigorously determine whether our monocular method can reliably improve estimation accuracy in standard RWHEC problems with noisy fiducial marker size estimates. This line of inquiry also motivates future work on a} maximum a posteriori (MAP) estimation scheme that leverages prior estimates of parameters \edit{including marker sizes} in a principled fashion. 

\subsection{Certification} \label{sec:certification-experiment}
As described in \Cref{sec:certifiable}, we can numerically certify that our convex method achieves the global minimum. 
Weak duality tells us that 
\begin{equation} \label{eq:weak_duality}
	d \leq d^\star \leq p^\star \leq p,
\end{equation}
where $d$ and $p$ are the dual and primal objective function values attained by candidate solutions $\Vector{\nu}$ and $\Vector{x}$, and $d^\star$ and $p^\star$ are the optima of the dual and primal objective function values attained by the optimal solutions $\Vector{\nu}^\star$ and $\Vector{x}^\star$.
The relative suboptimality of our solution $\Vector{x}$ is 
\begin{equation}
	\rho \Defined \frac{p - p^\star}{p^\star}, 
\end{equation}
but we cannot directly compute $\rho$ because do not have access to $p^\star$. 
Instead, we note that \Cref{eq:weak_duality} gives us the following upper bound on the relative suboptimality:
\begin{equation} \label{eq:relative_suboptimality_bound}
	\hat{\rho} \Defined \frac{p - d}{d} \geq \rho.
\end{equation}

For the standard RWHEC problem with known scale, $\hat{\rho} = -6.41\mathrm{e}$-9, and for the monocular case, $\hat{\rho} = 8.55\mathrm{e}$-9.
In both cases, the relative duality gap indicates that the primal solution returned by our method attains an objective value that is less than a millionth of a percent off of the lower bound provided by the dual solution. 
Finally, while weak duality means that $\hat{\rho} < 0$ is not possible in theory, our solutions were computed with floating point arithmetic which approximates $\Real$ discretely and inevitably introduces roundoff errors. 
Therefore, the negative duality gap for the case with known scale is not large enough to cause concern.

\subsection{Runtime}
Since calibration is typically an offline procedure, we did not expend effort tuning solver parameters to reduce algorithm runtime.
All experiments were run on a laptop with an Intel Core i7-8750H CPU and 16 GB of memory.
The parameters used by COSMO and Ceres were selected to ensure accurate convergence, and the longest runtime observed was approximately five minutes for our global solver in the monocular real world experiment reported in \Cref{tab:rw-multi-HERW-result}. 
COSMO solved the synthetic problems with the sparsity pattern in \Cref{fig:many-xs} substantially faster, taking at most seven seconds.
Unsurprisingly, the Ceres implementation of LOM took at most one second. 

\section{Conclusion} \label{sec:conclusion}
We have presented an efficient and certifiably globally optimal solution to a generalized formulation of multi-sensor extrinsic calibration. 
Our formulation builds on previous robot-world and hand-eye calibration methods by incorporating monocular cameras and arbitrary multi-sensor and target configurations.
%
%
We have also characterized the subset of multi-sensor RWHEC problem instances which have a unique solution, and used this to prove that RWHEC admits a tight SDP relaxation when measurement noise is not too large. 
\edit{Our proof of this tightness result additionally required us to prove a general theorem on constraint qualification for systems with redundant constraints, including $\LieGroupSO{3}$.}
Our experiments, \edit{which compare our method to a standard local solver and existing calibration methods,} verify that global optimality is an important consideration for RWHEC, and that our MLE problem formulation using rotation matrices is superior to dual quaternion-based methods. 
\edit{Additionally, our method can be used as a certification step for other (potentially faster) generalized RWHEC solvers that lack formal guarantees~\mbox{\citep{yang2021teaser}}.} 

We see our contributions as critical steps towards truly ``power-on-and-go" sensor calibration algorithms for multi-sensor robotic systems.
Realizing this vision will require extending our RWHEC solver so that it does not depend on specialized calibration targets and is therefore able to handle outliers caused by errors in data association.
One potential option for mitigating the effect of outliers on generalized RWHEC is to use a robust truncated least squares objective function, which has been applied to other estimation problems while preserving their QCQP structure~\citep{yang2021teaser}.
Additionally, our current MLE formulation assumes that translation and rotation noise is isotropic.
Future work can extend our problem formulation with guidance from the study of anisotropic SLAM in \cite{holmes2024semidefinite}.
Finally, truly autonomous sensor calibration will require an active perception strategy which can design trajectories that are information-theoretically optimal~\citep{grebe2021observabilityaware} or seek out measurements that render parameters identifiable~\citep{yang2023next}.


%

\section*{Appendices}
\appendix
\section{The Lagrangian Matrix}
\label{app:dual_matrix}
\edit{The Lagrangian function in \mbox{\Cref{eq:lagrangian}} encodes vectorized constraints on $\LieGroupSO{3}$-valued variables using the matrix}
\begin{equation}
\begin{aligned}
\Matrix{Z}(\Vector{\lambda}) \Defined \Matrix{Q}' + &\Matrix{P}_1(\Vector{\lambda}) + \Matrix{P}_2(\Vector{\lambda}),  \\
\Matrix{P}_1(\Matrix{\Lambda}_1, \dots, &\Matrix{\Lambda}_{2(M+P)}) \Defined \\ \mathrm{Diag}(-(&\Matrix{\Lambda}_1 \oplus \Matrix{\Lambda}_2),\\ 
\ \ \ \ \ \ \ \ &\vdots \\
-(&\Matrix{\Lambda}_{2(M+P)-1} \oplus \Matrix{\Lambda}_{2(M+P)}), \\
&\sum_{i=1}^{2(M+P)} \Trace{\Matrix{\Lambda}_i}),
\end{aligned}
\end{equation}
and
\begin{equation}
\begin{aligned}
\Matrix{P}_2 &\Defined \bbm \Matrix{P}_D & \Vector{p} \\ \Transpose{\Vector{p}} & -\lambda_s \ebm, \\
\Matrix{P}_D &\Defined \Diag{\Matrix{P}_h(\Vector{\lambda}_1),\dots,\Matrix{P}_h(\Vector{\lambda}_{(M+P)})}, \\
\Vector{p} &\Defined -\bbm \Matrix{P}_o\Vector{\lambda}_1 \\ \vdots \\ \Matrix{P}_o\Vector{\lambda}_{(M+P)} \ebm, \\
\Matrix{\Lambda}_1, & \dots , \Matrix{\Lambda}_{2(M+P)} \in \mathbb{S}^3,\\
\Vector{\lambda}_1, &  \dots, \Vector{\lambda}_{(M+P)} \in \Real^9.
\end{aligned}
\end{equation}
We can subdivide $\Vector{\lambda}$ into components 
\begin{equation}
\Vector{\lambda}_l \Defined \bbm \Vector{\lambda}_{l, ijk}^{\! T} & \Vector{\lambda}_{l, jki}^{\! T} & \Vector{\lambda}_{l, kij}^{\! T} \ebm^\top \ \forall \; l=1,\dots,M+P,
\end{equation}
where $\Vector{\lambda}_{l, ijk}, \Vector{\lambda}_{l, jki}, \Vector{\lambda}_{l, kij} \in \Real^3$.
Finally, the matrices $\Matrix{P}_o$ and $\Matrix{P}_h$ are
\begin{equation}
\begin{aligned}
\Matrix{P}_o \Defined & \bbm \Zero & \Identity & \Zero \\
\Zero & \Zero & \Identity \\
\Identity & \Zero & \Zero \ebm \in \Real^{9\times 9},\\
\Matrix{P}_h(\Vector{\lambda}_l) \Defined & \bbm \Matrix{0}_{3\times3} & -\Vector{\lambda}_{l,ijk}^{\wedge} & \Vector{\lambda}_{l,kij}^{\wedge}\\
\Vector{\lambda}_{l,ijk}^{\wedge} & \Matrix{0}_{3\times3} & -\Vector{\lambda}_{l,jki}^{\wedge}\\
-\Vector{\lambda}_{l,kij}^{\wedge} & \Vector{\lambda}_{l,jki}^{\wedge} & \Matrix{0}_{3\times3}\ebm \in \Sym{9}.
\end{aligned}
\end{equation}
\edit{These expressions are essentially a multi-rotation extension of the matrices appearing in the supplementary material of \mbox{\cite{briales2017convex}}.}
\section{Problem Sparsity}
\label{app:sparsity}
The computational effort required to solve \Cref{prob:monocular_HERW_reduced} with the solution method described in \Cref{sec:certifiable} depends not only on the total number of matrix variables $M + P$, but also on the structure of $\Graph$. 
For example, consider the two directed graphs shown in \Cref{fig:diagram-many}, either of which is capable of representing eight cameras observing an arm with a target of unknown scale: \Cref{fig:many-ys} corresponds to the interpretation in \Cref{eqn:rw-constraints}, whereas \Cref{fig:many-xs} corresponds to \Cref{eqn:rw-constraints-alternative}.
The fill patterns after taking the Schur complement of $\Matrix{Q}$ for these two graphs are displayed in \Cref{fig:sp-matrices-many}.
In contrast to the multiple $\Matrix{Y}$s case in \Cref{fig:many-ys}, the multiple $\Matrix{X}$s formulation in \Cref{fig:many-xs} has a sparse block arrowhead pattern, which decreases the time required to solve the semidefinite relaxation of the QCQP.
Many RWHEC problems are more complex than the cases in \Cref{fig:diagram-many} (i.e., involve multiple $\Matrix{X}$s and $\Matrix{Y}$s that are densely interconnected), but these two limiting cases are common in practice.
In all experiments in \Cref{sec:HERW-sim} and \Cref{sec:HERW-rw}, our method uses the conic operator splitting method COSMO~\citep{garstka2021cosmo}, which is able to exploit sparsity in $\Matrix{Q}'$. 
Therefore, whenever possible, practitioners using our method are encouraged to formulate their particular generalized RWHEC problem in a manner that promotes sparsity in the reduced objective matrix. 
We recognize that this leeway is not always available, as the sensors used to produce measurements $\Matrix{B}_k$ may only align well with the isotropic noise models of \Cref{subsec:MLE} for a single formulation. 
The interested reader can learn more about the impact of different interpretations of RWHEC from \cite{ha2023probabilistic}, where this topic is explored alongside a detailed discussion of anisotropic noise distributions beyond the scope of this work. 

\begin{figure}[h]
	\centering
	\begin{subfigure}[t]{.475\columnwidth}
		\centering
		\includegraphics[width=\columnwidth]{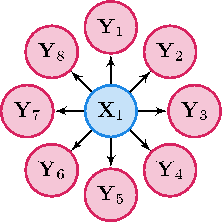}
		\caption{A graph with multiple $\Matrix{Y}$s.}
		\label{fig:many-ys}
	\end{subfigure}\hfill
	\begin{subfigure}[t]{.475\columnwidth}
		\centering
		\includegraphics[width=\columnwidth]{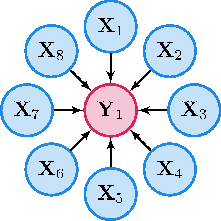}
		\caption{A graph with multiple $\Matrix{X}$s.}
		\label{fig:many-xs}
	\end{subfigure}
	\caption{Two directed graphs representing generalized RWHEC problems. Depending on the interpretation of variables discussed in \Cref{subsec:MLE}, it is possible to use either graph for the same RWHEC scenario. 
	\edit{For example, the convention in \mbox{\Cref{fig:HERW}} treats $\Matrix{X}_1$ as the hand-eye transformation and would correspond to \mbox{\Cref{fig:many-ys}} when extended to include multiple targets.}
	 \edit{In contrast, the convention in \mbox{\Cref{fig:MCOA}} treats $\Matrix{Y}_1$ as the hand-eye transformation and therefore corresponds with \mbox{\Cref{fig:many-xs}}, leading to a sparser objective function (see \mbox{\Cref{fig:sp-matrices-many}}).}
	 }
	\label{fig:diagram-many}
\end{figure}
\begin{figure}[h]
	\centering
	\begin{subfigure}[t]{.48\columnwidth}
		\centering
		\resizebox{0.95\columnwidth}{!}{%
		\includegraphics[width=\columnwidth]{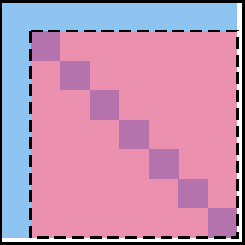}
		}
		\caption{Sparsity pattern of $\Matrix{Q}'$ for a formulation with multiple $\Matrix{Y}$s.}
		\label{fig:sp-matrix-many-ys}
	\end{subfigure}
	\hfill
	\begin{subfigure}[t]{.48\columnwidth}
		\centering
		\resizebox{0.95\columnwidth}{!}{%
		\includegraphics[width=\columnwidth]{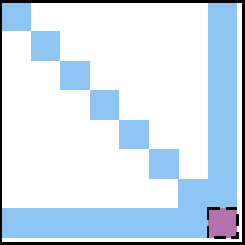}
		} 
		\caption{Sparsity pattern of $\Matrix{Q}'$ for a formulation with multiple $\Matrix{X}$s.}
		\label{fig:sp-matrix-many-xs}
	\end{subfigure}
	\caption{Sparse matrix patterns for the multiple $\Matrix{X}$s and $\Matrix{Y}$s formulations in \Cref{fig:diagram-many}. In these diagrams, the blue areas with no borders are the default fill patterns of the problems, while the red areas with dashed borders are the matrix areas that are filled by the Schur complement dimensionality reduction. The blue fill areas represent the general fill pattern of the sparse matrix and may have internal sparsity patterns. The final empty row and column represent the terms in the objective function involving the homogenization term, which only exists in the constraints in the unscaled case.}
	\label{fig:sp-matrices-many}
\end{figure}
\section{A Local On-Manifold RWHEC Solver}
\label{app:local}
In this appendix, we describe LOM, our local on-manifold solver for the monocular RWHEC problem. 
The standard or known-scale version is not explicitly included, but it can be easily derived by setting $\alpha = 1$ in the expressions to follow.
Given a bipartite graph with nodes $\{\Matrix{X}_1,\dots,\Matrix{X}_{M},\Matrix{Y}_1,\dots,\Matrix{Y}_{P}\}$, recall that $\mathcal{D}_{(j,k)}$ is the data describing measurements involving $\Matrix{X}_j$ and $\Matrix{Y}_k$:
\begin{equation}
\mathcal{D}_{(j,k)} = \{(\Matrix{A}_{(j,k),i}, \Matrix{B}_{(j,k),i}) \; \forall \; k=1,\dots,N_{(j,k)}\}.
\end{equation}
The noisy measurement models for a given $(\Matrix{A}_{(j,k),i}, \Matrix{B}_{(j,k),i})$ pair are
\begin{equation} \label{eq:on-manifold_noise}
\begin{aligned}
\Matrix{R}_{\Matrix{B}_{(j,k),i}} = & \Matrix{R}^\top_{\Matrix{Y}_k}\Matrix{R}_{\Matrix{A}_{(j,k),i}}\Matrix{R}_{\Matrix{X}_j}\exp\left(\Vector{\epsilon}_{R_i}^\wedge\right), \\
\Vector{\epsilon}_{R_i} \sim & \NormalDistribution{0}{{\sigma_{R}}^2\Identity}, \\
\Vector{t}_{\Matrix{B}_{(j,k),i}} = & \Matrix{R}^\top_{\Matrix{Y}_k}\left(\Matrix{R}_{\Matrix{A}_{(j,k),i}}\Vector{t}_{\Matrix{X}_j} + \alpha\Vector{t}_{\Matrix{A}_{(j,k),i}} - \Vector{t}_{\Matrix{Y}_k} \right) + \Vector{\epsilon}_{t_i}, \\
\Vector{\epsilon}_{t_i} \sim & \NormalDistribution{0}{{\sigma_{t}}^2\Identity}.
\end{aligned}
\end{equation}
This model leverages the right-perturbation noise framework of \cite{barfoot2024state} to ensure that the model is an on-manifold analogue of the MLE model in \Cref{subsec:MLE}.
In the experiments of Sections \ref{sec:HERW-sim} and \ref{sec:HERW-rw}, the $\sigma_t$ used in \Cref{eq:on-manifold_noise} is identical to the parameter with that name in \Cref{subsec:MLE}, whereas we use the approximation in Appendix A of \cite{rosen2019sesync} to compute $\sigma_R = 1/\sqrt{\kappa}$, where $\kappa$ is the concentration parameter of a Langevin distribution.
The associated error distributions are
\begin{equation}
\begin{aligned}
\Vector{e}_{R_{(j,k),i}} = & \log\left(\Matrix{R}^\top_{\Matrix{X}_j}\Matrix{R}^\top_{\Matrix{A}_{(j,k),i}}\Matrix{R}_{\Matrix{Y}_k}\Matrix{R}_{\Matrix{B}_{(j,k),i}}\right)^\vee \\ 
&\sim \NormalDistribution{0}{{\sigma_{R}}^2\Identity}, \\
\Vector{e}_{t_{(j,k),i}} = & \Matrix{R}_{\Matrix{A}_{(j,k),i}}\Vector{t}_{\Matrix{X}_j} + \alpha\Vector{t}_{\Matrix{A}_{(j,k),i}} - \Matrix{R}_{\Matrix{Y}_k}\Vector{t}_{\Matrix{B}_{(j,k),i}} - \Vector{t}_{\Matrix{Y}_k} \\
&\sim  \NormalDistribution{0}{{\sigma_{t}}^2\Identity}.
\end{aligned}
\end{equation}
The associated error Jacobians are
\begin{equation}
\begin{aligned}
\frac{\partial \Vector{e}_{R_{(j,k),i}}}{\partial \Vector{\psi}_{\Matrix{X}_j}} = & -\Matrix{R}^\top_{\Matrix{B}_{(j,k),i}}\Matrix{R}^\top_{\Matrix{Y}_k}\Matrix{R}_{\Matrix{A}_{(j,k),i}}, \\
\frac{\partial \Vector{e}_{R_{(j,k),i}}}{\partial \Vector{\psi}_{\Matrix{Y}_k}} = & \Matrix{R}^\top_{\Matrix{B}_{(j,k),i}}\Matrix{R}^\top_{\Matrix{Y}_k},\\
\frac{\partial \Vector{e}_{t_{(j,k),i}}}{\partial \Vector{t}_{\Matrix{X}_j}} = & \Matrix{R}_{\Matrix{A}_{(j,k),i}}, \\
\frac{\partial \Vector{e}_{t_{(j,k),i}}}{\partial \Vector{\psi}_{\Matrix{Y}_k}} = & \left(\Matrix{R}_{\Matrix{Y}_k}\Vector{t}_{\Matrix{B}_{(j,k),i}}\right)^\wedge, \\
\frac{\partial \Vector{e}_{t_{(j,k),i}}}{\partial \Vector{t}_{\Matrix{Y}_k}} = & - \Identity, \\
\frac{\partial \Vector{e}_{t_{(j,k),i}}}{\partial \alpha} = & \Vector{t}_{\Matrix{A}_{(j,k),i}},
\end{aligned}
\end{equation}
where $\Vector{\psi}_{\Matrix{X}_j}, \Vector{\psi}_{\Matrix{Y}_k} \in \Real^3$ are linear perturbations that can be mapped to the Lie algebra $\LieAlgebraSO{3}$ via the wedge operator $(\cdot)^\wedge$. 
LOM uses these Jacobians within the Ceres optimization library~\citep{agarwal2022ceres} to solve the following nonlinear least squares program:
\begin{problem}{Local RWHEC Optimization Problem} \label{prob:LOM}

\begin{equation} 
	\min_{\substack{\Matrix{X}_1,\dots,\Matrix{X}_{M},\\ \Matrix{Y}_1,\dots,\Matrix{Y}_{P},\\ \alpha}} \sum_{(j,k),i} \frac{1}{\sigma_R^2}\Vector{e}^\top_{R_{(j,k),i}}\Vector{e}_{R_{(j,k),i}} + \frac{1}{\sigma_t^2}\Vector{e}^\top_{t_{(j,k),i}}\Vector{e}_{t_{(j,k),i}}.
\end{equation}
\end{problem}

\section{Constraint Qualification} \label{app:constraint-qualification}

\begin{theorem}[A constraint qualification for locally redundant constraints]
\label{thm:constraint_qualification_with_redundant_constraints}
Let $c_1 \colon \R^n \to \R^{m_1}$ and $c_2 \colon \R^n \to \R^{m_2}$ be continuously-differentiable functions, and
\begin{equation}
\begin{gathered}
c \colon \R^n \to \R^{m_1 + m_2} \\
c(\Vector{x}) = \left( c_1(\Vector{x}), \: c_2(\Vector{x}) \right). \label{eq:constraint-decomposition}
\end{gathered}
\end{equation}
Fix $\bar{\Vector{x}} \in \R^n$, and let $\bar{\Vector{y}} \triangleq c(\bar{\Vector{x}}) =  (\bar{\Vector{y}}_1, \bar{\Vector{y}}_2) \in \R^{m_1 + m_2}$.

\begin{itemize}
 \item [$(a)$]  Suppose that the linear independence constraint qualification holds for $c_1$ at $\bar{\Vector{x}}$ \emph{(}i.e.\ that $\rank \nabla c_1 (\bar{\Vector{x}}) = m_1$\emph{)}, and define:
 \begin{equation}
 \label{definition_of_X}
\manX \triangleq \lbrace \Vector{x} \in \R^n \mid c_1(\Vector{x}) = \bar{\Vector{y}}_1 \rbrace.
 \end{equation}
  Then $\manX$ is locally a smooth embedded submanifold of $\R^n$ about $\bar{\Vector{x}}$; that is, there exists an open set $U \subseteq \R^n$ containing $\bar{\Vector{x}}$ such that $U \cap \manX$ is a smooth embedded submanifold of $\R^n$ of dimension $n - m_1$, and its tangent space at $\bar{\Vector{x}}$ is given by:
\begin{equation}
 \label{tangent_space_of_X_at_barX}
T_{\bar{\Vector{x}}}(\manX) = \ker \nabla c_1(\bar{\Vector{x}}).
\end{equation}

\item [$(b)$]  Suppose additionally that the second constraint function $c_2$ is \emph{locally constant} on $\manX$ about $\bar{\Vector{x}}$; that is, there exists an open set $V \subseteq U$ containing $\bar{\Vector{x}}$ such that $c_2(\Vector{x}) \equiv \bar{\Vector{y}}_2$ for all $\Vector{x} \in \manX \cap V$.  Then the feasible set
\begin{equation}
\label{definition_of_M}
\manifold \triangleq \lbrace \Vector{x} \in \R^n \mid  c(\Vector{x}) = \bar{\Vector{y}} \rbrace
\end{equation}
determined by \emph{all} of the constraints $c(\Vector{x})$ locally coincides with $\manX$:
\begin{equation}
\label{M_and_X_locally_coincide}
\manifold \cap V = \manX \cap V.
\end{equation}
In particular, $\manifold$ is locally a smooth embedded submanifold of $\R^n$ about $\bar{\Vector{x}}$ of dimension $n - m_1$, and its tangent space at $\bar{\Vector{x}}$ is given by:
\begin{equation}
\label{tangent_space_of_M}
T_{\bar{\Vector{x}}}(\manifold) = \ker \nabla c(\bar{\Vector{x}}).
\end{equation}
\end{itemize}

\end{theorem}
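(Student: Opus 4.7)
The plan is to prove part (a) by a direct application of the Submersion (or Preimage) Theorem from differential topology, and then exploit the local coincidence of $\manifold$ and $\manX$ to obtain part (b). For part (a), the hypothesis $\rank \nabla c_1(\bar{\Vector{x}}) = m_1$ says $c_1$ is a submersion at $\bar{\Vector{x}}$. Since the locus of maximal-rank matrices is open in $\R^{m_1 \times n}$ and $\nabla c_1$ is continuous, $c_1$ remains a submersion on an open neighbourhood $U$ of $\bar{\Vector{x}}$. The Preimage Theorem then gives that $\manX \cap U = c_1^{-1}(\bar{\Vector{y}}_1) \cap U$ is a smooth embedded submanifold of $\R^n$ of dimension $n - m_1$, and its tangent space at $\bar{\Vector{x}}$ is precisely $\ker \nabla c_1(\bar{\Vector{x}})$.

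For part (b), I would first establish the local coincidence $\manifold \cap V = \manX \cap V$. The inclusion $\manifold \cap V \subseteq \manX \cap V$ is immediate from the definitions, since any $\Vector{x}$ satisfying $c(\Vector{x}) = \bar{\Vector{y}}$ necessarily satisfies $c_1(\Vector{x}) = \bar{\Vector{y}}_1$. Conversely, any $\Vector{x} \in \manX \cap V$ satisfies $c_1(\Vector{x}) = \bar{\Vector{y}}_1$ by the definition of $\manX$, and the local-constancy hypothesis gives $c_2(\Vector{x}) = \bar{\Vector{y}}_2$, so $\Vector{x} \in \manifold$. This coincidence transfers the submanifold structure of $\manX$ from part (a) to $\manifold$ about $\bar{\Vector{x}}$, establishing that $\manifold$ is locally a smooth embedded submanifold of dimension $n - m_1$ with $T_{\bar{\Vector{x}}}(\manifold) = T_{\bar{\Vector{x}}}(\manX) = \ker \nabla c_1(\bar{\Vector{x}})$.

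The remaining step is to identify this tangent space with $\ker \nabla c(\bar{\Vector{x}})$. The inclusion $\ker \nabla c(\bar{\Vector{x}}) \subseteq \ker \nabla c_1(\bar{\Vector{x}})$ is immediate because $\nabla c_1(\bar{\Vector{x}})$ is a block row of $\nabla c(\bar{\Vector{x}})$. For the reverse inclusion, pick any $\Vector{v} \in \ker \nabla c_1(\bar{\Vector{x}}) = T_{\bar{\Vector{x}}}(\manX)$ and choose a smooth curve $\gamma \colon (-\epsilon, \epsilon) \to \manX$ with $\gamma(0) = \bar{\Vector{x}}$ and $\gamma'(0) = \Vector{v}$, which exists by the standard curve-based characterisation of tangent vectors to an embedded submanifold. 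Since $c_2$ is locally constant on $\manX$ near $\bar{\Vector{x}}$, shrinking $\epsilon$ if necessary makes $c_2 \circ \gamma$ identically $\bar{\Vector{y}}_2$; differentiating at $t = 0$ by the chain rule yields $\nabla c_2(\bar{\Vector{x}}) \Vector{v} = \Zero$, so $\Vector{v} \in \ker \nabla c(\bar{\Vector{x}})$ as desired.

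The only genuinely delicate step is this tangent space computation: one must convert the \emph{pointwise} vanishing of $c_2 - \bar{\Vector{y}}_2$ on $\manX$ into the \emph{infinitesimal} condition that $\nabla c_2(\bar{\Vector{x}})$ annihilates $T_{\bar{\Vector{x}}}(\manX)$, which requires invoking the smooth submanifold structure furnished by part (a) to produce admissible curves. Everything else is a routine unpacking of the submersion / implicit function theorem machinery.
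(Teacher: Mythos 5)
Your proposal is correct and follows essentially the same route as the paper's proof: local stability of the submersion property plus the preimage theorem for part (a), the immediate set-theoretic coincidence $\manifold \cap V = \manX \cap V$ for the first claim of part (b), and a two-way inclusion for the tangent-space identity. The only difference is cosmetic — you make explicit, via a curve $\gamma$ in $\manX$ with $\gamma'(0) = \Vector{v}$, the step where the paper simply asserts that local constancy of $c_2$ on $\manifold$ forces $\nabla c_2(\bar{\Vector{x}})\Vector{v} = \Zero$; this is the standard justification of that assertion and fills in a detail the paper leaves implicit.
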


\begin{proof}
In the language of differential topology, the statement that the LICQ holds at $\bar{\Vector{x}}$ (i.e.\ that $\rank \nabla c_1(\bar{\Vector{x}}) = m_1$) is equivalent to the statement that the constraint function $c_1$ is a \emph{submersion} at $\bar{\Vector{x}}$.  This property is \emph{locally stable}; that is, if $c_1$ is a submersion at $\bar{\Vector{x}}$, then in fact there exists an open set $U \subseteq \R^n$ containing $\bar{\Vector{x}}$ such that $c_1 |_U \colon U \to \R^{m_1}$ is a (global) submersion on $U$ \cite[Prop.\ 4.1]{Lee2013Manifolds}.  It follows that the preimage of $\bar{\Vector{y}}_1$ under $c_1 |_U$:
\begin{equation}
\left(c_1|_U\right)\inv(\bar{\Vector{y}}_1) \triangleq \lbrace \Vector{x} \in U \mid c_1(\Vector{x}) = \bar{\Vector{y}}_1 \rbrace = U \cap \manX
\end{equation}
is a smooth embedded submanifold of $\R^n$ of dimension $n - m_1$ \cite[Cor.\ 5.13]{Lee2013Manifolds}, and its tangent space at $\bar{\Vector{x}}$ is given by \eqref{tangent_space_of_X_at_barX} \cite[Sec.\ 1.4]{Guillemin1974Differential}:
\begin{equation}
T_{\bar{\Vector{x}}}(\manX) = \ker \nabla c_1(\bar{\Vector{x}}).
\end{equation}

To prove part (b), note that the hypotheses that $V \subseteq U$ and $c_2(\Vector{x}) \equiv \bar{\Vector{y}}_2$ for all $\Vector{x} \in V$ immediately imply 
\begin{equation}
c(\Vector{x}) = \bar{\Vector{y}} \quad \Longleftrightarrow \quad c_1(\Vector{x}) = \bar{\Vector{y}}_1 \quad \forall \Vector{x} \in V,
\end{equation}
which is equivalent to \eqref{M_and_X_locally_coincide} (cf.\ \eqref{definition_of_X} and \eqref{definition_of_M}).  Thus it remains only to show \eqref{tangent_space_of_M}.  To do so, we will prove that each of $\ker \nabla c(\bar{\Vector{x}})$ and $T_{\bar{\Vector{x}}}(\manifold)$ is contained in the other, making use of the fact that:
\begin{equation}
\label{tangent_spaces_of_X_and_M_coincide_at_barx}
T_{\bar{\Vector{x}}}(\manX) = T_{\bar{\Vector{x}}}(\manifold)
\end{equation}
since $\manX$ and $\manifold$ locally coincide on the neighbourhood $V$ of $\bar{\Vector{x}}$.  Suppose first that $\Vector{v} \in \ker \nabla c(\bar{\Vector{x}})$.  Then:
\begin{equation}
0 = \nabla c(\bar{\Vector{x}}) \Vector{v} =
\begin{bmatrix}
\nabla c_1(\bar{\Vector{x}}) \Vector{v} \\
\nabla c_2(\bar{\Vector{x}}) \Vector{v}
\end{bmatrix}
\end{equation}
which implies in particular that $\Vector{v} \in \ker \nabla c_1(\bar{\Vector{x}}) = T_{\bar{\Vector{x}}}(\manX) = T_{\bar{\Vector{x}}}(\manifold)$ by \eqref{tangent_space_of_X_at_barX} and \eqref{tangent_spaces_of_X_and_M_coincide_at_barx}; this shows that $\ker \nabla c(\bar{\Vector{x}}) \subseteq T_{\bar{\Vector{x}}}(\manifold)$.  Conversely, suppose that $\Vector{v} \in T_{\bar{\Vector{x}}}(\manifold)$. Then $\nabla c_1(\Vector{x})\Vector{v} = 0$ by \eqref{tangent_spaces_of_X_and_M_coincide_at_barx} and \eqref{tangent_space_of_X_at_barX}, and we must have $\nabla c_2(\bar{\Vector{x}}) \Vector{v} = 0$ (since $\Vector{v} \in T_{\bar{\Vector{x}}}(\manifold)$ and $c_2$ is locally constant on $\manifold$ about $\bar{\Vector{x}}$ by hypothesis); together these prove that $\Vector{v} \in \ker \nabla c(\bar{\Vector{x}})$, and thus $T_{\bar{\Vector{x}}}(\manifold) \subseteq \ker \nabla c(\bar{\Vector{x}})$.  Altogether this establishes \eqref{tangent_space_of_M}, as claimed.
\end{proof}

\begin{funding}
This work was funded in part by MIT Lincoln Laboratory through Air Force Contract FA8702-15-D-0001.
Jonathan Kelly was supported by the Canada Research Chairs program.
\end{funding}

\theendnotes

\bibliographystyle{SageH}
\bibliography{robotics_abbrv, references}

@article{Li_2010_Simultaneous,
	title={Simultaneous robot-world and hand-eye calibration using dual-quaternions and {Kronecker} product},
	author={Ai Guo Li and Lin Wang and Defeng Wu},
	journal={International Journal of Physical Sciences},
	year={2010},
	volume={5},
	pages={1530-1536}
}

@article{Tabb_2017_HERW,
	author={Tabb, Amy and Yousef, Khalil M. Ahmad},
	title={Solving the robot-world hand-eye(s) calibration problem with iterative methods},
	journal={Machine Vision and Applications},
	volume = {28},
	pages={569--590},
	year={2017},
	publisher={Springer}
}

@INPROCEEDINGS{Evangelista_2023_Graph-Based,
	author={Evangelista, Daniele and Olivastri, Emilio and Allegro, Davide and Menegatti, Emanuele and Pretto, Alberto},
	booktitle={2023 IEEE International Conference on Robotics and Automation (ICRA)}, 
	title={A Graph-Based Optimization Framework for Hand-Eye Calibration for Multi-Camera Setups}, 
	year={2023},
	volume={},
	number={},
	pages={11474-11480},
	doi={10.1109/ICRA48891.2023.10160758}}

@INPROCEEDINGS{Strobl_2006_Optimal,
	author={Strobl, Klaus H. and Hirzinger, Gerd},
	booktitle={2006 IEEE/RSJ International Conference on Intelligent Robots and Systems}, 
	title={Optimal Hand-Eye Calibration}, 
	year={2006},
	volume={},
	number={},
	pages={4647-4653},
	doi={10.1109/IROS.2006.282250}}

@mastersthesis{Grebe_2022_Study,
	address = {Toronto, Ontario, Canada},
	author = {Christopher Grebe},
	month = {January},
	school = {University of Toronto},
	title = {A Study of Observability-Aware Trajectory Optimization},
	year = {2022}
}

@inproceedings{Rehder_2016_Extending,
	title = {Extending {Kalibr}: {{Calibrating}} the Extrinsics of Multiple {{IMUs}} and of Individual Axes},
	shorttitle = {Extending Kalibr},
	booktitle = ICRA,
	author = {Rehder, Joern and Nikolic, Janosch and Schneider, Thomas and Hinzmann, Timo and Siegwart, Roland},
	year = {2016},
	month = may,
	pages = {4304--4311},
	address = {{Stockholm, Sweden}},
	language = {en}
}

@article{andreff_robot_2001,
  title = {Robot {{Hand-Eye Calibration Using Structure-from-Motion}}},
  author = {Andreff, Nicolas and Horaud, Radu and Espiau, Bernard},
  year = {2001},
  month = mar,
  journal = {The International Journal of Robotics Research},
  volume = {20},
  number = {3},
  pages = {228--248},
  langid = {english}
}

@book{boyd2004convex,
  title = {Convex {{Optimization}}},
  author = {Boyd, Stephen P. and Vandenberghe, Lieven},
  year = {2004},
  publisher = {{Cambridge University Press}},
  address = {{Cambridge, UK ; New York}},
  isbn = {978-0-521-83378-3},
  langid = {english},
  lccn = {QA402.5 .B69 2004},
  keywords = {Convex functions,Mathematical optimization}
}

@inproceedings{briales2017convex,
  title={{Convex Global 3D Registration with Lagrangian Duality}},
  author={Briales, Jesus and Gonzalez-Jimenez, Javier and others},
  booktitle={Proc. IEEE Conference on Computer Vision and Pattern Recognition (CVPR)},
  year={2017}
}

@article{yu2024simsync,
  title = {{{SIM-Sync}}: {{From Certifiably Optimal Synchronization Over}} the {{3D Similarity Group}} to {{Scene Reconstruction With Learned Depth}}},
  shorttitle = {{{SIM-Sync}}},
  author = {Yu, Xihang and Yang, Heng},
  year = {2024},
  month = may,
  journal = {IEEE Robotics and Automation Letters},
  volume = {9},
  number = {5},
  pages = {4471--4478},
  issn = {2377-3766, 2377-3774},
  doi = {10.1109/LRA.2024.3377006},
  copyright = {https://ieeexplore.ieee.org/Xplorehelp/downloads/license-information/IEEE.html},
  langid = {english}
}

@inproceedings{heller2012branch,
  title={{A branch-and-bound Algorithm for Globally Optimal Hand-eye Calibration}},
  author={Heller, Jan and Havlena, Michal and Pajdla, Tomas},
  booktitle={Proc. IEEE Conference on Computer Vision and Pattern Recognition (CVPR)},
  pages={1608--1615},
  year={2012},
  organization={IEEE}
}

@article{dornaika1998simultaneous,
  title={Simultaneous robot-world and hand-eye calibration},
  author={Dornaika, Fadi and Horaud, Radu},
  journal={IEEE transactions on Robotics and Automation},
  volume={14},
  number={4},
  pages={617--622},
  year={1998},
  publisher={IEEE}
}

@article{giamou2018certifiably,
  title={{Certifiably Globally Optimal Extrinsic Calibration from Per-Sensor Egomotion}},
  author={Giamou, Matthew and Ma, Ziye and Peretroukhin, Valentin and Kelly, Jonathan},
  journal={arXiv preprint arXiv:1809.03554},
  year={2018}
}

@inproceedings{wise_certifiably_2020,
  title = {Certifiably {{Optimal Monocular Hand-Eye Calibration}}},
  booktitle = {{{IEEE International Conference}} on {{Multisensor Fusion}} and {{Integration}} for {{Intelligent Systems}} ({{MFI}})},
  author = {Wise, Emmett and Giamou, Matthew and Khoubyarian, Soroush and Grover, Abhinav and Kelly, Jonathan},
  year = {2020},
  month = oct,
  pages = {271--278},
  publisher = {{IEEE}},
  urldate = {2020-07-10},
  langid = {english}
}

@inproceedings{wodtko_globally_2021,
  title = {Globally {{Optimal Multi-Scale Monocular Hand-Eye Calibration Using Dual Quaternions}}},
  booktitle = {2021 {{International Conference}} on {{3D Vision}} ({{3DV}})},
  author = {Wodtko, Thomas and Horn, Markus and Buchholz, Michael and Dietmayer, Klaus},
  year = {2021},
  month = dec,
  pages = {249--257},
  publisher = {{IEEE}},
  langid = {english}
}

@inproceedings{wang2022accurate,
  title={Accurate calibration of multi-perspective cameras from a generalization of the hand-eye constraint},
  author={Wang, Yifu and Jiang, Wenqing and Huang, Kun and Schwertfeger, S{\"o}ren and Kneip, Laurent},
  booktitle={2022 International Conference on Robotics and Automation (ICRA)},
  pages={1244--1250},
  year={2022},
  organization={IEEE}
}

@misc{horn_extrinsic_2023,
  title = {Extrinsic {{Infrastructure Calibration Using}} the {{Hand-Eye Robot-World Formulation}}},
  author = {Horn, Markus and Wodtko, Thomas and Buchholz, Michael and Dietmayer, Klaus},
  year = {2023},
  month = may,
  number = {arXiv:2305.01407},
  eprint = {2305.01407},
  primaryclass = {cs},
  publisher = {{arXiv}},
  urldate = {2023-05-09},
  archiveprefix = {arxiv},
  langid = {english}
}

@article{rosen2019sesync,
  title = {{{SE-Sync}}: {{A}} Certifiably Correct Algorithm for Synchronization over the Special {{Euclidean}} Group},
  shorttitle = {{{SE-Sync}}},
  author = {Rosen, David M and Carlone, Luca and Bandeira, Afonso S and Leonard, John J},
  year = {2019},
  month = mar,
  journal = {The International Journal of Robotics Research},
  volume = {38},
  number = {2-3},
  pages = {95--125},
  issn = {0278-3649, 1741-3176},
  doi = {10.1177/0278364918784361},
  langid = {english}
}

@article{yang2021teaser,
  title = {{{TEASER}}: {{Fast}} and {{Certifiable Point Cloud Registration}}},
  shorttitle = {{{TEASER}}},
  author = {Yang, Heng and Shi, Jingnan and Carlone, Luca},
  year = {2021},
  month = apr,
  journal = {IEEE Transactions on Robotics},
  volume = {37},
  number = {2},
  pages = {314--333},
  issn = {1552-3098, 1941-0468},
  doi = {10.1109/TRO.2020.3033695},
  langid = {english}
}

@article{holmes2023efficient,
  title = {An {{Efficient Global Optimality Certificate}} for {{Landmark-Based SLAM}}},
  author = {Holmes, Connor and Barfoot, Timothy D.},
  year = {2023},
  month = mar,
  journal = {IEEE Robotics and Automation Letters},
  volume = {8},
  number = {3},
  pages = {1539--1546},
  issn = {2377-3766, 2377-3774},
  doi = {10.1109/LRA.2023.3238173},
  copyright = {https://ieeexplore.ieee.org/Xplorehelp/downloads/license-information/IEEE.html},
  langid = {english}
}

@article{fan2020cplslam,
  title = {{{CPL-SLAM}}: {{Efficient}} and {{Certifiably Correct Planar Graph-Based SLAM Using}} the {{Complex Number Representation}}},
  shorttitle = {{{CPL-SLAM}}},
  author = {Fan, Taosha and Wang, Hanlin and Rubenstein, Michael and Murphey, Todd},
  year = {2020},
  month = dec,
  journal = {IEEE Transactions on Robotics},
  volume = {36},
  number = {6},
  pages = {1719--1737},
  issn = {1552-3098, 1941-0468},
  doi = {10.1109/TRO.2020.3006717},
  copyright = {https://ieeexplore.ieee.org/Xplorehelp/downloads/license-information/IEEE.html},
  langid = {english}
}

@article{tian2021distributed,
  title = {Distributed {{Certifiably Correct Pose-Graph Optimization}}},
  author = {Tian, Yulun and Khosoussi, Kasra and Rosen, David M. and How, Jonathan P.},
  year = {2021},
  month = dec,
  journal = {IEEE Transactions on Robotics},
  volume = {37},
  number = {6},
  pages = {2137--2156},
  issn = {1552-3098, 1941-0468},
  doi = {10.1109/TRO.2021.3072346},
  copyright = {https://ieeexplore.ieee.org/Xplorehelp/downloads/license-information/IEEE.html},
  langid = {english}
}

@article{holmes2024semidefinite,
  title = {On {{Semidefinite Relaxations}} for {{Matrix-Weighted State-Estimation Problems}} in {{Robotics}}},
  author = {Holmes, Connor and D{\"u}mbgen, Frederike and Barfoot, Timothy},
  year = {2024},
  journal = {IEEE Transactions on Robotics},
  volume = {40},
  pages = {4805--4824},
  issn = {1552-3098, 1941-0468},
  doi = {10.1109/TRO.2024.3475220},
  copyright = {https://ieeexplore.ieee.org/Xplorehelp/downloads/license-information/IEEE.html},
  langid = {english}
}

@misc{dumbgen2023globally,
  title = {Toward {{Globally Optimal State Estimation Using Automatically Tightened Semidefinite Relaxations}}},
  author = {D{\"u}mbgen, Frederike and Holmes, Connor and Agro, Ben and Barfoot, Timothy D.},
  year = {2023},
  month = aug,
  number = {arXiv:2308.05783},
  eprint = {2308.05783},
  primaryclass = {cs},
  publisher = {{arXiv}},
  archiveprefix = {arxiv},
  langid = {english},
  keywords = {Computer Science - Robotics}
}

@article{yang2020one,
  title={One ring to rule them all: Certifiably robust geometric perception with outliers},
  author={Yang, Heng and Carlone, Luca},
  journal={Advances in neural information processing systems},
  volume={33},
  pages={18846--18859},
  year={2020}
}

@article{garcia-salguero2021certifiable,
  title = {Certifiable Relative Pose Estimation},
  author = {{Garcia-Salguero}, Mercedes and Briales, Jesus and {Gonzalez-Jimenez}, Javier},
  year = {2021},
  month = may,
  journal = {Image and Vision Computing},
  volume = {109},
  pages = {104142},
  issn = {02628856},
  doi = {10.1016/j.imavis.2021.104142},
  langid = {english}
}

@article{zhao2020efficient,
  title = {An {{Efficient Solution}} to {{Non-Minimal Case Essential Matrix Estimation}}},
  author = {Zhao, Ji},
  year = {2020},
  journal = {IEEE Transactions on Pattern Analysis and Machine Intelligence},
  pages = {1--1},
  issn = {0162-8828, 2160-9292, 1939-3539},
  doi = {10.1109/TPAMI.2020.3030161},
  copyright = {https://ieeexplore.ieee.org/Xplorehelp/downloads/license-information/IEEE.html},
  langid = {english}
}

@article{papalia2024certifiably,
  title = {Certifiably {{Correct Range-Aided SLAM}}},
  author = {Papalia, Alan and Fishberg, Andrew and O'Neill, Brendan W. and How, Jonathan P. and Rosen, David M. and Leonard, John J.},
  year = {2024},
  journal = {IEEE Transactions on Robotics},
  volume = {40},
  pages = {4265--4283},
  issn = {1552-3098, 1941-0468},
  doi = {10.1109/TRO.2024.3454430},
  copyright = {https://ieeexplore.ieee.org/Xplorehelp/downloads/license-information/IEEE.html},
  langid = {english}
}

@inproceedings{grebe2021observabilityaware,
  title = {Observability-{{Aware Trajectory Optimization}}: {{Theory}}, {{Viability}}, and {{State}} of the {{Art}}},
  shorttitle = {Observability-{{Aware Trajectory Optimization}}},
  booktitle = {2021 {{IEEE International Conference}} on {{Multisensor Fusion}} and {{Integration}} for {{Intelligent Systems}} ({{MFI}})},
  author = {Grebe, Christopher and Wise, Emmett and Kelly, Jonathan},
  year = {2021},
  month = sep,
  pages = {1--8},
  publisher = {IEEE},
  address = {Karlsruhe, Germany},
  doi = {10.1109/MFI52462.2021.9591177},
  copyright = {https://ieeexplore.ieee.org/Xplorehelp/downloads/license-information/IEEE.html},
  isbn = {978-1-66544-521-4},
  langid = {english}
}

@article{dumbgen2023safe,
  title = {Safe and {{Smooth}}: {{Certified Continuous-Time Range-Only Localization}}},
  shorttitle = {Safe and {{Smooth}}},
  author = {Dumbgen, Frederike and Holmes, Connor and Barfoot, Timothy D.},
  year = {2023},
  month = feb,
  journal = {IEEE Robotics and Automation Letters},
  volume = {8},
  number = {2},
  pages = {1117--1124},
  issn = {2377-3766, 2377-3774},
  doi = {10.1109/LRA.2022.3233232},
  langid = {english}
}

@incollection{rosen2021scalable,
  title = {Scalable {{Low-Rank Semidefinite Programming}} for {{Certifiably Correct Machine Perception}}},
  booktitle = {Algorithmic {{Foundations}} of {{Robotics XIV}}},
  author = {Rosen, David M.},
  editor = {LaValle, Steven M. and Lin, Ming and Ojala, Timo and Shell, Dylan and Yu, Jingjin},
  year = {2021},
  volume = {17},
  pages = {551--566},
  publisher = {{Springer International Publishing}},
  address = {{Cham}},
  doi = {10.1007/978-3-030-66723-8_33},
  isbn = {978-3-030-66722-1 978-3-030-66723-8},
  langid = {english}
}

@article{cifuentes2022local,
  title = {On the Local Stability of Semidefinite Relaxations},
  author = {Cifuentes, Diego and Agarwal, Sameer and Parrilo, Pablo A. and Thomas, Rekha R.},
  year = {2022},
  month = jun,
  journal = {Mathematical Programming},
  volume = {193},
  number = {2},
  pages = {629--663},
  issn = {0025-5610, 1436-4646},
  doi = {10.1007/s10107-021-01696-1},
  langid = {english}
}

@article{wu2022quadratic,
  title = {Quadratic {{Pose Estimation Problems}}: {{Globally Optimal Solutions}}, {{Solvability}}/{{Observability Analysis}}, and {{Uncertainty Description}}},
  shorttitle = {Quadratic {{Pose Estimation Problems}}},
  author = {Wu, Jin and Zheng, Yu and Gao, Zhi and Jiang, Yi and Hu, Xiangcheng and Zhu, Yilong and Jiao, Jianhao and Liu, Ming},
  year = {2022},
  month = oct,
  journal = {IEEE Transactions on Robotics},
  volume = {38},
  number = {5},
  pages = {3314--3335},
  issn = {1552-3098, 1941-0468},
  doi = {10.1109/TRO.2022.3155880},
  copyright = {https://ieeexplore.ieee.org/Xplorehelp/downloads/license-information/IEEE.html},
  langid = {english}
}

@article{xue2025qpep,
  title = {{\emph{S}} {{QPEP}}: {{Global Optimal Solutions}} to {{Scaled Quadratic Pose Estimation Problems}}},
  shorttitle = {{\emph{S}} {{QPEP}}},
  author = {Xue, Bohuan and Zhu, Yilong and Liu, Tianyu and Wu, Jin and Jiao, Jianhao and Jiang, Yi and Zhang, Chengxi and Jiang, Xinyu and He, Zhijian},
  year = {2025},
  journal = {IEEE Transactions on Instrumentation and Measurement},
  pages = {1--1},
  issn = {0018-9456, 1557-9662},
  doi = {10.1109/TIM.2025.3540135},
  copyright = {https://ieeexplore.ieee.org/Xplorehelp/downloads/license-information/IEEE.html},
  langid = {english}
}

@article{shah_solving_2013,
  title = {Solving the {{Robot-World}}/{{Hand-Eye Calibration Problem Using}} the {{Kronecker Product}}},
  author = {Shah, Mili},
  year = {2013},
  month = aug,
  journal = {Journal of Mechanisms and Robotics},
  volume = {5},
  number = {3},
  pages = {031007},
  urldate = {2019-11-04},
  langid = {english}
}

@article{zhuang1994simultaneous,
  title = {Simultaneous Robot/World and Tool/Flange Calibration by Solving Homogeneous Transformation Equations of the Form {{AX}}={{YB}}},
  author = {Zhuang, H. and Roth, Z.S. and Sudhakar, R.},
  year = {1994},
  journal = {IEEE Transactions on Robotics and Automation},
  volume = {10},
  number = {4},
  pages = {549--554},
  issn = {1042296X},
  doi = {10.1109/70.313105},
  copyright = {https://ieeexplore.ieee.org/Xplorehelp/downloads/license-information/IEEE.html},
  langid = {english}
}

@article{henderson1981vecpermutation,
  title = {The Vec-Permutation Matrix, the Vec Operator and {{Kronecker}} Products: A Review},
  shorttitle = {The Vec-Permutation Matrix, the Vec Operator and {{Kronecker}} Products},
  author = {Henderson, Harold V. and Searle, S. R.},
  year = {1981},
  month = jan,
  journal = {Linear and Multilinear Algebra},
  volume = {9},
  number = {4},
  pages = {271--288},
  issn = {0308-1087, 1563-5139},
  doi = {10.1080/03081088108817379},
  langid = {english}
}

@article{rosen2021advances,
  title = {Advances in {{Inference}} and {{Representation}} for {{Simultaneous Localization}} and {{Mapping}}},
  author = {Rosen, David M. and Doherty, Kevin J. and Ter{\'a}n Espinoza, Antonio and Leonard, John J.},
  year = {2021},
  month = may,
  journal = {Annual Review of Control, Robotics, and Autonomous Systems},
  volume = {4},
  number = {1},
  pages = {215--242},
  issn = {2573-5144, 2573-5144},
  doi = {10.1146/annurev-control-072720-082553},
  langid = {english}
}

@inproceedings{olson2011apriltag,
  title = {{{AprilTag}}: {{A}} Robust and Flexible Visual Fiducial System},
  shorttitle = {{{AprilTag}}},
  booktitle = {2011 {{IEEE International Conference}} on {{Robotics}} and {{Automation}}},
  author = {Olson, Edwin},
  year = {2011},
  month = may,
  pages = {3400--3407},
  publisher = {IEEE},
  address = {Shanghai, China},
  doi = {10.1109/ICRA.2011.5979561},
  isbn = {978-1-61284-386-5},
  langid = {english}
}

@inproceedings{yang2023next,
  title={Next-best-view selection for robot eye-in-hand calibration},
  author={Yang, Jun and Rebello, Jason and Waslander, Steven L},
  booktitle={2023 20th Conference on Robots and Vision (CRV)},
  pages={161--168},
  year={2023},
  organization={IEEE}
}

@article{fischler1981random,
  title={Random sample consensus: a paradigm for model fitting with applications to image analysis and automated cartography},
  author={Fischler, Martin A and Bolles, Robert C},
  journal={Communications of the ACM},
  volume={24},
  number={6},
  pages={381--395},
  year={1981},
  publisher={ACM New York, NY, USA}
}

@article{ha2023probabilistic,
  title = {Probabilistic {{Framework}} for {{Hand}}--{{Eye}} and {{Robot}}--{{World Calibration}} {{AX}} = {{YB}}},
  author = {Ha, Junhyoung},
  year = {2023},
  month = apr,
  journal = {IEEE Transactions on Robotics},
  volume = {39},
  number = {2},
  pages = {1196--1211},
  issn = {1552-3098, 1941-0468},
  doi = {10.1109/TRO.2022.3214350},
  copyright = {https://ieeexplore.ieee.org/Xplorehelp/downloads/license-information/IEEE.html},
  langid = {english}
}

@article{garstka2021cosmo,
  title = {{{COSMO}}: {{A Conic Operator Splitting Method}} for {{Convex Conic Problems}}},
  shorttitle = {{{COSMO}}},
  author = {Garstka, Michael and Cannon, Mark and Goulart, Paul},
  year = {2021},
  month = sep,
  journal = {Journal of Optimization Theory and Applications},
  volume = {190},
  number = {3},
  pages = {779--810},
  issn = {0022-3239, 1573-2878},
  doi = {10.1007/s10957-021-01896-x},
  langid = {english}
}

@article{dunning2017jump,
  title = {{{JuMP}}: {{A Modeling Language}} for {{Mathematical Optimization}}},
  shorttitle = {{{JuMP}}},
  author = {Dunning, Iain and Huchette, Joey and Lubin, Miles},
  year = {2017},
  month = jan,
  journal = {SIAM Review},
  volume = {59},
  number = {2},
  pages = {295--320},
  issn = {0036-1445, 1095-7200},
  doi = {10.1137/15M1020575},
  langid = {english}
}

@software{agarwal2022ceres,
  author = {Agarwal, Sameer and Mierle, Keir and The Ceres Solver Team},
  title = {{Ceres Solver}},
  license = {Apache-2.0},
  url = {https://github.com/ceres-solver/ceres-solver},
  version = {2.2},
  year = {2023},
  month = {10}
}

@book{barfoot2024state,
  title = {State {{Estimation}} for {{Robotics}}},
  author = {Barfoot, Timothy D.},
  year = {2024},
  month = jan,
  edition = {2},
  publisher = {Cambridge University Press},
  doi = {10.1017/9781009299909},
  copyright = {https://www.cambridge.org/core/terms},
  isbn = {978-1-107-15939-6 978-1-316-67152-8},
  langid = {english}
}

@article{ulrich2024uncertaintyaware,
  title = {Uncertainty-{{Aware Hand}}--{{Eye Calibration}}},
  author = {Ulrich, Markus and Hillemann, Markus},
  year = {2024},
  journal = {IEEE Transactions on Robotics},
  volume = {40},
  pages = {573--591},
  issn = {1552-3098, 1941-0468},
  doi = {10.1109/TRO.2023.3330609},
  copyright = {https://ieeexplore.ieee.org/Xplorehelp/downloads/license-information/IEEE.html},
  langid = {english}
}

@article{colakovic-benceric2025multiscale,
  title = {Multiscale and {{Uncertainty-Aware Targetless Hand-Eye Calibration}} via the {{Gauss}}--{{Helmert Model}}},
  author = {{\v C}olakovi{\'c}-Benceri{\'c}, Marta and Per{\v s}i{\'c}, Juraj and Markovi{\'c}, Ivan and Petrovi{\'c}, Ivan},
  year = {2025},
  journal = {IEEE Transactions on Robotics},
  volume = {41},
  pages = {2340--2357},
  issn = {1941-0468},
  doi = {10.1109/TRO.2025.3548538},
  langid = {english},
  keywords = {Accuracy,Autonomous robots,Calibration,Cameras,Estimation,extrinsic calibration,Gauss-Helmert (GH) model,Measurement uncertainty,multisensor systems,Odometry,Robot kinematics,Robot sensing systems,Robots,Uncertainty}
}

@inproceedings{heller2014handeye,
  title = {Hand-Eye and Robot-World Calibration by Global Polynomial Optimization},
  booktitle = {2014 {{IEEE International Conference}} on {{Robotics}} and {{Automation}} ({{ICRA}})},
  author = {Heller, Jan and Henrion, Didier and Pajdla, Tomas},
  year = {2014},
  month = may,
  pages = {3157--3164},
  issn = {1050-4729},
  doi = {10.1109/ICRA.2014.6907313},
  keywords = {Calibration,Optimization,Polynomials,Quaternions,Robot kinematics}
}

@article{heller2016globally,
  title = {Globally {{Optimal Hand-Eye Calibration Using Branch-and-Bound}}},
  author = {Heller, Jan and Havlena, Michal and Pajdla, Tomas},
  year = {2016},
  month = may,
  journal = {IEEE Transactions on Pattern Analysis and Machine Intelligence},
  volume = {38},
  number = {5},
  pages = {1027--1033},
  issn = {1939-3539},
  doi = {10.1109/TPAMI.2015.2469299},
  keywords = {Apertures,Bismuth,branch-and-bound algorithm,Calibration,Cameras,global optimization,Grippers,Hand-eye calibration,Linear programming,Robot vision systems}
}

@article{hartley2013rotation,
  title = {Rotation {{Averaging}}},
  author = {Hartley, Richard and Trumpf, Jochen and Dai, Yuchao and Li, Hongdong},
  year = {2013},
  month = jul,
  journal = {International Journal of Computer Vision},
  volume = {103},
  number = {3},
  pages = {267--305},
  issn = {0920-5691, 1573-1405},
  doi = {10.1007/s11263-012-0601-0},
  copyright = {http://www.springer.com/tdm},
  langid = {english}
}

@inproceedings{tron2015inclusion,
  title = {On the {{Inclusion}} of {{Determinant Constraints}} in {{Lagrangian Duality}} for {{3D SLAM}}},
  booktitle = {Robotics: {{Science}} and {{Systems}} ({{RSS}}), {{Workshop}} "{{The}} Problem of Mobile Sensors: {{Setting}} Future Goals and Indicators of Progress for {{SLAM}}"},
  author = {Tron, Roberto and Rosen, David M and Carlone, Luca},
  year = {2015},
  month = jul,
  volume = {4},
  publisher = {RSS Foundation},
  langid = {english}
}

@article{gallier2010schur,
  title = {The {{Schur Complement}} and {{Symmetric Positive Semidefinite}} (and {{Definite}}) {{Matrices}}},
  author = {Gallier, Jean},
  year = {2010},
  langid = {english}
}

@inproceedings{carlone2015lagrangian,
  title = {Lagrangian Duality in {{3D SLAM}}: {{Verification}} Techniques and Optimal Solutions},
  shorttitle = {Lagrangian Duality in {{3D SLAM}}},
  booktitle = {2015 {{IEEE}}/{{RSJ International Conference}} on {{Intelligent Robots}} and {{Systems}} ({{IROS}})},
  author = {Carlone, Luca and Rosen, David M. and Calafiore, Giuseppe and Leonard, John J. and Dellaert, Frank},
  pages={125--132},
  year = {2015},
  month = sep,
  publisher = {IEEE},
  address = {Hamburg, Germany},
  doi = {10.1109/iros.2015.7353364},
  langid = {english}
}

@book{axler2024linear,
  title = {Linear {{Algebra Done Right}}},
  author = {Axler, Sheldon},
  year = {2024},
  series = {Undergraduate {{Texts}} in {{Mathematics}}},
  publisher = {Springer International Publishing},
  address = {Cham},
  doi = {10.1007/978-3-031-41026-0},
  isbn = {978-3-031-41025-3 978-3-031-41026-0},
  langid = {english}
}

@book{boumal2023introduction,
  title = {An {{Introduction}} to {{Optimization}} on {{Smooth Manifolds}}},
  author = {Boumal, Nicolas},
  year = {2023},
  month = mar,
  edition = {1},
  publisher = {Cambridge University Press},
  doi = {10.1017/9781009166164},
  isbn = {978-1-00-916616-4 978-1-00-916617-1 978-1-00-916615-7},
  langid = {english}
}

@book{Lee2013Manifolds,
title = {Introduction to Smooth Manifolds},
author={Lee, J.M.},
edition = {2nd},
publisher=springerscibiz,
address = {New York},
year = {2013},
}

@book{Guillemin1974Differential,
title = {Differential Topology},
author = {Guillemin, V. and Pollack, A.},
publisher = {Prentice-Hall},
year = {1974},
}

@string{ai = {Artificial Intelligence}}

@string{cvpr = {IEEE Conf. Computer Vision and Pattern Recognition (CVPR)}}

@string{crv = {Conf. Computer and Robot Vision (CVR)}}

@string{fusion = {Intl. Conf. Information Fusion (FUSION)}}

@string{icra = {IEEE Intl. Conf. Robotics and Automation (ICRA)}}

@string{iros = {IEEE/RSJ Intl. Conf. Intelligent Robots and Systems (IROS)}}

@string{rss = {Robotics: Science and Systems (RSS)}}

@string{springer = {Springer Verlag}}

\end{document}